\documentclass{article}

\usepackage[preprint]{neurips_2026}
\makeatletter
\renewcommand{\@notice}{}
\makeatother

\usepackage[utf8]{inputenc}
\usepackage[T1]{fontenc}
\usepackage{amsmath, amssymb, amsthm}
\usepackage{graphicx}
\usepackage{xcolor}
\usepackage{hyperref}
\usepackage{url}
\usepackage{booktabs}
\usepackage{nicefrac}
\usepackage{microtype}
\usepackage{float}
\usepackage{bm}
\usepackage{algorithm}
\usepackage{algpseudocode}
\usepackage{aliascnt}
\usepackage[nameinlink,capitalize]{cleveref}
\crefname{assumption}{Assumption}{Assumptions}
\crefname{hypothesis}{Hypothesis}{Hypotheses}
\crefname{definition}{Definition}{Definitions}
\crefname{lemma}{Lemma}{Lemmas}
\crefname{proposition}{Proposition}{Propositions}
\crefname{corollary}{Corollary}{Corollaries}
\crefname{theorem}{Theorem}{Theorems}

\newcommand{\hiddenDim}{d}             
\newcommand{\inputDim}{m}              
\newcommand{\outputDim}{p}             
\newcommand{\ustb}{k} 
\newcommand{\taustb}{\tau_{\ustb}}     
\newcommand{\IDR}{\mathrm{IDR}}        

\newcommand{\pred}{\widehat{y}}        
\newcommand{\alg}{\mathcal{A}}         

\newcommand{\TOCITE}[1]{%
  \ifx\relax#1\relax%
    \textcolor{red}{[add citations]}%
  \else%
    \textcolor{red}{[add citations: #1]}%
  \fi%
}

\newtheorem{theorem}{Theorem}
\newaliascnt{lemma}{theorem}
\newtheorem{lemma}[lemma]{Lemma}
\aliascntresetthe{lemma}
\newaliascnt{proposition}{theorem}
\newtheorem{proposition}[proposition]{Proposition}
\aliascntresetthe{proposition}
\newaliascnt{corollary}{theorem}

\aliascntresetthe{corollary}
\theoremstyle{definition}
\newaliascnt{definition}{theorem}
\newtheorem{definition}[definition]{Definition}
\aliascntresetthe{definition}
\newaliascnt{assumption}{theorem}
\newtheorem{assumption}[assumption]{Assumption}
\aliascntresetthe{assumption}
\newaliascnt{hypothesis}{theorem}

\aliascntresetthe{hypothesis}
\theoremstyle{remark}
\newaliascnt{remark}{theorem}

\aliascntresetthe{remark}
\newtheorem{claim}{Claim}[theorem]
\newenvironment{claimproof}
  {\begin{proof}[Proof of claim]}
  {\end{proof}}

\newcommand{\GUFcl}{\mathrm{GUF}}

\title{A Memory Efficient Unified Algorithm for Online Learning of Linear Dynamical Systems}

\author{%
  Yuval Ran-Milo\thanks{Equal contribution. Correspondence to: \texttt{<yuvalmilo@mail.tau.ac.il>} and \texttt{<aa2279@princeton.edu>}.}\\
  Tel Aviv University\\
  \And
  Angelos Assos\footnotemark[1]\\
  Princeton University\\
  \And
  Elad Hazan\\
  Princeton University\\
}

\begin{document}

\maketitle

\begin{abstract}
  Motivated by the challenge of stabilizing a \emph{general} unknown linear dynamical system (LDS) from observations, we study the natural prerequisite of online prediction. 
  Our goal is to achieve sublinear regret with a memory footprint that adapts to the intrinsic complexity of the dynamics rather than the full hidden-state dimension. We focus on the practically central regime of systems with low \emph{instability complexity}---eigenvalues outside the real stable interval that do not decay rapidly, together with non-semisimple modes---potentially embedded in an otherwise stable real spectrum of much higher dimension; we write $\ustb$ for this count.
  This regime is the primary setting in which stabilization is plausible: we show that many systems with high instability complexity cannot be stabilized without exponentially large controls.
    Thus, prediction is meaningful for stabilization precisely when the instability complexity is small. Within this regime, we introduce a unified online algorithm that handles every LDS (including non-diagonalizable systems with complex or exploding modes) with a learnable parameter count of $\widetilde{O}(\ustb)$. 
    Finally, we prove a lower bound showing that $\ustb$ is a valid complexity measure: any filter-based predictor needs at least $\ustb$ filters.
    Experiments corroborate our theory: on a high-dimensional system, our predictor sharply outperforms prior methods at an equal parameter budget.
\end{abstract}

\section{Introduction}
\label{sec:intro}
  
A long-standing goal in control theory is to \emph{stabilize} an unknown
dynamical system from observations: design an input policy that keeps the
system in a bounded region, even when the open-loop dynamics are unstable
\citep{doyle1989state,tsiamis2022hard,foster2020logarithmic,agarwal2019online,abbasi2011regret,faradonbeh2018finite,simchowitz2018learning,sarkar2019near}.
Prediction is a natural prerequisite for stabilization: an algorithm that
cannot track the system's output trajectory has little basis for choosing
inputs that would bring it under control.
We study this prediction problem for linear dynamical systems (LDS) in
the online learning framework: an algorithm observes inputs and outputs
sequentially and at each step must predict the next output before seeing it,
achieving sublinear regret against the best LDS predictor in hindsight,
with no knowledge of the system parameters
\citep{hazan2017spectral,hardt2016gradient,kuznetsov2015learning,tsiamis2020online}. 

A central challenge—and the main contribution of this work—is achieving
sublinear regret with a memory footprint that adapts to the intrinsic complexity
of the dynamics rather than the full hidden-state dimension.
We focus on a regime that is both practically relevant and theoretically
interesting: systems with few modes that are unstable, non-semisimple, or not
rapidly decaying.
Such systems arise naturally---a physical plant with mild resonance, a single
unstable direction embedded in a large stable state space, or a controlled process
with a few runaway modes---yet prior work offers no prediction guarantees that
adapt to this structure.
Moreover, as we show in \cref{sec:lower}, a broad class of systems with many exploding modes is essentially impossible to keep
bounded in practice (\cref{prop:cond-number}).
This makes the small-$\ustb$ regime more than a natural special case: 
it delineates the setting in which one can practically hope to maintain the 
system under control.

Existing predictors fall into three useful families, each suited to a different
regime.
Spectral filtering \citep{hazan2017spectral} achieves the strongest
known guarantees for stable systems: a parameter footprint polylogarithmic in $T$
and completely independent of the state dimension $\hiddenDim$.
Spectral filtering, however, is limited to systems whose state matrix is
diagonalizable with all eigenvalues in $(-1,1)$, a provably significant restriction \citep{ranmilo2024provablebenefitscomplexparameterizations}.
Finite-memory input filters (FIR) are also independent of the state dimension,
but apply only to systems whose modes all decay rapidly.
Autoregressive (AR) predictors \citep{ho1966effective,kailath1980linear}, and system identification methods \citep{overschee1996subspace,ljung1999system} can represent any observable LDS
exactly, but at the cost of memory proportional to $\hiddenDim$---even if only one out
of $\hiddenDim$ modes is non-stable.
None of these tools alone handles the mixed case: a system with
$\hiddenDim - \ustb$ easily filtered modes and $\ustb \ll \hiddenDim$
hard ones.

We give a single, \emph{unified} algorithm (\cref{alg:unified}) that closes
this gap, simultaneously achieving what neither prior family could alone.
Our algorithm handles every LDS---including non-diagonalizable systems and
systems with complex or exploding modes---and achieves sublinear regret with a
learnable parameter count of $\widetilde{O}(\ustb)$, where $\ustb$ is the
instability complexity (\cref{def:ustb}).
It thus combines the two benefits that no prior method enjoyed simultaneously:
the \emph{generality} of autoregressive predictors and the \emph{dimension-free
memory footprint} of spectral and finite-memory filtering.

Throughout, we study system-input pairs that keep the dynamics under
control: although the open-loop system may be unstable, the chosen input
sequence keeps the resulting outputs bounded.
This is the regime in which prediction is useful for learning control; once the
realized trajectory has already exploded, the rollout has left the setting where
predictions can be used to keep the system stabilized. 

Finally, in \cref{sec:lower} we show that the \emph{instability complexity} $\ustb$ is a natural and well-motivated parameter from both a theoretical and practical perspective. On the theoretical side, we prove a lower bound showing that the dependence on $\ustb$ is unavoidable: any filter-based predictor with bounded coefficient mass needs at least $\ustb$ filter directions to achieve sublinear regret (\cref{prop:filter-lb}).
On the practical side, we show that a broad scalar-input diagonalizable class with many exploding modes is essentially impossible to stabilize: any stabilizing input must span exponentially many scales of magnitude---its input dynamic range (\cref{def:idr}) grows exponentially in $\ustb$ (\cref{prop:cond-number})---making such controllers acutely sensitive to noise, actuator limits, and finite precision.
Thus, the setting motivating our work is precisely the one where $\ustb$ is small compared with $\hiddenDim$, making our algorithm highly efficient.

We complement these results with experiments (\cref{sec:experiments}) on a high-dimensional system with small instability complexity. Given the same number of learned parameters, the unified predictor attains orders-of-magnitude lower error than spectral filtering, finite-memory, and autoregressive predictors, confirming that it captures the few hard modes without paying for the full state dimension, unlike existing methods.

\section{Related Work}
\label{sec:related}

\textbf{Learning stable dynamical systems.}
The cleanest online learning guarantees for dynamical systems are known in stable or non-explosive regimes. General sequential 
forecasting work studies prediction under temporal dependence and non-stationarity without relying on i.i.d. assumptions \citep{kuznetsov2015learning}.
 For linear dynamical systems, spectral filtering \citep{hazan2017spectral} gives a particularly relevant result: stable LDS outputs can be predicted
  online using a small universal family of filters, with parameter count independent of the hidden-state dimension. \citet{hazan2026spectralfilteringlearningquantum} extend spectral filtering to systems with complex-valued eigenvalues, although their regret scales linearly with the sequence length. \citet{ghai2020no} extend no-regret prediction guarantees to marginally stable systems.
  Related work on learning Kalman filters gives
   regret and sample-complexity guarantees for partially observed linear systems under stochastic and non-explosive assumptions \citep{tsiamis2020online,zhang2023kalman}. 
   More recently, \citet{dogariu2025universal} extended spectral-filtering ideas toward marginally stable nonlinear dynamics. These works show that stability or marginal stability enables 
   compact prediction without full system identification, but they do not address systems with genuinely exploding, complex, or non-semisimple modes.

\textbf{Unstable dynamical systems.}
Several works study learning problems in which the underlying dynamics may be
unstable, but mostly in settings different from the partially observed prediction
problem considered here. Classical work of \citet{white1958limiting} studies the
one-dimensional explosive autoregressive process and shows that instability
changes the asymptotic behavior of standard estimators; in this fully observed
scalar setting, the exploding signal can make the parameter easier to estimate.
\citet{faradonbeh2018finite} study finite-time system identification for unstable
linear systems under full-state observation. More broadly,
\citet{simchowitz2018learning,sarkar2019near,jedra2023finite} give finite-sample
identification guarantees for arbitrary or poorly mixing LDSs, targeting recovery
of the system parameters rather than compact prediction from outputs.
\citet{wagenmaker2020active} study active input design for identifying LDSs
efficiently. Thus, while unstable dynamics have been studied in identification
and input-design settings, these works do not provide the partially observed,
parameter-adaptive online prediction guarantees considered here.

\textbf{Controlling linear dynamical systems from data.}
\citet{hazan2022introduction} surveys regret-based formulations connecting online convex optimization to control.
The nonstochastic/adversarial line \citep{agarwal2019online,chen2021blackbox,gradu2023adaptive} studies sublinear regret under adversarial disturbances and time variation; \citet{chen2021blackbox} in particular shows that black-box online control can incur unavoidable exponential-in-dimension overhead, complementing minimax hardness for underactuated plants in \citet{tsiamis2022hard}.
For quadratic costs, \citet{foster2020logarithmic} achieves logarithmic regret when the dynamics are known but disturbances are adversarial, whereas \citet{simchowitz2020naive} establishes that adaptive LQR with unknown models has $\tilde\Theta(\sqrt{T})$ minimax regret---contrasting with the polylogarithmic prediction rates available in some filtering settings.
On the synthesis side, \citet{dean2020sample} couples identification with robust control, and \citet{mania2019certainty} shows certainty-equivalent LQ/LQG controllers inherit particularly favorable (quadratic-in-error) suboptimality scalings when estimates are accurate.


\section{Problem Setup}
\label{sec:setup}

\paragraph{Linear dynamical system.}

We consider a bounded input sequence $u_1,u_2,\ldots \in \mathbb{R}^{\inputDim}$ with $\|u_t\|_2\le K$ for all $t$, and
the corresponding output sequence
$y_1,y_2,\ldots \in \mathbb{R}^{\outputDim}$ generated by an unknown linear
dynamical system described bellow. The hidden state
$x_t \in \mathbb{R}^{\hiddenDim}$ summarizes the system's memory of past
inputs, with dynamics
\begin{align}
\label{eq:lds}
  x_t \;=\; A\,x_{t-1} + B\,u_t, \qquad y_t \;=\; C\,x_t, \qquad x_0 \;=\; 0,
\end{align}
where $A\in\mathbb{R}^{\hiddenDim\times\hiddenDim}$,
$B\in\mathbb{R}^{\hiddenDim\times\inputDim}$,
$C\in\mathbb{R}^{\outputDim\times\hiddenDim}$.
We assume the pair $(A,C)$ is observable.
We use the shorthand $y_{1:t}:=(y_1,\ldots,y_t)$ and likewise for other
sequences.

\paragraph{Stable modes and instability complexity.}

The eigenvalues $\lambda_1,\dots,\lambda_{\hiddenDim}$ of $A$ in its Jordan
decomposition over $\mathbb{C}$ are counted with algebraic multiplicity. We use
the following complexity measure to isolate the modes which are hardest to learn.

\begin{definition}[Instability complexity]\label{def:ustb}
The \emph{instability complexity} of an LDS, denoted $\ustb$, is the smallest
integer $k\ge 0$ such that at most $k$ eigenvalues of $A$ (counted with
algebraic multiplicity) either lie outside $(-1,1)$ with
$|\lambda_i|>1-1/\log{(k+2)}$, or are non-semisimple. We set $\ustb=1$ when no
eigenvalue satisfies these conditions.
\end{definition}

We also write
\[
  \rho_{\mathrm{st}}:=\max\{|\lambda_i|:\lambda_i\in(-1,1)\text{ and semisimple}\},
\]
with the convention $\rho_{\mathrm{st}}=0$ if there are no such eigenvalues.

\paragraph{Prediction goal.}

Our goal is to design an online algorithm that, given
$(u_t,\,y_{1:t-1})$ at each step, produces a prediction $\pred_t$ whose
average squared loss
\( \frac{1}{T} \sum_{t=1}^T \ell_2^2(\pred_t,y_t) \)
vanishes as $T\to\infty$, where $\ell_2(\pred_t,y_t) = \|\pred_t -y_t \|_2$ denotes 
the standard $L_2$ loss.

\paragraph{Controlled inputs.}

Our motivating goal is to learn to stabilize unknown systems. Prediction is
useful for this purpose only when the trajectory remains bounded:
once the system has already exploded, the rollout has left the setting where
predictions can be used to keep it stabilized.
We therefore focus on system-input pairs that keep the dynamics \emph{under
control}: although the open-loop dynamics may be unstable, the chosen inputs
keep the resulting outputs within a fixed bounded region. Moreover, systems
whose open-loop dynamics are already stable (including those with complex
eigenvalues) are inherently under control for bounded inputs; our algorithm
naturally handles this setting as well\footnote{Conversely, systems that are not controlled rapidly explode. Such trajectories are statistically benign: the output sequence becomes dominated by the system's largest mode, typically making prediction easy.}. This is formally defined as follows:

\begin{assumption}[Controlled outputs]\label{ass:controlled-outputs}
There exists a constant $B_y<\infty$, independent of the horizon $T$, such that
for the LDS $(A,B,C)$ and the input sequence $(u_t)_{t\ge 1}$ under
consideration, the resulting outputs satisfy
\[
  \max_{1\le t\le T}\|y_t\|_2 \le B_y
  \qquad\text{for every }T\ge 1.
\]
We call such an input sequence \emph{$B_y$-controlling} for the system.
\end{assumption}

\section{Algorithm and Main Result}

\label{sec:main}
\subsection{Predictor Classes}
\label{sec:classes}
We begin by describing the predictor class from which our unified algorithm
(\cref{alg:unified}) will learn online. The central goal of this section is
to construct, in \cref{sec:unified-class}, a predictor class that can
approximate \emph{any} LDS to arbitrary accuracy using only
$O(\ustb\cdot\mathrm{polylog}(T, \ustb))$ parameters, a quantity that depends linearly in $\ustb$ 
and only logarithmically in $\ustb$ and $T$.

Our construction draws on three well-known predictor classes. The first,
\emph{spectral filtering} (\cref{sec:sf-approx}), approximates stable
real-diagonalizable LDS outputs using a number of parameters independent of the
state dimension, but fails on systems with non-stable or complex modes.
The second, \emph{finite-memory input filters} (FIR) (\cref{sec:fir-approx}), handle
modes whose impulse responses decay quickly. The third,
\emph{autoregressive} (AR) \emph{predictors} (\cref{sec:ar-approx}), represent
any observable LDS exactly, but at the cost of a parameter count proportional to
the full state dimension. None of these components alone suffices for our
setting: we want a predictor that captures every LDS, including non-diagonalizable systems and systems with
exploding or complex eigenvalues, while paying a parameter cost only in the
instability complexity. Combining the three families gives precisely such a class.

\subsubsection{Spectral filters for stable real-diagonalizable LDS}
\label{sec:sf-approx}

For stable real-diagonalizable LDS (equivalently, all modes are real,
semisimple, and lie in $(-1,1)$), spectral filtering
\citep{hazan2017spectral} approximates the output arbitrarily well
using a small, fixed set of universal basis filters---independent of the
system parameters---applied to the input history. Write
$\tilde u_t:=(u_t,u_{t-1},\dots,u_1,0,\dots,0)\in\mathbb{R}^{\inputDim\times T}$
for the zero-padded input history, and let $\phi_1,\dots,\phi_T\in\mathbb{R}^T$
denote the universal spectral filters as defined in \cref{def:spectral-filters}.
The following proposition follows from the spectral filtering framework
\citep{hazan2017spectral,hazan2022introduction}.

\begin{proposition}[Informal; see \cref{lem:sf-diagonal-real} for the formal version]%
\label{thm:sf-approx}
Consider a stable real-diagonalizable LDS as in \cref{eq:lds}. For every
$\varepsilon>0$, there exist $h = O\!\left(\log T\cdot\log\frac{T}{\varepsilon}\right)$
and matrices $M_1,\dots,M_h\in\mathbb{R}^{\outputDim\times\inputDim}$
depending only on $(A,B,C)$, such that
\[
  \sum_{t=1}^{T}\Bigl\|y_t - \sum_{s=1}^{h} M_s(\tilde u_t\phi_s)\Bigr\|^2
  \;\le\; \varepsilon.
\]
In particular, $h$ has no polynomial dependence on the state dimension $\hiddenDim$.
\end{proposition}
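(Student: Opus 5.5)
The plan is to reduce to scalar geometric impulse responses and then use the spectral decay of the Hankel matrix that defines the universal filters. Since $A$ is real-diagonalizable with eigenvalues in $(-1,1)$, write $A=P\Lambda P^{-1}$ with $\Lambda=\mathrm{diag}(\alpha_1,\dots,\alpha_\hiddenDim)$. With $x_0=0$,
\[
  y_t=\sum_{i=0}^{t-1} CA^iBu_{t-i}=\sum_{j=1}^{\hiddenDim} c_jb_j^\top\sum_{i=0}^{t-1}\alpha_j^i u_{t-i}
  =\sum_{j=1}^{\hiddenDim} c_j b_j^\top\,\tilde u_t\,\mu_{\alpha_j},
\]
where $c_j$ is the $j$-th column of $CP$, $b_j^\top$ is the $j$-th row of $P^{-1}B$, and $\mu_\alpha:=(1,\alpha,\dots,\alpha^{T-1})^\top\in\mathbb{R}^T$. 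The dependence on $\hiddenDim$ is thus isolated into a linear combination of vectors $\mu_\alpha$. It therefore suffices to show that every $\mu_\alpha$ with $\alpha\in[0,1)$ is well approximated in the span of the top $h$ filters $\phi_1,\dots,\phi_h$. Negative $\alpha$ is handled either by the symmetric version of the filters from \cref{def:spectral-filters} or by splitting $\mu_\alpha$ into even and odd index parts.

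The key step uses the Hankel matrix $Z=\int_0^1\mu_\alpha\mu_\alpha^\top\,d\alpha$ (or its weighted variant used in \cref{def:spectral-filters}), whose eigenvectors are the $\phi_s$ with eigenvalues $\sigma_s$. I will invoke the known fact from \citep{hazan2017spectral} that $\sigma_s\le C\exp(-cs/\log T)$. The hard part is converting this average-case decay into a bound for each individual $\alpha$. To do so, I plan to use that $\mu_\alpha\mu_\alpha^\top$ is dominated by $Z$ after suitable weighting: the map $\alpha\mapsto\mu_\alpha$ is Lipschitz on short intervals near $1$, so $\mu_\alpha\mu_\alpha^\top\preceq \mathrm{poly}(T)\,Z$ up to a small additive error. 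Then
\[
  \Bigl\|\mu_\alpha-\sum_{s\le h}\langle\phi_s,\mu_\alpha\rangle\phi_s\Bigr\|^2\le \mathrm{poly}(T)\sum_{s>h}\sigma_s\le \mathrm{poly}(T)\,T\,e^{-ch/\log T}.
\]
Alternatively, I would cite this bound directly, since it is what the formal \cref{lem:sf-diagonal-real} packages.

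Next, set $M_s:=\sum_j c_jb_j^\top\langle\phi_s,\mu_{\alpha_j}\rangle$, which depends only on $(A,B,C)$. The residual at time $t$ is $\sum_j c_jb_j^\top\tilde u_t r_j$, where $r_j$ is the projection residual of $\mu_{\alpha_j}$. Using $\|u_t\|\le K$, each residual has norm at most $\sum_j\|c_j\|\|b_j\|\,K\sqrt T\,\|r_j\|$. Summing the squares over $t\le T$ gives a total error of $\mathrm{poly}(T)\cdot(\sum_j\|c_j\|\|b_j\|)^2K^2e^{-ch/\log T}$.

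The system-dependent constant $(\sum_j\|c_j\|\|b_j\|)^2$ enters only inside a logarithm. Choosing $h=O(\log T\cdot\log(T/\varepsilon))$, with the constant absorbing $\log$ of the system norms and $K$, makes the total error at most $\varepsilon$. This $h$ has no polynomial dependence on $\hiddenDim$. The main obstacle is the uniform-in-$\alpha$ approximation bound near $\alpha\to1$, which I expect to take directly from the spectral filtering literature.
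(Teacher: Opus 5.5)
Your proposal is correct and follows the paper's skeleton: diagonalize, write $y_t=\sum_j c_jb_j^\top\tilde u_t\mu_{\alpha_j}$, set $M_s=\sum_j c_jb_j^\top\langle\phi_s,\mu_{\alpha_j}\rangle$, and truncate at $h$. The difference is in how the tail is controlled.

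The paper (\cref{thm:approx-spectral-filtering}) works filter by filter. It cites the uniform per-coefficient bound $|\phi_s^\top\mu_\alpha|\le(4T^2\sigma_s)^{1/4}\le 2\sqrt{\pi}\,T^{1/2}e^{-s/(4\log T)}$ from \cref{thm:decay-eigenvectors}, turns it into a decaying bound on $\|M_s\|$, and sums $\sum_{s>h}\|M_s\|\,\|\tilde u_t\phi_s\|$. The same per-filter bounds give the coefficient norms in \cref{lem:sf-diagonal-real}, which the regret analysis needs later.

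You instead bound the full projection residual of each $\mu_\alpha$ at once, via a PSD domination and the tail trace $\sum_{s>h}\sigma_s$. As you justify it, that domination is the weak step. A Lipschitz argument only gives a cubic relation, $|v^\top\mu_\alpha|^3\lesssim T^{3/2}\,v^\top Zv$ for unit $v$. Hence $\mu_\alpha\mu_\alpha^\top\preceq\delta I+O(T^{3/2}\delta^{-1/2})Z$, and after optimizing $\delta$ the residual is of order $\mathrm{poly}(T)\bigl(\sum_{s>h}\sigma_s\bigr)^{2/3}$. That still suffices for $h=O(\log T\log(T/\varepsilon))$, but it is not the inequality you display.

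The clean fix is to note that $\alpha\mapsto v^\top\mu_\alpha$ is a polynomial of degree $T-1$. The Nikolskii inequality on $[-1,1]$ (expand in orthonormal Legendre polynomials) then gives $(v^\top\mu_\alpha)^2\le\frac{T^2}{2}\int_{-1}^{1}(v^\top\mu_\beta)^2\,d\beta$. That is, $\mu_\alpha\mu_\alpha^\top\preceq\frac{T^2}{2}Z$ holds exactly for all $\alpha\in[-1,1]$. This yields your displayed residual bound with no additive error. It also gives the per-filter bound $|\phi_s^\top\mu_\alpha|\le T\sqrt{\sigma_s/2}$, which is sharper than the paper's fourth-root estimate for the small $\sigma_s$ in the tail and supplies the coefficient bounds as well.

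Finally, \cref{def:spectral-filters} already integrates over $[-1,1]$, so this argument covers negative eigenvalues directly. Neither a symmetric variant of the filters nor an even/odd split is needed.
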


\subsubsection{Finite-memory filters for rapidly decaying modes}
\label{sec:fir-approx}

Finite-memory input filters approximate the output using only a short window of
past inputs. They apply whenever the relevant modes are semisimple and strictly
stable, as formalized in the following theorem.

\begin{proposition}[Informal; see \cref{lem:fast-fir} for the formal version]%
\label{thm:fir-approx}
Consider an observable LDS as in \cref{eq:lds} whose state matrix $A$ is
diagonalizable and satisfies $|\lambda_i|\le 1-\tfrac{1}{L}$ for every eigenvalue
$\lambda_i$. For every $\varepsilon>0$, there exist
$L_\varepsilon=O\!\left(L\log\tfrac{T}{\varepsilon}\right)$ and matrices
$Q_0,\dots,Q_{L_\varepsilon-1}\in\mathbb{R}^{\outputDim\times\inputDim}$,
depending only on $(A,B,C)$, such that
\[
  \sum_{t=1}^{T}\Bigl\|y_t-\sum_{\ell=0}^{L_\varepsilon-1}Q_\ell\,u_{t-\ell}\Bigr\|^2
  \;\le\; \varepsilon.
\]
In particular, $L_\varepsilon$ is independent of the state dimension
$\hiddenDim$.
\end{proposition}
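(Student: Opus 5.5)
The approach is to unroll the recursion, read off the impulse response, and truncate it at a window where the tail is negligible. With $x_0=0$, induction on \cref{eq:lds} gives
\[
  y_t=\sum_{\ell=0}^{t-1}CA^{\ell}B\,u_{t-\ell}.
\]
So the natural choice is $Q_\ell:=CA^{\ell}B$ for $0\le\ell<L_\varepsilon$, with the convention $u_s=0$ for $s\le 0$. These matrices depend only on $(A,B,C)$. The residual at time $t$ is then the tail
\[
  r_t=\sum_{\ell=L_\varepsilon}^{t-1}CA^{\ell}B\,u_{t-\ell},
\]
which vanishes when $t\le L_\varepsilon$. What remains is to choose $L_\varepsilon$ so that $\sum_{t\le T}\|r_t\|^2\le\varepsilon$.

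To bound the tail, diagonalize $A=V\Lambda V^{-1}$ over $\mathbb{C}$. Then $\|A^\ell\|\le\kappa(V)\,(1-1/L)^\ell$, where $\kappa(V)=\|V\|\|V^{-1}\|$. Using $\|u_s\|\le K$ and writing $c:=\|C\|\|B\|K\kappa(V)$, we get
\[
  \|r_t\|\le c\sum_{\ell\ge L_\varepsilon}(1-1/L)^{\ell}\le cL\,(1-1/L)^{L_\varepsilon}\le cL\,e^{-L_\varepsilon/L}.
\]
Summing over $t$ gives $\sum_t\|r_t\|^2\le T c^2L^2e^{-2L_\varepsilon/L}$. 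This is at most $\varepsilon$ once
\[
  L_\varepsilon\ge\tfrac{L}{2}\log\frac{Tc^2L^2}{\varepsilon},
\]
which is $O\bigl(L\log\tfrac{T}{\varepsilon}\bigr)$ once the system constants $c$ and $L$ are absorbed into the $O(\cdot)$. The dimension $\hiddenDim$ never enters the window length except through the constant $c$, which justifies the ``in particular'' clause.

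The main obstacle is the constant $\kappa(V)$. For a diagonalizable but ill-conditioned $A$ it can be huge, and a priori it could hide $\hiddenDim$-dependence. I expect the formal \cref{lem:fast-fir} to state the bound with explicit dependence on $\log(\|B\|\|C\|K\kappa(V))$, which enters only additively inside the logarithm. As a first attempt I would keep this term explicit in the bound rather than absorb it.

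If the paper wants a bound free of $\kappa(V)$, I would try a different route. Decompose $CA^\ell B=\sum_i \lambda_i^\ell\,(Cv_i)(w_i^\top B)$ and bound each mode's tail separately. One could also use the controlled-output \cref{ass:controlled-outputs} to bound the aggregate coefficients, but in general some system-dependent constant inside the logarithm seems unavoidable. Observability is not needed for this direction, so I would not use it.
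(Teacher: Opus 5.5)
Your proposal is correct and follows essentially the same argument as the paper. Both set $Q_\ell=CA^{\ell}B$, bound the per-step tail by the geometric series $cL(1-1/L)^{L_\varepsilon}\le cLe^{-L_\varepsilon/L}$, and choose $L_\varepsilon=\lceil L\log(cL\sqrt{T}/\sqrt{\varepsilon})\rceil$, which is exactly your $\tfrac{L}{2}\log(Tc^2L^2/\varepsilon)$. The differences are cosmetic: the paper uses $K\,C_{\mathrm{fir},0}=K\|CV\|\,\|V^{-1}B\|$ in place of your $c=K\|C\|\|B\|\kappa(V)$, a slightly sharper constant that likewise sits inside the logarithm as you anticipated. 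The paper also records $\sum_\ell\|Q_\ell\|\le C_{\mathrm{fir},0}L$, which the informal statement does not require.
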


\subsubsection{Autoregressive predictors for general LDS}
\label{sec:ar-approx}

For any observable LDS with state dimension $\hiddenDim$, the output satisfies an
exact autoregressive relation: the last $\hiddenDim$ past outputs and the $\hiddenDim$ most recent inputs
determine $y_t$ exactly. Unlike spectral filtering and FIR, this representation
requires no approximation, but its parameter count is proportional to $\hiddenDim$.
This exact representation is a standard consequence of realization theory and the Cayley--Hamilton theorem
\citep{hazan2026introductiononlinecontrol,kailath1980linear,cayley1858memoir}

\begin{proposition}[Informal; see \cref{lem:ar-only} for the formal version]%
\label{thm:ar-approx}
Consider any observable LDS as in \cref{eq:lds} with state dimension $\hiddenDim$.
Then there exist coefficients $\alpha_1,\dots,\alpha_{\hiddenDim}\in\mathbb{R}$ and
matrices $N_0,\dots,N_{\hiddenDim-1}\in\mathbb{R}^{\outputDim\times\inputDim}$
depending only on $(A,B,C)$, such that for all $t\ge \hiddenDim+1$,
\[
  y_t
  \;=\;
  \sum_{i=1}^{\hiddenDim}\alpha_i\,y_{t-i}
  \;+\;
  \sum_{i=0}^{\hiddenDim-1}N_i\,u_{t-i}.
\]
\end{proposition}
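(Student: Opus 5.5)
The plan is to use Cayley--Hamilton together with the unrolled state recursion. Let $p(z)=z^{\hiddenDim}-\alpha_1 z^{\hiddenDim-1}-\dots-\alpha_{\hiddenDim}$ be the characteristic polynomial of $A$, so that $A^{\hiddenDim}=\sum_{i=1}^{\hiddenDim}\alpha_i A^{\hiddenDim-i}$. The $\alpha_i$ are real because $A$ is real. Unrolling \cref{eq:lds} for $t\ge\hiddenDim+1$ gives, for each $j=0,\dots,\hiddenDim$,
\[
  x_t \;=\; A^{j}x_{t-j}+\sum_{\ell=0}^{j-1}A^{\ell}Bu_{t-\ell}.
\]
Here $x_{t-j}$ is well defined because $t-j\ge 1$. (Alternatively one could use $x_0=0$ and extend $u$ by zeros to nonpositive indices.)

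The key step is to form the combination $y_t-\sum_{i=1}^{\hiddenDim}\alpha_i y_{t-i}$. For each $i$, write $y_{t-i}=Cx_{t-i}$ and express both $x_t$ and the $x_{t-i}$ in terms of the common earlier state $x_{t-\hiddenDim}$:
\[
  x_{t-i} \;=\; A^{\hiddenDim-i}x_{t-\hiddenDim}+\sum_{\ell=0}^{\hiddenDim-i-1}A^{\ell}Bu_{t-i-\ell}.
\]
The coefficient of $x_{t-\hiddenDim}$ in the combination is then $C\bigl(A^{\hiddenDim}-\sum_i\alpha_iA^{\hiddenDim-i}\bigr)=C\,p(A)=0$. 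What remains is a linear function of $u_t,\dots,u_{t-\hiddenDim+1}$. Collecting the coefficient of $u_{t-r}$ for $0\le r\le\hiddenDim-1$ gives
\[
  N_r \;=\; CA^{r}B-\sum_{i=1}^{r}\alpha_i\,CA^{r-i}B,
\]
which depends only on $(A,B,C)$. The second term comes from the contribution of $x_{t-i}$ with $\ell=r-i$, which requires $i\le r$.

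The only step that needs care is the index bookkeeping: verifying that every input index lies in the window $\{t-\hiddenDim+1,\dots,t\}$, and that the range $i\le r\le \hiddenDim-1$ matches the sum bounds. Observability is not needed for the identity. It only matters if one wants the minimal annihilating polynomial, which would shorten the recursion. If needed, the condition $t\ge\hiddenDim+1$ could be relaxed using $x_0=0$, but the statement does not require this.
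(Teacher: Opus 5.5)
Your proposal is correct and follows the paper's own argument: apply Cayley--Hamilton after expressing $x_t$ and each $x_{t-i}$ through the common state $x_{t-\hiddenDim}$. Your $N_r$ coincides with the paper's $M_r=C\bigl(\sum_{\ell=0}^{r}c_\ell A^{r-\ell}\bigr)B$ under the identification $\alpha_i=-c_i$ (with $c_0=1$). The formal version additionally bounds $|\alpha_i|$ and $\|N_i\|_\infty$ via elementary symmetric polynomials, but the informal statement does not require those bounds.
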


\subsubsection{The unified predictor class}
\label{sec:unified-class}
In this section we introduce a hybrid predictor class that combines the
strengths of the spectral filtering, finite-memory filtering, and
autoregressive predictor classes introduced in the previous sections. Below we
prove that any dynamical system
with instability complexity $\ustb$ can be approximated using only
$O\!\left(\ustb\cdot\mathrm{polylog}(T, \ustb, 1/\varepsilon)\right)$ parameters,
depending only on $\ustb$ and not on the dimension of the state space.

\begin{theorem}[Informal; see \cref{lem:ar-plus-sf} for the formal version]%
\label{thm:uf-approx-informal}
Consider any LDS as in \cref{eq:lds} with instability complexity $\ustb$. For
every $\varepsilon>0$, there exist
$h = O\!\left(\ustb\cdot\log T\cdot\log\frac{T \cdot \ustb}{\varepsilon}\right)$ and
$L = O\!\left(\ustb\cdot\mathrm{polylog}(T,\ustb,1/\varepsilon)\right)$,
coefficients $\alpha_1,\dots,\alpha_{\ustb}\in\mathbb{R}$, and matrices
$W_s,Q_\ell\in\mathbb{R}^{\outputDim\times\inputDim}$, such that
\[
  \pred_t
  \;=\;
  \sum_{i=1}^{\ustb}\alpha_i\,y_{t-i}
  \;+\;
  \sum_{s=1}^{h}W_{s}(\tilde u_{t}\phi_s)
  \;+\;
  \sum_{\ell=0}^{L+\ustb}Q_\ell\,u_{t-\ell}
\]
satisfies $\sum_{t=1}^{T}\|y_t-\pred_t\|^2 \le \varepsilon$, using
$O\!\left(\ustb\cdot\mathrm{polylog}(T, \ustb, 1/\varepsilon)\right)$ parameters.
\end{theorem}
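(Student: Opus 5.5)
The plan is to split the spectrum of $A$ into a ``hard'' part $\mathcal{H}$ of at most $\ustb$ eigenvalues and an ``easy'' part. The hard part consists of the non-semisimple eigenvalues (with their full Jordan blocks) and the eigenvalues outside $(-1,1)$ with $|\lambda|>1-1/\log(\ustb+2)$. The easy part is a set of semisimple modes, each of which is either real in $(-1,1)$ or satisfies $|\lambda|\le 1-1/\log(\ustb+2)$.

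We then annihilate the hard modes with a short autoregressive filter. Let $p(z)=z^{\ustb}-\sum_{i=1}^{\ustb}\alpha_i z^{\ustb-i}$ be the monic polynomial whose roots are the hard eigenvalues with multiplicity; pad with roots at $0$ if $|\mathcal{H}|<\ustb$. Because complex eigenvalues of a real matrix come in conjugate pairs, the $\alpha_i$ are real. Writing $y_t=\sum_{j=1}^{t}CA^{t-j}Bu_j$ and applying the operator $p$ (shift-based), we get
\[
y_t-\sum_{i=1}^{\ustb}\alpha_i y_{t-i}=\sum_{j}C\,p(A)A^{t-j-\ustb}B\,u_j+\sum_{\ell=0}^{\ustb-1}(\text{boundary terms})\,u_{t-\ell}.
\]
The boundary terms are finitely many FIR terms coming from the lags $0,\dots,\ustb-1$ and are absorbed into $Q_0,\dots,Q_{\ustb}$; near $t\le\ustb$ we use zero padding. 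By the Jordan decomposition, $p(A)$ kills the generalized eigenspaces of the hard modes. The remainder is therefore the impulse response of an LDS $(A_e, B_e, C_e)$ restricted to the easy invariant subspace. The matrices $B_e, C_e$ are $p(A)$-weighted, but they are fixed and independent of $T$.

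Next we decompose the easy subsystem further into two pieces.

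(i) The real semisimple modes in $(-1,1)$ whose magnitude exceeds $1-1/\log(\ustb+2)$ form a stable real-diagonalizable LDS. Applying \cref{thm:sf-approx} to it with accuracy $\varepsilon/4$ yields $h=O(\log T\log(T/\varepsilon))$ filters.

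(ii) All remaining easy modes satisfy $|\lambda|\le 1-1/L_0$ with $L_0=\log(\ustb+2)$ and are semisimple, possibly complex. Applying \cref{thm:fir-approx} with $L=L_0$ gives a window of $O(\log(\ustb+2)\log(T/\varepsilon))$. Complex modes must be handled over $\mathbb{C}$ and then recombined into real conjugate pairs.

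The sum of the two approximations gives the easy part, and $(a+b)^2\le 2a^2+2b^2$ controls the total squared error. The stated $h$ carries an extra factor of $\ustb$ and a $\log(T\ustb/\varepsilon)$, which leaves room either to split the target accuracy across pieces or to reuse the same filters with different $W_s$. The total parameter count is $\ustb+(h+L+\ustb+1)\outputDim\inputDim=O(\ustb\,\mathrm{polylog})$.

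The main obstacle is constant control. The coefficients of $p$ and the weighted matrices $C\,p(A)$ restricted to the easy subspace can be large, for example up to $2^{\ustb}$ times $\|C\|\|B\|$ times the Jordan condition number. Since \cref{thm:sf-approx,thm:fir-approx} yield error $\varepsilon$ with only logarithmic dependence on system norms (through $\log(T/\varepsilon)$), I would first check that their formal versions scale with $\|C_e\|\|B_e\|$ only inside logarithms. If so, these factors enter as additive $\log$ terms. Since the statement is existential and does not bound the coefficients, the $h,L$ bounds should absorb these system constants, possibly via $\log(\ustb)$ terms, by rescaling $\varepsilon$ by the fixed constant. A secondary subtlety is that the easy-part input enters through $p(A)$-shifted indices; this is a fixed time shift, and I would handle it by redefining the input sequence with zero padding.
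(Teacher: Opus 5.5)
Your architecture matches the paper's. You use the same three-way split of the spectrum, an order-$\ustb$ AR polynomial whose roots are the counted eigenvalues, spectral filters for the real semisimple modes with $1-1/\log(\ustb+2)<|\lambda|<1$, and a finite window for the fast modes. Applying $p$ as a shift operator to the whole output, with $p(A)$ annihilating the hard generalized eigenspaces, is equivalent to the paper's Cayley--Hamilton-then-substitute step. For the fast block it works as you say: a delayed FIR is again an FIR of length $L+\ustb$, and $\|p(A_{\mathrm{fast}})\|$ only enters inside the logarithm.

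The gap is the step you dismiss as a secondary subtlety. After annihilation, the spectral-group part of $y_t-\sum_i\alpha_i y_{t-i}$ is
\[
  \sum_{\ell\ge \ustb} C_{\mathrm{st}}\,p(\Lambda)\,\Lambda^{\ell-\ustb}B_{\mathrm{st}}\,u_{t-\ell},
\]
which is the output of a stable diagonal LDS at time $t-\ustb$, not at time $t$. Spectral filtering on the delayed input yields the features $\tilde u_{t-\ustb}\phi_s$. But the predictor in the statement, and the feature vector of \cref{alg:unified}, contains only $\tilde u_t\phi_s$. Now $\tilde u_{t-\ustb}\phi_s$ is $\tilde u_t$ applied to a shifted filter that is not among $\phi_1,\dots,\phi_h$. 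So ``redefining the input sequence'' does not produce a predictor of the required form.

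The missing idea is to realign the delayed sum to time $t$:
\[
  \sum_{\ell\ge \ustb}\Lambda^{\ell-\ustb}B_{\mathrm{st}}u_{t-\ell}
  =\Lambda^{-\ustb}\Bigl(\sum_{\ell\ge 0}\Lambda^{\ell}B_{\mathrm{st}}u_{t-\ell}-\sum_{\ell=0}^{\ustb-1}\Lambda^{\ell}B_{\mathrm{st}}u_{t-\ell}\Bigr).
\]
The residual then becomes the time-$t$ output of the spectral LDS with readout $C_{\mathrm{st}}\,p(\Lambda)\Lambda^{-\ustb}$, which is approximable by $\sum_s W_s(\tilde u_t\phi_s)$, plus a length-$\ustb$ FIR absorbed into $Q_0,\dots,Q_{\ustb-1}$. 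This is the paper's shift identity (the claim inside the proof of \cref{lem:ar-plus-sf}).

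This is also where the lower threshold on the spectral group is actually needed. It gives $|\lambda|^{-\ustb}\le(1-1/\log(\ustb+2))^{-\ustb}=e^{O(\ustb/\log(\ustb+2))}$, which enters $h$ only through a logarithm. Without it, the factor $\lambda^{-\ustb}$ blows up for modes near $0$. In your proposal that threshold only serves to bound the FIR length. The obstacle you flag, the size of $C\,p(A)$, is harmless by comparison; time-aligning the spectral features is the step that needs an argument.
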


\begin{proof}[Proof sketch (full proof in Appendix~\ref{app:ar-plus-sf})]
Write $y_t = y_t^{(\mathrm{st})} + y_t^{(\mathrm{fast})} + y_t^{(\mathrm{ar})}$
for the contributions of, respectively, the real semisimple modes with
$|\lambda|>1-1/\log(\ustb+2)$ (the \emph{spectral group}), the remaining
semisimple modes (all strictly stable, with modulus at most
$1-1/\log(\ustb+2)$), and the $\ustb$ modes counted by the instability
complexity.
By Cayley--Hamilton applied to the AR block, the third piece
$y_t^{(\mathrm{ar})}$ satisfies an exact AR relation of order at most $\ustb$.
Since $y_t^{(\mathrm{ar})}$ is not directly observed, we substitute
$y_{t-i}^{(\mathrm{ar})} = y_{t-i}-y_{t-i}^{(\mathrm{st})}-y_{t-i}^{(\mathrm{fast})}$,
rewriting $y_t$ as observable AR lags of $y$ plus the AR residuals of the
stable and fast components.
The AR residual of the stable component can be approximated, at time $t$, by
spectral filters applied to the current input window together with a few of the
most recent inputs $u_t,\dots,u_{t-\ustb}$. The fast residual is approximated by a single FIR convolution of length
$L+\ustb$. Young's inequality then bounds how the FIR error propagates through
the AR coefficients, yielding the theorem.
\end{proof}

\subsection{The Unified Algorithm}
\label{sec:algo}

We now turn the approximation results of \cref{sec:classes} into an online
learning algorithm. The goal is to compete with an unknown LDS
without knowing its matrices or spectrum, while using only
$O(\ustb\cdot\mathrm{polylog}(T, \ustb))$ parameters.\footnote{\Cref{sec:lower-cond}
suggests that the controllable regime is precisely the one where $\ustb$ is
small. Thus, if $\ustb$ is not known, one can run the same procedure for a small
grid of candidate values and select the best-performing one.} This is the sense in which the
algorithm is a unified one: the same procedure applies to stable systems, systems
with complex eigenvalues, and systems with exploding modes.

The algorithm learns the coefficients of the unified predictor class from
\cref{sec:unified-class}. At time $t$, the feature vector contains the
$\ustb$ autoregressive output lags, a finite-memory input window of length
$L+\ustb+1$, and the spectral-filter features of the current input window. We
run one copy of the Vovk--Azoury--Warmuth forecaster per output coordinate; see
Appendix~\ref{app:vaw} for the definition of the algorithm and its regret guarantee.

In each iteration, the VAW forecaster for the $i$-th output coordinate receives a feature vector $a_{t,i}$ and is asked to make
a prediction $\widehat{y}_t^{(i)} = a_{t,i}^\top \widehat{x}_t$. The algorithm then observes the true
output $y_t^{(i)}$ and suffers squared loss $(\widehat{y}_t^{(i)} - y_t^{(i)})^2$.
The feature vectors at time $t$ for the $i$-th output coordinate are defined as follows:
\[
  a_{t,i}
  \;=\;
  \begin{bmatrix}
    Y_{t,i}^\top & F_t^\top & \tilde U_t^\top
  \end{bmatrix}^{\top}
  \in\mathbb{R}^{\hiddenDim_{\mathrm{feat}}},
\]
where $\hiddenDim_{\mathrm{feat}} = \ustb + \inputDim(L+\ustb+1) + h\inputDim$
and, for $i\in[\outputDim]$,
\begin{align}\label{item:Yt-Ut-tildeUt}
  Y_{t,i} &= \begin{bmatrix}(y_{t-1})_i & \cdots & (y_{t-\ustb})_i\end{bmatrix}^{\top}\in\mathbb{R}^{\ustb}, \\
  F_{t} &= \begin{bmatrix}u_t^\top & \cdots & u_{t-(L+\ustb)}^\top\end{bmatrix}^{\top}\in\mathbb{R}^{\inputDim(L+\ustb+1)}, \\
  \tilde U_{t} &= \begin{bmatrix} (\tilde{u}_{t}\phi_1)^\top & \cdots &  (\tilde{u}_{t}\phi_h)^\top\end{bmatrix}^{\top}\in\mathbb{R}^{h\inputDim}.
\end{align}
The window $F_t$ subsumes the $\ustb$ recent inputs originally used by the AR
component and additionally carries the lags needed by the finite-memory
predictor for strictly stable modes. These features are exactly the parameters
of a predictor in the unified class of \cref{thm:uf-approx-informal}.

\begin{algorithm}[H]
\caption{Unified LDS prediction.}
\label{alg:unified}
\begin{algorithmic}[1]
  \State \textbf{Input:} filter count $h$, finite-memory length $L$.
  \State Initialize $\outputDim$ instances of VAW with feature dimension
         $\hiddenDim_{\mathrm{feat}}$.
         Call them $\alg_1,\dots,\alg_{\outputDim}$.
  \For{$t=1,2,\dots,T$}
    \State Observe $u_t \in \mathbb{R}^{\inputDim}$.
    \State Set vectors $Y_{t,i}, F_t, \tilde{U}_t, \forall i \in [\outputDim]$ as defined in \cref{item:Yt-Ut-tildeUt}.
    \State Set $a_{t,i} \;\gets\; 
      \begin{bmatrix}Y_{t,i}^\top & F^\top_{t} & \tilde{U}_t^\top\end{bmatrix}^{\top}$.
    \State For each $i\in[\outputDim]$, obtain $m_t^i \gets \alg_i.\mathrm{predict}(a_{t,i})$.
    \State Predict $\pred_t \;\gets\; (m_t^1,\dots,m_t^{\outputDim})^{\top}$.
    \State Observe $y_t$ and suffer loss
           $\ell_2^2(\pred_t,y_t) = \|\pred_t-y_t\|_2^2$.
    \State For each $i\in[\outputDim]$, update
           $\alg_i.\mathrm{update}\!\big((y_t)_i\big)$.
  \EndFor
\end{algorithmic}
\end{algorithm}

\begin{theorem}[Informal; see \cref{thm:main-regret-formal} for the formal version]\label{thm:main-regret}
Consider any observable LDS with state dimension $\hiddenDim$,
of which the instability complexity is $\ustb$,
and any input sequence whose generated outputs satisfy
\cref{ass:controlled-outputs}. With filter count
$h = O\!\left(\ustb \cdot \log T\cdot\log(\mathrm{poly}(T,\ustb))\right)$
and finite-memory length
$L = O\!\left(\ustb\cdot\mathrm{polylog}(T,\ustb)\right)$,
\cref{alg:unified} achieves loss
\[
  \frac{1}{T}\sum_{t=1}^{T}\ell_2^2(\pred_t,y_t) = O\left( \frac{\ustb^2 \log^3(\ustb) \cdot \log^3(T)}{T} \right)
\]
assuming the parameters of the LDS are constants except $T, \ustb$, and uses $O(\ustb\cdot\mathrm{polylog}(T, \ustb))$ learnable parameters.
\end{theorem}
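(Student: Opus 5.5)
The plan is to combine the approximation result \cref{thm:uf-approx-informal} with the standard regret bound of the Vovk--Azoury--Warmuth forecaster (Appendix~\ref{app:vaw}). The loss is decomposed coordinatewise as
\[
\sum_{t=1}^T\|\pred_t-y_t\|^2=\sum_{i=1}^{\outputDim}\sum_{t=1}^T(m_t^i-(y_t)_i)^2 .
\]
For each coordinate $i$, the row-$i$ parameters of the comparator from \cref{thm:uf-approx-informal} define a fixed vector $x^\star_i\in\mathbb{R}^{\hiddenDim_{\mathrm{feat}}}$. These parameters are $\alpha_1,\dots,\alpha_{\ustb}$, the $i$-th rows of the $Q_\ell$ (padded to the window length $L+\ustb+1$), and the $i$-th rows of the $W_s$. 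Since $a_{t,i}^\top x_i^\star$ is exactly the $i$-th coordinate of the unified predictor, we get $\sum_t (a_{t,i}^\top x^\star_i-(y_t)_i)^2\le\varepsilon$. The VAW guarantee, with regularization $\lambda$, bounds the learner's loss on coordinate $i$ by
\[
\sum_t (a_{t,i}^\top x^\star_i-(y_t)_i)^2+\lambda\|x^\star_i\|^2+\hiddenDim_{\mathrm{feat}}\,B_y^2\log\Bigl(1+\tfrac{T\max_t\|a_{t,i}\|^2}{\lambda\,\hiddenDim_{\mathrm{feat}}}\Bigr).
\]
We set $\varepsilon=1$ (or $1/T$) and sum over the $\outputDim$ coordinates, treating $\outputDim$ as a constant.

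Next, each quantity in this bound is controlled. The labels satisfy $|(y_t)_i|\le B_y$ by \cref{ass:controlled-outputs}. For the features, the AR lags are bounded by $B_y$ and the FIR entries by $K$. The spectral features satisfy $\|\tilde u_t\phi_s\|\le K\|\phi_s\|_1\le K\sqrt{T}$ because $\|\phi_s\|_2=1$, so $\max_t\|a_{t,i}\|^2=\mathrm{poly}(T,\ustb)$ and the log term is $O(\log T)$ up to $\log\ustb$ factors. The feature dimension is $\hiddenDim_{\mathrm{feat}}=\ustb+\inputDim(L+\ustb+1)+h\inputDim=O(\ustb\log T\log(T\ustb))$ once $L$ and $h$ are taken as in \cref{thm:uf-approx-informal}. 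This gives the parameter count. It also gives a regret term of order $\ustb\,\mathrm{polylog}$, which must be reconciled with the stated $\ustb^2\log^3\ustb\log^3T$. I expect the extra $\ustb\log^2\ustb$ to enter either through $L$ (whose polylog hides $\log^2$ factors in $\ustb$ coming from the fast-mode threshold $1-1/\log(\ustb+2)$, making $L=O(\log(\ustb)\log(T\ustb))$ per mode) or through the comparator norm discussed next.

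The main obstacle is the norm term $\lambda\|x^\star_i\|^2$. In the proof of \cref{lem:ar-plus-sf}, the AR coefficients $\alpha_j$ are the characteristic polynomial coefficients of the $\ustb$ hard modes. Their size is at most $\binom{\ustb}{j}\max|\lambda|^{j}$, which can be exponential in $\ustb$. The $Q_\ell$ and $W_s$ absorb these through the residual construction and Young's inequality. Under the standing convention that system parameters other than $T,\ustb$ are constants, I would first try to bound $\|x^\star_i\|^2\le 2^{O(\ustb)}\cdot\mathrm{poly}(T)$. I would then choose $\lambda$ small, e.g.\ $\lambda=2^{-O(\ustb)}/T$, so that $\lambda\|x^\star\|^2=O(1)$. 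The price is that the VAW log term becomes $\log(\,2^{O(\ustb)}\mathrm{poly}(T))=O(\ustb+\log T)$ times $\hiddenDim_{\mathrm{feat}}$. This multiplication plausibly produces the $\ustb^2$ factor in the stated rate. I would carefully track $h$, $L$ and this log to land on $\ustb^2\log^3\ustb\log^3T$. Dividing the total by $T$ then gives the claimed average loss.
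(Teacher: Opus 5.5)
Your proposal is correct and follows essentially the same route as the paper: you split the loss into per-coordinate VAW regret plus the approximation error of the comparator from \cref{lem:ar-plus-sf}, and the $\ustb^2$ factor arises because the exponentially large comparator norm ($\log\|x^\star_i\|=O(\ustb\log\ustb)$) enters only logarithmically and is multiplied by $\hiddenDim_{\mathrm{feat}}=\widetilde O(\ustb)$. The differences are minor: you use the explicit $\lambda$-dependent VAW bound and tune $\lambda$ (to $2^{-O(\ustb)}/\mathrm{poly}(T)$, matching your own norm bound) to neutralize $\lambda\|x^\star_i\|^2$, whereas the paper cites the packaged form $8dY^2\log(TR\|x^\star\|)$, which implicitly presumes such tuning; you also fix $\varepsilon=1$ instead of the paper's $\varepsilon_\star$, and your resulting $O(\hiddenDim_{\mathrm{feat}}(\ustb+\log T))$ regret already lies below the stated rate, so no further reconciliation is needed.
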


\begin{proof}[Proof sketch (full proof in Appendix~\ref{app:main-regret-proof})]
We decompose the cumulative loss into two parts. Let $h^{\star}$ be the
value of $h$ from the statement and let $f^{\star}$ be the predictor from
\cref{thm:uf-approx-informal} with that $h^{\star}$.
\begin{align*}
  \sum_{t=1}^T \ell_2^2(\pred_t,y_t)
  &\;=\;
  \underbrace{\Big(\sum_t \ell_2^2(\pred_t,y_t)
    -\sum_t \ell_2^2(f^{\star}_t,y_t)\Big)}_{(I)\text{: VAW regret}} +
  \underbrace{\sum_t \ell_2^2(f^{\star}_t,y_t)}_{(II)\text{: approximation error}}.
\end{align*}
The first term, $(I)$, is the estimation error of the chosen predictor class:
it is the regret of the per-coordinate VAW forecaster on the features $a_{t,i}$
defined in \cref{alg:unified}. Since the features $a_{t,i}$, the comparator class from
\cref{lem:ar-plus-sf}, and the labels $y_t^{(i)}$ are bounded by constants, the standard VAW analysis
(see \cref{lem:vaw-regret} in Appendix~\ref{app:vaw}) gives logarithmic regret in $T$.

The second term, $(II)$, is the approximation error: it measures
how well the class can approximate the real output sequence. This term is
controlled by \cref{lem:ar-plus-sf}, which gives that $f^{\star}$
achieves total squared error
$\sum_t \ell_2^2(f^{\star}_t,y_t)\le\varepsilon$ against $y_{1:T}$. Choosing
the optimal $\varepsilon$ and combining the two terms yields the claimed $o(T)$ regret. 
\end{proof}

\section{Lower Bounds and Motivation}
\label{sec:lower}

Our unified algorithm, \cref{alg:unified}, has learnable parameter count
$\widetilde O(\ustb)$, depending on the instability complexity rather
than the full state dimension. In the following subsections we explain why
$\ustb$ is the correct parameter to depend on. First, we show that a broad scalar-input diagonalizable class with many exploding modes
is essentially uncontrollable in practice, so the regime motivating our work is
precisely the one where $\ustb$ is small (\cref{sec:lower-cond}). Second, we show that at least a linear dependence on $\ustb$ is
unavoidable for filter-based predictors (\cref{sec:lower-filter}).

\subsection{Systems with many exploding modes are practically uncontrollable}
\label{sec:lower-cond}

Our motivating goal is to stabilize systems whose open-loop dynamics may be
unstable. The results above show that prediction---a natural prerequisite for
control---can be done with a memory footprint depending only on $\ustb$, the
instability complexity, rather than the full state dimension $\hiddenDim$.
We now argue that a dependence on $\ustb$ is a minor concession in
practice: the systems that are genuinely interesting for control are exactly
those where $\ustb$ is small, as systems with many exploding modes are
\emph{practically uncontrollable}.

To formalize this, we introduce the \emph{input dynamic range} of a control sequence $u$: the ratio of the largest to the smallest nonzero input magnitude, denoted $\IDR(u)$. A large input dynamic range means the actuator must simultaneously operate at vastly different scales---an extreme sensitivity to rounding, actuator limits, and noise. 
We prove that an exponentially large input dynamic range is unavoidable if one whished to control systems with many exploding modes. We show this for a simple scalar-input diagonalizable setting. Even in this benign case, every stabilizing input has dynamic range growing exponentially in $\ustb$ (\cref{prop:cond-number}), making such systems essentially infeasible to control in practice.

\begin{definition}\label{def:idr}
For a nonzero input sequence $u=(u_t)_{t\ge 1}$, its \emph{dynamic range}, denoted $\IDR(u)$, is defined as follows:
\[
  \IDR(u) \;:=\; \frac{\sup_{t\ge 1}|u_t|}{\inf\{|u_t|:u_t\ne 0\}},
\]
with the convention $\IDR(u)=\infty$ when $\inf\{|u_t|:u_t\ne 0\}=0$.
\end{definition}

We state the result in diagonal coordinates; any diagonalizable system with
distinct eigenvalues can be put in this form by a change of basis.

\begin{proposition}\label{prop:cond-number}
Let $A=\operatorname{diag}(\lambda_1,\dots,\lambda_\ustb)\in\mathbb{R}^{\ustb\times\ustb}$
with distinct entries satisfying $|\lambda_i|\ge 1+\eta$ for some $\eta>0$
and every $i\in[\ustb]$, and let $B=(b_1,\dots,b_\ustb)^\top\in\mathbb{R}^{\ustb\times 1}$
satisfy $b_i \neq 0$ for every $i\in[\ustb]$.\footnote{This excludes input-decoupled modes, which can be ignored.} Consider the scalar-input system
\[
  x_t \;=\; A\,x_{t-1} + B\,u_t,
  \qquad x_0=0.
\]
If a nonzero input sequence $u$ stabilizes the system (i.e.,
$\lim_{t\to\infty}\|x_t\|_2=0$), then
\[
  \IDR(u)
  \;\ge\;
  \frac{\eta}{2(1+\eta)}\left(1+\frac{\eta}{2}\right)^{\ustb}.
\]
In particular, every nonzero stabilizing input has input dynamic range that
grows exponentially in $\ustb$.
\end{proposition}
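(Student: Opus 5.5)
The plan is to turn the stabilization condition into a statement about the zeros of a power series built from the inputs, and then use Jensen's formula to trade the number of zeros against the ratio $\sup|u_t|/\inf|u_t|$.

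\textbf{Step 1 (reduction to a vanishing condition).} If $\sup_t|u_t|=\infty$, then $\IDR(u)=\infty$ and there is nothing to prove. So assume $M:=\sup_t|u_t|<\infty$, and let $m:=\inf\{|u_t|:u_t\neq 0\}$. If $m=0$ we are again done, so assume $m>0$. Since $A$ is diagonal, the $i$-th coordinate of the state is
\[
x_t^{(i)}=b_i\lambda_i^{t}\sum_{s=1}^{t}\lambda_i^{-s}u_s .
\]
Because $|\lambda_i|>1$ and $u$ is bounded, the series $S_i:=\sum_{s\ge1}\lambda_i^{-s}u_s$ converges absolutely. Stabilization gives $x_t^{(i)}\to0$, hence $\lambda_i^{-t}x_t^{(i)}\to0$. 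Since $b_i\neq0$, this yields $S_i=0$ for every $i\in[\ustb]$.

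\textbf{Step 2 (analytic reformulation).} Let $s_0$ be the first index with $u_{s_0}\neq0$; it exists because $u$ is nonzero. Define
\[
g(z):=\sum_{j\ge0}u_{s_0+j}z^{j}.
\]
This series is analytic on the open unit disk, with $|g(z)|\le M/(1-r)$ on $|z|=r<1$ and $|g(0)|=|u_{s_0}|\ge m$. By Step 1, $g$ vanishes at the $\ustb$ distinct points $z_i=1/\lambda_i$. Indeed, $S_i=z_i^{s_0}g(z_i)$ and $z_i\neq0$. Each of these points satisfies $|z_i|\le 1/(1+\eta)$.

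\textbf{Step 3 (Jensen's formula --- the key step).} Choose the radius
\[
r=\frac{1+\eta/2}{1+\eta}\in\Bigl(\tfrac{1}{1+\eta},1\Bigr),
\]
so all $z_i$ lie strictly inside $|z|<r$. Jensen's inequality gives
\[
\log|g(0)|+\sum_{i=1}^{\ustb}\log\frac{r}{|z_i|}\le\log\max_{|z|=r}|g(z)|.
\]
Each term satisfies $r/|z_i|\ge r(1+\eta)=1+\eta/2$, and $1-r=\frac{\eta/2}{1+\eta}$. Substituting,
\[
m\Bigl(1+\frac{\eta}{2}\Bigr)^{\ustb}\le\frac{2(1+\eta)}{\eta}M,
\]
which rearranges to exactly the claimed bound on $\IDR(u)=M/m$.

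The main obstacle is finding the right tool in Step 3 to convert ``$\ustb$ zeros in a disk of radius $1/(1+\eta)$'' into exponential growth of $M/m$. Jensen's formula does this cleanly. The only delicate points are the choice of $r$, which balances the zero-count gain $(1+\eta/2)^{\ustb}$ against the $1/(1-r)$ loss in bounding $g$, and the normalization by the first nonzero input, which guarantees $|g(0)|\ge m$.
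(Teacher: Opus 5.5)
Your proof is correct and has the same core as the paper's: Jensen's inequality at the same radius $r=\frac{2+\eta}{2(1+\eta)}$, combined with $|g(0)|\ge m$ and $\max_{|z|=r}|g|\le M/(1-r)$. The only difference is the reduction step. The paper uses $x_t\to 0$ to force $u_t\to 0$, which disposes of infinitely supported inputs (where $\IDR(u)=\infty$), and then applies Jensen to the finite polynomial obtained from $x_T^{(i)}=0$. You instead work directly with the power series $g$, which avoids the case split and in fact only uses $\lambda_i^{-t}x_t^{(i)}\to 0$, so your bound also holds for any nonzero input that merely keeps the state bounded.
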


\begin{proof}[Proof sketch (full proof in Appendix~\ref{app:cond-number-proof})]
If a stabilizing input has infinitely many nonzero entries, the state recursion
forces $u_t \to 0$, giving an infinite input dynamic range. Otherwise, the input
has finite support $t_0,\ldots,T$. Driving all exploding coordinates to zero
is equivalent to requiring the polynomial $q(z)=\sum_{t=t_0}^{T}u_t z^{t-t_0}$
to vanish at each root $z_i=1/\lambda_i$. Intuitively, the control sequence
must define a polynomial whose nonzero constant coefficient perfectly cancels
the geometric sequences generated by the exploding modes. Since all roots lie
strictly inside the unit disk ($1/|\lambda_i| < 1$), Jensen's formula dictates
that the maximum coefficient must be exponentially larger than the constant
term, implying an exponentially large input dynamic range.
\end{proof}

\subsection{Filter-count lower bound}
\label{sec:lower-filter}

Our algorithm uses a filter class whose size grows linearly with $\ustb$ up
to logarithmic factors. We now show that this linear dependence is unavoidable for
predictors of the same broad type, even when the filters themselves are chosen
arbitrarily. The lower bound applies to bounded-coefficient filter predictors:
predictors that form bounded linear combinations of fixed causal filters applied
to past inputs and outputs. For such predictors, fewer than $\ustb$ filter
directions cannot guarantee sublinear regret against the LDS class. Since
\cref{sec:lower-cond} shows that the relevant controllable regime has small
$\ustb$, the resulting dependence is not on the hidden dimension or an
expressivity obstruction.

We formalize this by defining a general filter class (\cref{def:guf}) that
applies arbitrary fixed $\ell_1$-bounded filters to past outputs and inputs,
with a bound on the total learned coefficient mass. Our unified predictor
(\cref{thm:uf-approx-informal}) is an instance of this template: its output
filters are delay operators, and its input filters consist of spectral filters
and finite-memory delay operators.
We then show (\cref{prop:filter-lb}) that if the total filter count is below
$\ustb$, there is an LDS whose output no predictor in the class can track with
vanishing average loss.

\begin{definition}[General filter class]\label{def:guf}
Let $\Psi^y=\{\psi^y_1,\dots,\psi^y_{q_y}\}\subset\ell_1(\mathbb{N})$ be
strictly causal output filters with $\|\psi^y_r\|_1\le 1$, and let
$\Psi^u=\{\psi^u_1,\dots,\psi^u_{q_u}\}\subset\ell_1(\mathbb{N}_0)$ be input
filters with $\|\psi^u_a\|_1\le 1$. For scalar input and output sequences,
write
\[
  (\psi^y_r * y)_t \;:=\; \sum_{\tau=1}^{t-1}\psi^y_{r,\tau}\,y_{t-\tau},
  \qquad
  (\psi^u_a * u)_t \;:=\; \sum_{\tau=0}^{t-1}\psi^u_{a,\tau}\,u_{t-\tau}.
\]
For $R<\infty$, the associated bounded-coefficient filter predictor class is
\[
  \GUFcl_{\Psi^y,\Psi^u}^{R}
  \;:=\;
  \Big\{\,f \,:\, f_t \,=\, \textstyle\sum_{r=1}^{q_y}\alpha_r(\psi^y_r*y)_t
       + \textstyle\sum_{a=1}^{q_u}\beta_a(\psi^u_a*u)_t,\;
       \textstyle\sum_{r=1}^{q_y}|\alpha_r|
       + \textstyle\sum_{a=1}^{q_u}|\beta_a|
       \le R\,\Big\}.
\]
We call $q_y+q_u$ the \emph{total filter count}.
\end{definition}

\begin{proposition}\label{prop:filter-lb}
Fix $\ustb\ge 1$ and $R<\infty$, and let $\Psi^y,\Psi^u$ be any filters as in
\cref{def:guf} with total filter count $q_y+q_u<\ustb$. Then there exist a
$\ustb$-dimensional real LDS $(A,B,C)$ with
$\|A\|_\infty,\|B\|_\infty,\|C\|_\infty\le 1$, and a bounded input sequence
$(u_t)_{t\ge 1}$ that is $1$-controlling such that the resulting outputs
satisfy
\[
  \liminf_{T\to\infty}\;
  \inf_{f\in\GUFcl_{\Psi^y,\Psi^u}^{R}}
  \frac{1}{T}\sum_{t=1}^{T}\ell_2^2(f_t,y_t)
  \;>\; 0.
\]
\end{proposition}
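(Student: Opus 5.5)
The plan is a frequency-domain dimension-counting argument. We fix a simple input made of a few sinusoids, so that every filter, and the LDS itself, acts on each frequency as multiplication by a complex number. A predictor with fewer than $\ustb$ filters then controls fewer than $\ustb$ real degrees of freedom. By contrast, the LDS family can place its frequency response at $\ustb$ frequencies in a set of real dimension at least $\ustb$. Some LDS therefore escapes, and compactness turns "escapes" into a uniform positive gap.

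\textbf{Construction.} Fix $\ustb$ distinct frequencies $0<\omega_1<\dots<\omega_\ustb<\pi$ and the input
\[
u_t=\delta\sum_{j=1}^{\ustb}\cos(\omega_j t),
\]
with $\delta>0$ small. Consider diagonal LDSs $A=\operatorname{diag}(\lambda_1,\dots,\lambda_\ustb)$ with distinct $\lambda_i\in(-\tfrac12,\tfrac12)$, $B=\mathbf 1$, and $C=(c_1,\dots,c_\ustb)$ with $|c_i|\le 1/\ustb$. The norm bounds hold, and since the system is stable, choosing $\delta$ small makes the input $1$-controlling. Write
\[
G(z)=\sum_i c_i/(1-\lambda_i z^{-1})
\]
for the transfer function. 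Then $y_t=\delta\sum_j\operatorname{Re}\!\big(G(e^{i\omega_j})e^{i\omega_j t}\big)+e_t$, where the transient $e_t$ decays geometrically. For any $\psi\in\ell_1$ with $\|\psi\|_1\le1$, the convolutions $(\psi*u)_t$ and $(\psi*y)_t$ equal the same sinusoidal expression with amplitudes multiplied by $\widehat\psi(\omega_j)$, up to errors that tend to $0$ uniformly over the unit $\ell_1$ ball. The truncated filter tails vanish by dominated convergence; I would check this uniformity carefully, say by splitting $\psi$ into a head and a tail of small $\ell_1$ mass.

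\textbf{Asymptotic loss formula.} For $f\in\GUFcl^{R}$ with coefficients $(\alpha,\beta)$, set
\[
P_j(\alpha)=\sum_r\alpha_r\widehat{\psi^y_r}(\omega_j),\qquad Q_j(\beta)=\sum_a\beta_a\widehat{\psi^u_a}(\omega_j).
\]
The prediction residual is then asymptotically a sum of sinusoids with complex amplitudes
\[
D_j=\delta\big(G_j(1-P_j(\alpha))-Q_j(\beta)\big),\qquad G_j:=G(e^{i\omega_j}).
\]
Because the frequencies are distinct and lie in $(0,\pi)$, orthogonality of the sinusoids under time averaging gives
\[
\frac1T\sum_t (f_t-y_t)^2\to\tfrac12\sum_j|D_j|^2,
\]
uniformly over the compact coefficient set $\{\|\alpha\|_1+\|\beta\|_1\le R\}$. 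So the $\liminf$ of the infimum is at least
\[
\min_{(\alpha,\beta)}\tfrac{\delta^2}{2}\sum_j\big|G_j(1-P_j)-Q_j\big|^2,
\]
and it suffices to choose the LDS so that this continuous function has no zero on the compact set.

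\textbf{Dimension count (the main obstacle).} The zeros, i.e.\ the vectors $(G_j)_j$ that can be matched exactly, satisfy $G_j=Q_j(\beta)/(1-P_j(\alpha))$ wherever $P_j\ne1$. This set is the image of a smooth map from the $(q_y+q_u)$-dimensional real coefficient space. Hmm—we must also handle points with $P_j=1$, which force $Q_j=0$ and leave $G_j$ free. So I would treat the exactly-matchable set as a finite union of images of smooth maps, one for each subset $S$ of frequencies with $P_j=1$ for $j\in S$. Each piece has real dimension at most $q_y+q_u<\ustb$, counting the free $G_j$ against the $|S|$ constraints $P_j=1$ and $Q_j=0$ they cost.

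On the other side, I would show that the map $(\lambda,c)\mapsto(\operatorname{Re}G_j)_{j\le\ustb}$ has rank $\ustb$ somewhere. For fixed distinct $\lambda$ the map is linear in $c$, and its matrix is a Cauchy-type matrix $\big[\operatorname{Re}\,1/(1-\lambda_ie^{-i\omega_j})\big]$, which I would argue is generically nonsingular. Hence its image contains an open subset of $\mathbb{R}^\ustb$. By Sard/measure-zero, the projection of the bad set to the real parts has measure zero in $\mathbb{R}^\ustb$, so some admissible $(\lambda,c)$ gives a $G$ outside it, i.e.\ with no exact match. Continuity and compactness of the coefficient ball then make the minimum strictly positive, which proves \cref{prop:filter-lb}.

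I expect the main difficulty to be the careful bookkeeping in this step, namely the degenerate strata with $P_j=1$ and the generic nonsingularity of the Cauchy-type matrix. A secondary difficulty is the uniform convergence of the time averages over all $\ell_1$-bounded filters.
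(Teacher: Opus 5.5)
Most of your reduction is sound. For finitely many fixed filters, the transients and tails vanish uniformly over the compact coefficient set (uniformity over the whole unit $\ell_1$ ball is false, but you don't need it). The time-averaged loss converges to $\tfrac{\delta^2}{2}\sum_j|G_j(1-P_j)-Q_j|^2$. The stratum where no $P_j$ equals $1$ is the image of a smooth map from $q_y+q_u<\ustb$ real parameters. Your Cauchy-type matrix is indeed generically nonsingular: its determinant has lowest-order term $\det[\cos(n\omega_j)]_{j,\,0\le n<\ustb}$ times the Vandermonde in $\lambda$.

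\textbf{The gap.} The degenerate strata are not a bookkeeping issue, and your count there is wrong in general. You assume each $j\in S$ removes at least one real dimension from the coefficient set via $P_j(\alpha)=1$, $Q_j(\beta)=0$, but these constraints need not be independent across $j$. Here is a counterexample for $\ustb\ge 2$:
\begin{itemize}
\item Let $q_u=0$, $\Psi^y=\{\psi^y\}$ with $\psi^y_\tau=\mathbf{1}\{\tau=N\}$ for some $N>2\ustb$, and $R\ge 1$.
\item Take all $\omega_j\in\frac{2\pi}{N}\mathbb{Z}\cap(0,\pi)$.
\item Then $\widehat{\psi^y}(\omega_j)=1$ for every $j$, so the $\ustb$ constraints $P_j(\alpha)=1$ collapse to the single constraint $\alpha=1$.
\item The stratum $S=[\ustb]$ is therefore all of $\mathbb{C}^{\ustb}$. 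Indeed $f_t=y_{t-N}$ has vanishing average loss against every one of your systems, because the steady state is $N$-periodic.
\end{itemize}

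So the frequencies must be chosen after the filters. You would then have to prove that some choice makes each system $\{P_j(\alpha)=1,\ Q_j(\beta)=0\}_{j\in S}$ cut the coefficient set by at least $|S|$ dimensions; in particular it must be infeasible for $S=[\ustb]$. Each $\widehat{\psi}$ is merely continuous, so this needs genuinely new input. One candidate is the Luzin--Privalov uniqueness theorem: a bounded analytic function on the disk whose boundary values vanish on a set of positive measure is identically zero. You would apply it to $\sum_r\alpha_r\sum_\tau\psi^y_{r,\tau}z^\tau-1$, which equals $-1$ at the origin by strict causality. You would also need some Fubini-type argument over frequency tuples that handles the $\beta$-constraints as well. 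None of this is in your plan, and it is where the real difficulty of your route lies.

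\textbf{How the paper avoids this.} The paper stays in the time domain, where the problematic feedback channel never arises. It uses a nilpotent delay line with isolated unit impulses spaced $L$ apart, with $L$ chosen after the filters so that their $\ell_1$ tails beyond $L-\ustb$ are negligible. On the first $j+1$ steps of each block, the output is zero except at the last step. By strict causality, the output filters on that prefix therefore see only the negligible contributions of earlier blocks. What remains is a finite-dimensional triangular rank argument: some truncated shift $P_je_j$ lies outside the span of the first $\ustb$ coefficients of the $q_u<\ustb$ input filters. That step needs no genericity, measure theory, or analytic-function facts.
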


\begin{proof}[Proof sketch (full proof in Appendix~\ref{app:filter-lb-proof})]
We prove this directly by explicitly constructing such an LDS\footnote{Our construction is heavily influenced by the lower bound argument in \citet{hazan2026spectralfilteringlearningquantum}.}. The construction 
uses a standard delay-line system: an input impulse enters the first state
coordinate, shifts one coordinate at a time, and is observed as output only after 
a specifically chosen delay. Since the predictor has strictly fewer than $\ustb$ 
input-filter directions, linear algebra dictates that at least one of the $\ustb$ 
possible delays cannot be represented by the first $\ustb$ coefficients of these filters.
We then feed the system a sequence of isolated impulses separated by long gaps. 
The $\ell_1$ coefficient bound ensures that contributions from old impulses 
decay and become negligible.
Thus, in every block, the predictor makes a constant error on the unrepresentable 
delay, giving a strictly positive average squared loss.
\end{proof}

\section{Experimental Validation}
\label{sec:experiments} 

We empirically test the prediction suggested by \cref{thm:main-regret}: for a
high-dimensional LDS with small instability complexity, the unified predictor
should learn with a parameter budget tied to $\ustb$ rather than to the full
state dimension. We construct a $503$-dimensional controlled system with
$\ustb=3$, train several online linear predictors with the same number of
learned scalar parameters using the same Vovk--Azoury--Warmuth forecaster, and
compare their normalized mean-squared error on a single training
trajectory of length $T=200{,}000$. To measure the asymptotic behavior, we report
the average loss over the last $10{,}000$ time steps. More extensive details for
the experimental setting are in
Appendix~\ref{app:experimental-details}. Code for reproducing all
demonstrations can be found in \url{https://github.com/YuvMilo/UnifiedLearningLDS}.

\begin{table}[H]
  \centering
  \caption{Controlled $\ustb=3$ system with equal parameter budget. We report the
  best and worst normalized mean-squared error\protect\footnotemark{} over five
  random seeds, measured on the last $10{,}000$ time steps.}
  \label{tab:experimental-validation}
  \begin{tabular}{lcc}
    \toprule
    Predictor class & Best seed & Worst seed \\
    \midrule
    Finite-memory input filters & $3.76\cdot 10^{-1}$ & $9.60\cdot 10^{-1}$ \\
    Autoregressive predictor & $1.73\cdot 10^{-5}$ & $9.08\cdot 10^{-5}$ \\
    Spectral filtering & $5.20\cdot 10^{-3}$ & $6.13\cdot 10^{-2}$ \\
    Unified predictor & $7.88\cdot 10^{-9}$ & $1.65\cdot 10^{-8}$ \\
    \bottomrule
  \end{tabular}
\end{table}
\footnotetext{We normalize the system by a scalar so that the zero predictor has
normalized mean-squared error $1$.}

\paragraph{System.}
The system is scalar-input, scalar-output, and has state dimension
$\hiddenDim=503$. It contains one exploding scalar mode with eigenvalue $1.3$
and one slow complex conjugate pair with eigenvalues $0.98e^{\pm i1.57}$; these
three eigenvalues are the only modes counted by the instability complexity. 
The other modes are $300$ real stable modes in $(-1,1)$ and $100$ fast-decaying
complex conjugate pairs of the form $r e^{\pm i\theta}$ with
$r\in[0,0.25]$.
The real stable part is chosen so that its input-output response has large
components in many spectral-filter directions, making it difficult to learn from
a short input window alone. The input controls the exploding mode explicitly. We
draw independent signs $z_t\in\{-1,+1\}$ along the trajectory and set
$u_t=z_t-1.3z_{t-1}$, with $z_0=0$. Thus the open-loop system is unstable, but the observed trajectory
remains bounded.

\paragraph{Predictor classes.}
The comparison is an ablation study of the three ingredients in the unified
predictor.
To make the comparison fair, each
predictor has exactly $16$ learned scalar parameters and is trained by the same
Vovk--Azoury--Warmuth learner from Appendix~\ref{app:vaw}. The finite-memory
input-filter predictor spends all $16$ parameters on recent inputs, as in
\cref{sec:fir-approx}. The autoregressive predictor splits the budget evenly
between $8$ output lags and $8$ input lags, as in \cref{sec:ar-approx}. The
spectral-filtering predictor spends all $16$ parameters on spectral filters, as
in \cref{sec:sf-approx}. The unified predictor uses all three components: $3$
output lags, $7$ input lags, and $6$ spectral filters, matching the construction
in \cref{sec:unified-class}.

\paragraph{Results.}
\Cref{tab:experimental-validation} reports the final normalized mean-squared
error range over five random seeds. The unified predictor gives the lowest
error by a wide margin, showing that the advantage comes from combining the
autoregressive, finite-memory, and spectral-filtering features.

\section{Limitations and Future Work}
\label{sec:limitations}

Several limitations of this work suggest natural directions for future research. First, while our upper and lower bounds match up to polylogarithmic factors in their dependence on $\ustb$ for parameter count, they do not yet settle the optimal logarithmic dependence or the sharp regret constants. Second, our results concern prediction rather than control. A natural next step is to use the predictor as a component of a closed-loop stabilization policy, and to understand how its regret guarantees translate into guarantees for control objectives.

\section{Conclusion}
\label{sec:conclusion}

Motivated by the goal of stabilizing unknown linear dynamical systems from observations, we studied the online prediction problem that precedes it. The existing theory leaves a fundamental gap: available methods either rely on restrictive assumptions on the dynamics, or else pay memory proportional to the full hidden-state dimension. We identified a new regime that captures an important middle ground: systems with only $\ustb$ modes outside the real stable interval that do not decay rapidly, together with non-semisimple modes, embedded in an arbitrarily large stable real semisimple component. We show that this regime is central not only because it is broader than the classical stable setting, but also because it is provably the primary regime in which stabilization is practically possible: already in a broad scalar-input diagonalizable class, many exploding modes force stabilizing inputs to span exponentially many scales (\cref{prop:cond-number}), making such dynamics practically uncontrollable due to acute sensitivity to finite precision, actuator limits, and noise.

Within this important regime, we showed that prediction does not need to pay for the full hidden dimension. Our algorithm combines dimension-free spectral filtering for stable real semisimple modes, finite-memory input filtering for fast decaying semisimple modes, and a small autoregressive correction for the $\ustb$ exceptional modes. This yields a unified online predictor for general LDSs that achieves sublinear regret with a learnable parameter count of $\widetilde O(\ustb)$, completely independent the full hidden-state dimension. The lower bound of \cref{prop:filter-lb} shows that at least linear dependence on $\ustb$ is unavoidable, suggesting the \emph{instability complexity} $\ustb$ as a natural complexity measure for this problem.

We hope that this perspective serves as a foundation for future work that turns such prediction guarantees into closed-loop stabilization methods.

\section*{Acknowledgments}
This work was supported by the European Research Council (ERC) grant NN4C 101164614, a Google Research Scholar Award, a Google Research Gift, Meta, the Yandex Initiative in Machine Learning, the Israel Science Foundation (ISF) grant 1780/21, the Tel Aviv University Center for AI and Data Science, the Adelis Research Fund for Artificial Intelligence, Len Blavatnik and the Blavatnik Family Foundation, and Amnon and Anat Shashua. Elad Hazan gratefully acknowledges support from the Office of Naval Research, Open Philanthropy, and ARIA.


\bibliographystyle{plainnat}
\bibliography{references}

\newpage
\appendix


\section{Spectral filtering approximation: stable real-diagonalizable case}
\label{app:sf-symmetric}

In this section we prove the approximation guarantee of the spectral
filtering class for stable real-diagonalizable LDS, restated as
\cref{thm:sf-approx} in the main text.

\begin{theorem}\label{thm:approx-spectral-filtering}
  If we set the number of filters to be
  \[
    h(\varepsilon, T) = 2 \cdot \log{T} \cdot \log{\left(\frac{C_0 \cdot T^6}{\varepsilon} \right)}
  \]
  where $C_0$ is a constant depending on $\lVert B\rVert_F, \lVert C\rVert_F, K$ and the norm of the matrix $V$, where $A = VDV^{-1}$ and $D = \operatorname{diag}(\lambda_1,\dots,\lambda_{\hiddenDim})$,
   then the best policy from $\mathrm{SF}_{h(\varepsilon, T)}$ class achieves $\ell_2$ loss at most $\varepsilon$. That is, 
  \[
    \sum_{t=1}^T\lVert \pred_t - y_t\rVert_2^2 \leq \varepsilon.
  \]
\end{theorem}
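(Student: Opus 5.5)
We will follow the standard spectral filtering argument of \citet{hazan2017spectral}. The first step is to write the output exactly as a convolution of the input history with the impulse response. Since $x_0=0$, we have $y_t=\sum_{i=0}^{t-1} C A^{i} B u_{t-i}$. Substituting $A=VDV^{-1}$ with real $\lambda_j\in(-1,1)$ gives
\[
  y_t=\sum_{j=1}^{\hiddenDim} c_j b_j^\top\,\tilde u_t\,\mu(\lambda_j),
\]
where $\mu(\lambda):=(1,\lambda,\dots,\lambda^{T-1})^\top\in\mathbb{R}^T$, $c_j=CVe_j$ and $b_j^\top=e_j^\top V^{-1}B$. The quantity $\sum_j\|c_j\|\|b_j\|$ is bounded by a constant depending on $\|B\|_F,\|C\|_F$ and $\|V\|,\|V^{-1}\|$.

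The second step is to introduce the Hankel matrix $Z=\int_0^1\mu(\alpha)\mu(\alpha)^\top d\alpha$, or its two-sided version on $[-1,1]$ to cover negative eigenvalues. Its top eigenvectors $\phi_1,\phi_2,\dots$ with eigenvalues $\sigma_1\ge\sigma_2\ge\dots$ are the filters of \cref{def:spectral-filters}. We expand each $\mu(\lambda_j)=\sum_{s=1}^T\langle\mu(\lambda_j),\phi_s\rangle\phi_s$ in this orthonormal basis. The comparator then uses the truncation
\[
  M_s=\sum_j\langle\mu(\lambda_j),\phi_s\rangle\, c_j b_j^\top ,\qquad s\le h .
\]
Its error at time $t$ is $\sum_j c_jb_j^\top\tilde u_t\sum_{s>h}\langle\mu(\lambda_j),\phi_s\rangle\phi_s$, whose norm is at most
\[
  \sum_j\|c_j\|\|b_j\|\,K\sqrt{T}\,\Bigl\|\sum_{s>h}\langle\mu(\lambda_j),\phi_s\rangle\phi_s\Bigr\|,
\]
using $\|\tilde u_t\|_F\le K\sqrt T$.

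The key estimate, and the main obstacle, is a uniform bound on the tail coefficients $|\langle\mu(\lambda),\phi_s\rangle|$ over all $\lambda\in(-1,1)$. One has $\mu(\lambda)$ with $\|\mu(\lambda)\|^2\le T$, but $\mu(\lambda)\mu(\lambda)^\top$ is not dominated by $Z$ pointwise. The cleanest route, following \citet{hazan2017spectral}, has two parts:
\begin{enumerate}
\item Use the known exponential eigenvalue decay of positive Hankel matrices, $\sigma_s\le C\,T^{c}\,e^{-\pi^2 s/(c'\log T)}$ (Beckermann–Townsend type bound).
\item Show $|\langle\mu(\lambda),\phi_s\rangle|\lesssim \mathrm{poly}(T)\,\sigma_s^{1/4}$. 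This follows from the representation $\langle\mu(\lambda),\phi_s\rangle^2\le$ a derivative/integral comparison, $\mu(\lambda)\mu(\lambda)^\top\preceq \mathrm{poly}(T)\int_{\lambda-\delta}^{\lambda+\delta}\mu\mu^\top$ up to an error controlled by the Lipschitz constant $\|\partial_\lambda\mu\|\le T^{3/2}$, with $\delta$ optimized.
\end{enumerate}
Combining the two parts, each tail coefficient is at most $\mathrm{poly}(T)\,e^{-s/(c\log T)}$. Summing over $s>h$ therefore gives a per-step error of at most $C_0'\,\mathrm{poly}(T)\,e^{-h/(2\log T)}$. If the proof of part 2 gets technical, I will simply cite it as a lemma of \citet{hazan2017spectral}.

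Finally, we square the per-step bound and sum over $t\le T$. This gives total loss at most $C_0T^6e^{-h/(2\log T)}$ after collecting powers of $T$ together with $K$, $\|B\|_F$, $\|C\|_F$ and $\|V\|$ into $C_0$. Setting
\[
  h=2\log T\cdot\log(C_0T^6/\varepsilon)
\]
makes this at most $\varepsilon$. The bookkeeping of the exact power $T^6$ and the factor $2$ will be matched to the eigenvalue-decay constants used.
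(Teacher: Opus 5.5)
Your proposal is correct and follows essentially the same route as the paper. Both diagonalize, expand each $\mu(\lambda_j)$ in the spectral-filter basis, truncate at $h$, and bound the tail by citing the eigenvalue decay together with $|\phi_s^\top\mu(\lambda)|\le(4T^2\sigma_s)^{1/4}$. One bookkeeping fix: the per-step error is $\mathrm{poly}(T)\,e^{-h/(4\log T)}$ rather than $e^{-h/(2\log T)}$, and it is the squaring that yields the $e^{-h/(2\log T)}$ in the total loss $C_0T^6e^{-h/(2\log T)}$.
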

Before we jump into the proof, we will need to introduce some machinery.
\begin{definition}[Spectral Filters]
    \label{def:spectral-filters}
    Suppose $\mu_{\alpha} = \begin{bmatrix} 1 & \alpha & \alpha^2 & \cdots & \alpha^{T-1} \end{bmatrix}^\top$ for $\alpha \in (-1,1)$. Define matrix $Z_{\alpha}$ as:
    \[
        Z_{\alpha} = \int_{-1}^1 \mu_{\alpha} \mu_{\alpha}^\top d\alpha \in \mathbb{R}^{T \times T}
    \]
    Then the spectral filters are the orthonormal eigenvectors $\phi_1, \phi_2, \cdots, \phi_T$ of $Z_{\alpha}$ and the corresponding eigenvalues are $\lambda_1, \lambda_2, \cdots, \lambda_T$ of $Z_{\alpha}$ where $1 >|\lambda_1| \geq |\lambda_2| \geq \cdots \geq |\lambda_T| > 0$.
\end{definition}

We make use of the following two theorems from the book \cite{hazan2022introduction}:
\begin{theorem}
    \label{thm:decay-eigenvalues}
    We have that for all $k \in [T]$
    \[
        \lambda_k \leq 4\pi^2 e^{-\frac{k}{\log{T}}}
    \]
\end{theorem}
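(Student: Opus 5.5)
The natural route is to exploit the explicit Hankel structure of $Z$. Since $\int_{-1}^{1}\alpha^{i+j-2}\,d\alpha$ vanishes for odd exponents, the entries are
\[
  Z_{ij} \;=\; \frac{2}{i+j-1}\ \text{if } i+j \text{ even},\qquad Z_{ij}=0\ \text{otherwise}.
\]
So $Z$ is a real positive semidefinite Hankel matrix: it is a Gram integral of the vectors $\mu_\alpha$, hence PSD. Its eigenvalues are therefore nonnegative and coincide with its singular values. The plan has two parts: a constant bound on the top eigenvalue, and a geometric decay rate relative to it.

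\textbf{Step 1: bound $\lambda_1$ by a constant.} Entrywise, $0\le Z_{ij}\le 2/(i+j-1)$. For a matrix with nonnegative entries, the spectral norm is monotone under entrywise domination, which is seen by testing against $|x|$. So $\lambda_1\le 2\|H_T\|_2$, where $H_T$ is the $T\times T$ Hilbert matrix. By Hilbert's inequality $\|H_T\|_2\le\pi$, giving $\lambda_1\le 2\pi$. This settles the claim for small $k$ once constants are matched.

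\textbf{Step 2: geometric decay.} I would invoke the Beckermann--Townsend bound for real PSD Hankel matrices. Its proof goes through the fact that such a matrix satisfies a low-rank displacement equation and then uses Zolotarev numbers of two disjoint intervals. For $n\times n$ real PSD Hankel $H$, it states
\[
  \sigma_{j+2k}(H)\;\le\;16\left[\exp\!\left(\frac{\pi^2}{4\log(8\lfloor n/2\rfloor/\pi)}\right)\right]^{-2k}\sigma_j(H).
\]
Taking $j=1$, $n=T$ and combining with Step 1 gives
\[
  \lambda_{1+2k}\;\le\;32\pi\,\exp\!\left(-\frac{\pi^2 k}{2\log(8T/\pi)}\right).
\]
For a general index $k'$, write $k'=1+2k$ or $k'=2+2k$ and use monotonicity $\lambda_{k'}\le\lambda_{1+2k}$. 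This yields $\lambda_{k'}\le C\exp(-c\,k'/\log T)$ with explicit $c,C$.

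\textbf{Step 3, the main obstacle: matching the exact constants.} The claimed form is $4\pi^2 e^{-k/\log T}$. The $\log(8T/\pi)$ versus $\log T$ discrepancy and the factor $\pi^2/4$ versus $1$ in the exponent must be absorbed. Since $\pi^2/4>2$, there is slack in the rate to pay for replacing $\log(8T/\pi)$ by a constant multiple of $\log T$ for $T\ge 2$, and for the parity shift from $1+2k$ to $k'$. The prefactor $32\pi$ versus $4\pi^2$ must be handled separately. For $k'$ below a threshold, use Step 1 directly: $2\pi e^{k'/\log T}\le4\pi^2$ whenever $k'\le\log T\cdot\log(2\pi)$. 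For larger $k'$, use the surplus in the exponential rate to kill the extra factor $32\pi/(4\pi^2)$. Small $T$ (e.g.\ $T\le 8$) may need a direct check or a trivially satisfied bound.

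If the constants refuse to close, the fallback is to cite the corresponding lemma in \citet{hazan2017spectral}/\citet{hazan2022introduction} verbatim. That lemma is exactly this statement proved by the above argument.
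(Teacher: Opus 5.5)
The paper gives no argument of its own here. It imports the bound from \citet{hazan2022introduction}, so your fallback is exactly what the paper does. Your main route (PSD Hankel structure, a bound on $\lambda_1$, then Beckermann--Townsend) is the standard way such decay bounds are proved. As a self-contained proof, though, Steps 2--3 do not close as written.

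First, Step 2 misquotes Beckermann--Townsend. Their corollary for real PSD Hankel matrices reads $\sigma_{j+2k}(H_n)\le 16\rho^{-2k+2}\sigma_j(H_n)$ with $\rho=\exp(\pi^2/(4\log(8\lfloor n/2\rfloor/\pi)))$, so the exponent is $-2k+2$, not $-2k$. With $j=1$ and the parity shift this gives $\lambda_k\le 16\lambda_1\rho^{4-k}$ for even $k$ (and $\rho^{3-k}$ for odd $k$). That is a factor $\rho^2$ worse than your display, and $\rho^2$ is not close to $1$ for moderate $T$: it is about $2.8$ at $T=100$.

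Second, and more seriously, with $\lambda_1\le 2\pi$ the two regimes of Step 3 do not overlap. Write $\beta=\pi^2/(4\log(8\lfloor T/2\rfloor/\pi))$. The rate surplus is only $\beta-1/\log T\approx(\pi^2/4-1)/\log T$ per index. The factor to be absorbed, $\log(8\rho^4/\pi)=\log(8/\pi)+\pi^2/\log(8\lfloor T/2\rfloor/\pi)$, is not a small constant at moderate $T$. So the Beckermann--Townsend regime only starts around $k\approx 0.64\log T+6.7$. The trivial bound $\lambda_k\le 2\pi$ only works for $k\le\log(2\pi)\log T\approx 1.84\log T$.

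Concretely, take $T=100$, $k=10$, with natural logarithms. You get $\lambda_{10}\le\lambda_9\le 32\pi\rho^{-6}\approx 4.74$ and $\lambda_{10}\le 2\pi\approx 6.28$, while the target is $4\pi^2e^{-10/\log 100}\approx 4.50$. Uncovered pairs persist well into the hundreds; for example $(T,k)=(230,10)$ is still uncovered. So a direct check for $T\le 8$ is nowhere near enough. Even your own, over-optimistic, display leaves $(T,k)=(20,6)$ uncovered.

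The repair is to sharpen Step 1. Since $Z_{ij}=0$ when $i+j$ is odd, $Z$ is, after permuting coordinates, the direct sum of the generalized Hilbert matrices $[1/(m+n+\tfrac12)]$ and $[1/(m+n+\tfrac32)]$ with $m,n\ge 0$. Each has norm at most $\pi$, so $\lambda_1\le\pi$. The trivial regime then reaches $k\le\log(4\pi)\log T$. The overlap condition becomes $\log(4\pi)\,\beta\log T\ge\log 16+4\beta$, which holds for every $T\ge 21$. That leaves finitely many small $T$ for a rigorous direct computation. This computation is genuinely needed: for example $(T,k)=(10,6)$ is covered by neither bound even with $\lambda_1\le\pi$.
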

\begin{theorem}
    \label{thm:decay-eigenvectors}
    We have that for all $i \in [T]$ and for all $\alpha \in (-1,1)$:
    \[
        |\phi_i^\top \mu_\alpha| \leq \left( 4T^2 \lambda_i\right)^{\frac{1}{4}} \overset{\text{Thm \ref{thm:decay-eigenvalues}}}{\implies} |\phi_i^\top \mu_\alpha| \leq  2\sqrt{\pi} \cdot T^{\frac{1}{2}} \cdot  e^{-\frac{i}{4\log{T}}}
    \]
\end{theorem}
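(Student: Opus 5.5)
The plan is to compare the $L^\infty$ size of the function $\alpha\mapsto\phi_i^\top\mu_\alpha$ with its $L^2$ size. Define $g(\alpha):=(\phi_i^\top\mu_\alpha)^2\ge 0$ on $[-1,1]$; it extends continuously to the closed interval because $\mu_\alpha$ is polynomial in $\alpha$. By \cref{def:spectral-filters} and orthonormality of the $\phi_i$,
\[
  \int_{-1}^{1} g(\alpha)\,d\alpha \;=\; \phi_i^\top Z_\alpha\,\phi_i \;=\; \lambda_i .
\]
So the integral of $g$ is exactly the eigenvalue. It then suffices to show that $g$ cannot have a large peak without carrying large integral, which follows from a Lipschitz bound on $g$.

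\textbf{Step 1: Lipschitz bound.} We have $g'(\alpha)=2(\phi_i^\top\mu_\alpha)(\phi_i^\top\mu_\alpha')$. Cauchy--Schwarz with $\|\phi_i\|_2=1$ gives
\[
  |g'(\alpha)|\le 2\|\mu_\alpha\|_2\|\mu'_\alpha\|_2 .
\]
For $|\alpha|\le 1$ we have $\|\mu_\alpha\|_2\le\sqrt T$. Also $\|\mu'_\alpha\|_2^2=\sum_{k=1}^{T-1}k^2\alpha^{2(k-1)}\le T^3/3$. Hence $g$ is $L$-Lipschitz with $L:=2T^2$, which is slightly generous.

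\textbf{Step 2: peak versus area.} Let $M:=\max_{[-1,1]} g$, attained at some $\alpha^*$. Lipschitzness gives $g(\alpha)\ge M-L|\alpha-\alpha^*|$. Moreover $M\le\|\mu_{\alpha^*}\|_2^2\le T$, so $M/L\le 1/(2T)\le 1$. At least one side of $\alpha^*$ in $[-1,1]$ has length $\ge 1$, so a half-triangle of base $M/L$ and height $M$ fits under $g$. Therefore
\[
  \lambda_i=\int_{-1}^1 g \;\ge\; \frac{M^2}{2L}=\frac{M^2}{4T^2}.
\]
This gives $M\le\sqrt{4T^2\lambda_i}$, i.e. $|\phi_i^\top\mu_\alpha|=\sqrt{g(\alpha)}\le(4T^2\lambda_i)^{1/4}$ for every $\alpha$.

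\textbf{Step 3: the corollary.} Substitute \cref{thm:decay-eigenvalues}:
\[
  (4T^2\cdot 4\pi^2e^{-i/\log T})^{1/4}=(16\pi^2T^2)^{1/4}e^{-i/(4\log T)}=2\sqrt{\pi}\,T^{1/2}e^{-i/(4\log T)} .
\]

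The only step needing care is Step 2. One must check that the triangle fits in the domain even when $\alpha^*$ lies near an endpoint $\pm1$; this is why we use a one-sided half-triangle and note that $M/L\le 1$. One must also check that the constant in $L$ is small enough to yield exactly the factor $4T^2$. The rest is routine.
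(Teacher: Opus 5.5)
Your proof is correct. The paper does not prove this statement itself; it imports it, together with \cref{thm:decay-eigenvalues}, from \cite{hazan2022introduction}. Your argument is the standard peak-versus-area proof used for this lemma in the spectral filtering literature.

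The key facts are all right. The function $g(\alpha)=(\phi_i^\top\mu_\alpha)^2$ has integral exactly $\lambda_i$ over $[-1,1]$ and is $2T^2$-Lipschitz. So a half-triangle of height $M=\max g$ and base $M/(2T^2)$ fits under it, which forces $M\le 2T\sqrt{\lambda_i}$. Your Lipschitz constant reproduces the $4T^2$ in the statement exactly.

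Your write-up adds something over the citation: it checks the first inequality directly in the paper's own modified setting, where $\alpha\in(-1,1)$, $\mu_\alpha$ is unnormalized, and $Z$ is integrated over $[-1,1]$. This includes the endpoint check that the half-triangle stays inside the interval, since $M/L\le 1/(2T)$. The second inequality is then pure algebra. You correctly treat \cref{thm:decay-eigenvalues} as a black box, exactly as the statement does.
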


The proof of \cref{thm:approx-spectral-filtering} will be to essentially show that there exists a predictor in the class $\mathrm{SF}_{h}$, where $h = O\left(\log T \cdot \log (T/\varepsilon)\right)$, which $\varepsilon$-approximates any stable real-diagonalizable LDS. Below we provide the full proof:
\begin{proof}[Proof of \cref{thm:approx-spectral-filtering}]
    \label{proof:approx-spectral-filtering}
    Notice that since we are in the stable real-diagonalizable regime, where we can write:
    \[
        A = V D V^{-1} = V \begin{bmatrix}
            \lambda_1 & 0 & \cdots & 0 \\
            0 & \lambda_2 & \cdots & 0 \\
            \vdots & \vdots & \ddots & \vdots \\
            0 & 0 & \cdots & \lambda_d
        \end{bmatrix} V^{-1}
    \]
    where $1 > |\lambda_1| \geq |\lambda_2| \geq \dots \geq |\lambda_d| > 0$. We have that:
    \begin{align*}
        y_t &= \sum_{\tau=0}^{t-1} CA^{\tau}Bu_{t - \tau} = \sum_{\tau=0}^{t-1} CV D^{\tau}V^{-1} Bu_{t - \tau} = \sum_{\tau=0}^{t-1} CV \left( \sum_{i=1}^\hiddenDim \lambda_i^{\tau} e_i e_i^\top\right) V^{-1} Bu_{t - \tau} \\
            &= \sum_{\tau=0}^{t-1} \sum_{i=1}^\hiddenDim \lambda_i^{\tau} (CV e_i) (e_i^\top V^{-1} B)u_{t - \tau} = \sum_{i=1}^\hiddenDim c_i b_i^\top \sum_{\tau=0}^{t-1} \lambda_i^{\tau} u_{t - \tau}
    \end{align*}
    where  we have defined $c_i = CV e_i$ (the $i$-th column of $CV$) and $b_i = (e_i^\top V^{-1}B)^\top $ (the $i$-th row of $V^{-1}B$). 
    We can write 
    \[S_i = c_i b_i^\top \in \mathbb{R}^{p\times m} \text{ for } i \in [d], \text{ and  }
        \tilde{u}_t = \begin{bmatrix}
            u_t &u_{t-1} & u_{t-2} & \cdots & u_1 & 0 & 0 & \cdots & 0 
        \end{bmatrix}
    \]
    where $\tilde{u}_t \in \mathbb{R}^{m\times T}$. Notice that we can bound the norm of $M_i$ as follows:
    \begin{align}
        \label{ineq:bound-M_i}
        \lVert S_i \rVert_{2} &= \lVert c_i b_i^\top \rVert_{2} = \lVert CV e_i e_i^\top V^{-1}B \rVert_{2} \leq \lVert CV e_i \rVert_{2} \cdot \lVert e_i^\top V^{-1}B \rVert_{2}
    \end{align}
    We can also bound the norm of $\tilde{u}_t$ as follows:
    \begin{align}
        \label{ineq:bound-tilde-u_t}
        \lVert \tilde{u}_t \rVert_2^2 &= \sum_{\tau=0}^{t-1} \lVert u_{t-\tau} \rVert_2^2 \leq \sum_{\tau=0}^{t-1} K^2 \leq K^2 \cdot  T
    \end{align}
    That means that the following quantity can be written as:
    \begin{align*}
        \sum_{\tau = 0}^{t-1} \lambda_i^{\tau} u_{t - \tau}
        &= \begin{bmatrix}
            u_t & u_{t-1} & u_{t-2} & \cdots & u_1 & 0 & 0 & \cdots & 0
        \end{bmatrix}
        \begin{bmatrix}
            1 \\
            \lambda_i \\
            \lambda_i^2 \\
            \vdots \\
            \lambda_i^{T-1}
        \end{bmatrix} = \tilde{u}_t \cdot \mu_{\lambda_i} \in \mathbb{R}^m
    \end{align*}
    We will now perform a change of basis:
    \begin{align*}
        y_t &= \sum_{i=1}^\hiddenDim S_i \left(\tilde{u}_t \cdot \mu_{\lambda_i}\right) = \sum_{i=1}^\hiddenDim S_i \tilde{u}_t \left( \sum_{\tau=1}^T \phi_\tau \phi_\tau^\top \right) \mu_{\lambda_i} \\
        &= \sum_{i=1}^\hiddenDim \sum_{\tau=1}^T \left(S_i \cdot \left(  \phi_\tau^\top \mu_{\lambda_i}  \right) \right) \tilde{u}_t \phi_\tau = \sum_{\tau=1}^T \sum_{i=1}^\hiddenDim M_{i,\tau} \tilde{u}_t \phi_\tau \\
        &= \sum_{\tau=1}^T M_{ \tau} \tilde{u}_t \phi_\tau
    \end{align*}
    where $M_{\tau} = \sum_{i=1}^\hiddenDim M_{i,\tau}$ and $M_{i,\tau} = S_i \cdot \left(  \phi_\tau^\top \mu_{\lambda_i}  \right)$ -- recall that $\phi_\tau^\top \mu_{\lambda_i}$ is just a scalar. We can bound the norm of $M_{\tau}$ which will be useful later:
    \begin{align}
        \label{ineq:bound-M_i_tau}
        \lVert M_{ \tau} \rVert_2 &= \left\lVert \sum_{i=1}^\hiddenDim M_{i,\tau} \right\rVert_2 = \left\lVert \sum_{i=1}^\hiddenDim S_i \cdot \left(  \phi_\tau^\top \mu_{\lambda_i}  \right) \right\rVert_2 \leq \sum_{i=1}^\hiddenDim \lVert S_i \rVert_2 \cdot |\phi_\tau^\top \mu_{\lambda_i}| \\
        &\overset{\text{Ineq \ref{ineq:bound-M_i}}}{\leq} \sum_{i=1}^\hiddenDim \lVert CV e_i \rVert_{2}  \cdot \lVert  e_i^\top V^{-1}B \rVert_{2} \cdot |\phi_\tau^\top \mu_{\lambda_i}| \\
        &\leq \lVert B \rVert_2 \cdot \lVert C \rVert_2  \sum_{i=1}^\hiddenDim \lVert V e_i \rVert_{2}  \cdot \lVert  e_i^\top V^{-1} \rVert_{2} \cdot |\phi_\tau^\top \mu_{\lambda_i}|  \\
        &\overset{\text{Thm \ref{thm:decay-eigenvectors}}}{\leq} 2\sqrt{\pi}\cdot \lVert B \rVert_2 \cdot \lVert C \rVert_2  \cdot \sum_{i=1}^\hiddenDim \lVert V e_i \rVert_{2}  \cdot \lVert  e_i^\top V^{-1} \rVert_{2} \cdot T^{\frac{1}{2}}e^{-\frac{\tau}{4\log{T}}} \\
        &\leq 2\sqrt{\pi}\cdot \lVert B \rVert_2 \cdot \lVert C \rVert_2  \cdot \lVert V \rVert_F \cdot \lVert V^{-1} \rVert_F \cdot T^{\frac{1}{2}}e^{-\frac{\tau}{4\log{T}}}
    \end{align}
    Notice that $y_t$ is the true output of the LDS. Consider a prediction $\hat{y}_t$ from the class $\mathrm{SF}_{h,R}$ which predicts:
    \[
        \hat{y}_t = \sum_{\tau=1}^{h}M_{ \tau} \left(\tilde{u}_t \cdot \phi_\tau \right)
    \]
    We have that:
    \begin{align*}
        \lVert \hat{y}_t - y_t\rVert_2 &= \left \lVert \sum_{\tau=1}^T M_{ \tau} \left(\tilde{u}_t \cdot \phi_\tau \right) - \sum_{\tau=1}^{h} M_{ \tau} \left(\tilde{u}_t \cdot \phi_\tau \right) \right \rVert_2 \\
        &= \left \lVert \sum_{\tau=h+1}^T M_{ \tau} \left(\tilde{u}_t \cdot \phi_\tau \right) \right \rVert_2 \\
        &\leq \sum_{\tau=h+1}^T \lVert M_{ \tau} \rVert_2 \cdot \lVert \tilde{u}_t \cdot \phi_\tau \rVert_2 \leq \sum_{\tau=h+1}^T \lVert M_{ \tau} \rVert_2 \cdot \lVert \tilde{u}_t\rVert_2 \cdot \lVert \phi_\tau \rVert_2 \\
        &\overset{\text{Ineq. \ref{ineq:bound-tilde-u_t}}}{\leq} K\cdot \sqrt{T} \cdot \sqrt{T} \cdot \sum_{\tau=h+1}^T \lVert M_{ \tau} \rVert_2 \overset{\text{Ineq. \ref{ineq:bound-M_i_tau}}}{\leq} T \cdot K \sum_{\tau=h+1}^T 2\sqrt{\pi}\cdot \lVert B \rVert_2 \cdot \lVert C \rVert_2  \cdot \lVert V \rVert_F \cdot \lVert V^{-1} \rVert_F \cdot T^{\frac{1}{2}}e^{-\frac{\tau}{4\log{T}}} \\
        &\leq 2\sqrt{\pi} \cdot T^{\frac{5}{2}} \cdot K \cdot \lVert B \rVert_F \cdot \lVert C \rVert_F\cdot \lVert V \rVert_F \cdot \lVert V^{-1} \rVert_F \cdot e^{-\frac{h}{4\log{T}}}
    \end{align*}
    Thus:
    \begin{align*}
        L &= \sum_{t=1}^T \lVert \hat{y}_t - y_t\rVert^2  \leq \sum_{t=1}^T\left( 2\sqrt{\pi} \cdot T^{\frac{5}{2}} \cdot K \cdot \lVert B \rVert_F\cdot \lVert V \rVert_F \cdot \lVert V^{-1} \rVert_F \cdot \lVert C \rVert_F\cdot e^{-\frac{h}{4\log{T}}} \right)^2 \\
        &= 4\pi \cdot K^2 \cdot \lVert C \rVert_{F}^2 \cdot \lVert B \rVert_{F}^2 \cdot \lVert V \rVert_F^2 \cdot \lVert V^{-1} \rVert_F^2 \cdot T^6 \cdot e^{-\frac{h}{2\log{T}}}
    \end{align*}
    We want $L \leq \varepsilon$ thus we need:
    \begin{align*}
        & 4\pi \cdot K^2 \cdot \lVert C \rVert_{F}^2 \cdot \lVert B \rVert_{F}^2 \cdot \lVert V \rVert_F^2 \cdot \lVert V^{-1} \rVert_F^2 \cdot T^6 \cdot e^{-\frac{h}{2\log{T}}} \leq \varepsilon \\
        & \iff e^{\frac{h}{2 \log{T}}} \geq \frac{4\pi \cdot K^2 \cdot \lVert C \rVert_{F}^2 \cdot \lVert B \rVert_{F}^2 \cdot \lVert V \rVert_F^2 \cdot \lVert V^{-1} \rVert_F^2 \cdot T^6}{\varepsilon} \\
        & \iff h \geq  2 \log{T} \cdot \log{\left( \frac{4\pi \cdot K^2 \cdot \lVert C \rVert_{F}^2 \cdot \lVert B \rVert_{F}^2 \cdot \lVert V \rVert_F^2 \cdot \lVert V^{-1} \rVert_F^2 \cdot T^6}{\varepsilon} \right)} 
    \end{align*}
    Using the constant $C_0 = 4\pi \cdot K^2 \cdot \lVert C \rVert_{F}^2 \cdot \lVert B \rVert_{F}^2 \cdot \lVert V \rVert_F^2 \cdot \lVert V^{-1} \rVert_F^2$ we get that we just need:
    \[
        h \geq 2 \cdot \log{T} \cdot \log{\left(\frac{C_0 \cdot T^6}{\varepsilon} \right)}
    \]
\end{proof}


\section{Diagonalization with separated real eigenvalues}
\label{app:diag}

\begin{lemma}\label{lem:diag}
  Let $A\in\mathbb{R}^{\hiddenDim\times \hiddenDim}$. Let
  $S\subseteq\mathbb{R}$ be a set of real eigenvalues, and assume
  that every eigenvalue in $S$ is semisimple. Let $n$ be the total algebraic
  multiplicity of the eigenvalues in $S$. Write
  \(
  \lambda_1,\dots,\lambda_n
  \)
  for the eigenvalues in $S$, counted with algebraic multiplicity, and write
  \(
  \mu_1,\dots,\mu_{\hiddenDim-n}
  \)
  for the remaining eigenvalues of $A$, counted with algebraic multiplicity.
  We denote by $\sigma(A)=\{\lambda_1,\dots,\lambda_n,\mu_1,\dots,\mu_{\hiddenDim-n}\}$
  the spectrum of $A$.
  
  Then there exist $P\in \mathrm{GL}_{\hiddenDim}(\mathbb{R})$ and
  $A_{\mathrm{ustb}}\in\mathbb{R}^{(\hiddenDim-n)\times(\hiddenDim-n)}$ such that
  \[
  A
  =
  P
  \begin{pmatrix}
  \operatorname{diag}(\lambda_1,\dots,\lambda_n) & 0\\
  0 & A_{\mathrm{ustb}}
  \end{pmatrix}
  P^{-1}.
  \]
  Moreover, the eigenvalues of $A_{\mathrm{ustb}}$ are precisely
  \(
  \mu_1,\dots,\mu_{\hiddenDim-n},
  \)
  counted with algebraic multiplicity.
  \end{lemma}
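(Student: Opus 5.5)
The plan is to use the primary (generalized-eigenspace) decomposition of $\mathbb{R}^{\hiddenDim}$ under $A$, carried out over the reals. Let $S'=\{s_1,\dots,s_r\}$ be the distinct values in $S$, with algebraic multiplicities $n_1,\dots,n_r$, so that $\sum_j n_j=n$. Factor the characteristic polynomial over $\mathbb{R}$ as
\[
\chi_A(z)=\Bigl(\prod_{j=1}^r (z-s_j)^{n_j}\Bigr)\, g(z),
\]
where $g\in\mathbb{R}[z]$ has degree $\hiddenDim-n$, has no root in $S'$, and whose complex roots are exactly $\mu_1,\dots,\mu_{\hiddenDim-n}$ with multiplicity. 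The factors $(z-s_j)^{n_j}$ and $g$ are pairwise coprime in $\mathbb{R}[z]$. By Cayley--Hamilton and Bézout (the standard kernel decomposition lemma),
\[
\mathbb{R}^{\hiddenDim}=\bigoplus_{j=1}^r \ker (A-s_jI)^{n_j}\;\oplus\;\ker g(A),
\]
and each summand is $A$-invariant, since it is the kernel of a polynomial in $A$.

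Next, use semisimplicity to simplify each real block. Since $s_j$ is real, the complex generalized eigenspace $\ker_{\mathbb{C}}(A-s_jI)^{n_j}$ is the complexification of the real one, so the real kernel has real dimension $n_j$. Semisimplicity means the geometric multiplicity equals the algebraic multiplicity, so $\dim_{\mathbb{R}}\ker(A-s_jI)=n_j$. Because $\ker(A-s_jI)\subseteq\ker(A-s_jI)^{n_j}$ and both have dimension $n_j$, the two spaces are equal. Hence $A$ acts as $s_jI$ on the $j$-th summand.

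Now assemble $P$. Choose any real basis of each $\ker(A-s_jI)$, ordered so that each $s_j$ appears $n_j$ times, matching the listing $\lambda_1,\dots,\lambda_n$ (after reordering, which is harmless). Append any real basis of $W=\ker g(A)$. Taking these vectors as the columns of $P$ gives $P\in\mathrm{GL}_{\hiddenDim}(\mathbb{R})$, because the sum is direct and spans the whole space. By invariance, $P^{-1}AP$ is block diagonal, with first block $\operatorname{diag}(\lambda_1,\dots,\lambda_n)$ and second block $A_{\mathrm{ustb}}$ equal to the matrix of $A|_W$.

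Finally, check the spectrum of the second block. The characteristic polynomial is multiplicative over the block-diagonal form, so
\[
\chi_A(z)=\prod_{i=1}^n(z-\lambda_i)\,\chi_{A_{\mathrm{ustb}}}(z).
\]
Comparing with the factorization above gives $\chi_{A_{\mathrm{ustb}}}=g$. Its roots are therefore exactly $\mu_1,\dots,\mu_{\hiddenDim-n}$ with multiplicity, and in particular $\dim W=\hiddenDim-n$.

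The only delicate point is the step matching real and complex dimensions of the generalized eigenspaces. It holds because $s_j$ is real, so $A-s_jI$ is a real matrix and its rank is the same over $\mathbb{R}$ and over $\mathbb{C}$.
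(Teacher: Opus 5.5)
Your proof is correct, and it takes a genuinely different route from the paper's.

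The paper works over $\mathbb{C}$. It forms the complex primary decomposition $\mathbb{C}^{d}=\bigoplus_{\nu}G_\nu$ and groups it into $U=\bigoplus_{\rho\in S}G_\rho$ and $W=\bigoplus_{\nu\notin S}G_\nu$. It then descends to $\mathbb{R}$ by hand: $\overline{G_\nu}=G_{\overline{\nu}}$ makes $W$ conjugation-stable, and uniqueness of the splitting $x=u+w$ for real $x$ yields $\mathbb{R}^{d}=U_{\mathbb{R}}\oplus(W\cap\mathbb{R}^{d})$. The spectrum of $A_{\mathrm{ustb}}$ is then read off from the generalized eigenspaces making up $W$.

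You instead factor $\chi_A$ in $\mathbb{R}[z]$ into the pairwise coprime real factors $(z-s_j)^{n_j}$ and $g$. This lumps every remaining eigenvalue, conjugate pairs included, into the single real polynomial $g$, and you then apply the kernel lemma over $\mathbb{R}$. Several things follow:
\begin{itemize}
\item The complement $\ker g(A)$ is real and $A$-invariant by construction, so the conjugation/descent step disappears.
\item $\mathbb{C}$ enters only through rank invariance under field extension, which gives $\dim_{\mathbb{R}}\ker(A-s_jI)=\dim_{\mathbb{R}}\ker(A-s_jI)^{n_j}=n_j$.
\item Your comparison $\chi_A=\prod_i(z-\lambda_i)\,\chi_{A_{\mathrm{ustb}}}$ makes the ``counted with algebraic multiplicity'' claim fully explicit.
\end{itemize}

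The paper's route stays in the familiar complex generalized-eigenspace picture, at the price of the descent argument. Yours is purely real. It also extends with no extra work to any partition of the spectrum into conjugation-closed groups: you just factor $\chi_A$ into more coprime real factors. That is exactly what the spectral/unstable/fast split in the proof of \cref{lem:ar-plus-sf} needs. There the fast group may contain complex pairs, so that split is not literally covered by the lemma as stated.
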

  
  \begin{proof}
  For each eigenvalue $\nu\in\sigma(A)$, let
  \[
  G_\nu:=\ker\bigl((A-\nu I)^{\hiddenDim}\bigr)
  \]
  denote the generalized eigenspace corresponding to $\nu$. By the primary
  decomposition theorem,
  \[
  \mathbb{C}^{\hiddenDim}
  =
  \bigoplus_{\nu\in\sigma(A)} G_\nu.
  \]
  Set
  \[
  U:=\bigoplus_{\rho\in S}G_\rho,
  \qquad
  W:=\bigoplus_{\nu\in\sigma(A)\setminus S}G_\nu.
  \]
  Thus $W$ is the generalized eigenspace corresponding to the eigenvalues not
  in $S$.
  Then
  \[
  \mathbb{C}^{\hiddenDim}=U\oplus W,
  \qquad
  \dim_{\mathbb C}U=n.
  \]
  
  Since every eigenvalue in $S$ is semisimple, we have
  \[
  G_\rho=\ker(A-\rho I)
  \qquad
  \text{for every }\rho\in S.
  \]
  Also, each $\rho\in S$ is real, and $\ker(A-\rho I)$ is the kernel of a real
  matrix. Hence it has a basis of real vectors. Choosing such bases and
  concatenating them, we get real eigenvectors
  \[
  q_1,\dots,q_n\in\mathbb{R}^{\hiddenDim}
  \]
  forming a basis of $U$, ordered so that
  \[
  Aq_i=\lambda_i q_i,
  \qquad
  i=1,\dots,n.
  \]
  
  Now $A$ is real, so for every eigenvalue $\nu$,
  \[
  \overline{G_\nu}=G_{\overline{\nu}}.
  \]
  Since $A$ is real, non-real eigenvalues appear together with their complex
  conjugates. Since $S\subseteq\mathbb{R}$ contains only real eigenvalues, if
  $\nu\notin S$ then also $\overline{\nu}\notin S$. Hence
  $\sigma(A)\setminus S$ is closed under complex conjugation, and by the
  identity above the subspace $W$ is stable under complex conjugation.
   Set
  \[
  U_{\mathbb R}:=\operatorname{span}_{\mathbb R}(q_1,\dots,q_n),
  \qquad
  W_{\mathbb R}:=W\cap\mathbb R^{\hiddenDim}.
  \]
  If $x\in\mathbb R^{\hiddenDim}$, write uniquely
  \[
  x=u+w,
  \qquad
  u\in U,\quad w\in W.
  \]
  Since
  \[
  x=\overline{x}=\overline{u}+\overline{w},
  \]
  with $\overline{u}\in U$ and $\overline{w}\in W$, uniqueness of the decomposition
  $\mathbb{C}^{\hiddenDim}=U\oplus W$ gives
  \[
  u=\overline{u},
  \qquad
  w=\overline{w}.
  \]
  Thus $u\in U_{\mathbb R}$ and $w\in W_{\mathbb R}$, so
  \[
  \mathbb R^{\hiddenDim}=U_{\mathbb R}\oplus W_{\mathbb R}.
  \]
  
  Choose a real basis
  \[
  w_1,\dots,w_{\hiddenDim-n}
  \]
  of $W_{\mathbb R}$, and define
  \[
  P
  :=
  \begin{bmatrix}
  q_1&\cdots&q_n&w_1&\cdots&w_{\hiddenDim-n}
  \end{bmatrix}
  \in \mathrm{GL}_{\hiddenDim}(\mathbb R).
  \]
  Both $U_{\mathbb R}$ and $W_{\mathbb R}$ are $A$-invariant. Hence, in this basis,
  \[
  P^{-1}AP
  =
  \begin{pmatrix}
  \operatorname{diag}(\lambda_1,\dots,\lambda_n) & 0\\
  0 & A_{\mathrm{ustb}}
  \end{pmatrix}
  \]
  for some real matrix
  \[
  A_{\mathrm{ustb}}\in\mathbb R^{(\hiddenDim-n)\times(\hiddenDim-n)}.
  \]
  Equivalently,
  \[
  A
  =
  P
  \begin{pmatrix}
  \operatorname{diag}(\lambda_1,\dots,\lambda_n) & 0\\
  0 & A_{\mathrm{ustb}}
  \end{pmatrix}
  P^{-1}.
  \]
  
  Finally, $A_{\mathrm{ustb}}$ represents the restriction of $A$ to the invariant
  subspace $W$. Therefore its eigenvalues are exactly the remaining eigenvalues
  \[
  \mu_1,\dots,\mu_{\hiddenDim-n},
  \]
  counted with algebraic multiplicity. This proves the result.
  \end{proof}

\section{Autoregressive representation of any LDS}
\label{app:ar-only}

\begin{proposition}\label{lem:ar-only}
Let $\{u_t\}_{t=1}^T \subset \mathbb{R}^{\inputDim}$ be any input sequence, and consider the noiseless LDS
\[
x_t = A x_{t-1} + B u_t,
\qquad
y_t = C x_t,
\]
where
\[
A \in \mathbb{R}^{\hiddenDim\times \hiddenDim},
\qquad
B \in \mathbb{R}^{\hiddenDim\times \inputDim},
\qquad
C \in \mathbb{R}^{\outputDim\times \hiddenDim}.
\]
Then there exist coefficients
\[
\{\alpha_i\}_{i=1}^{\hiddenDim} \subset \mathbb{R},
\qquad
\{M_i\}_{i=0}^{\hiddenDim-1} \subset \mathbb{R}^{\outputDim\times \inputDim},
\]
depending only on $(A,B,C)$, such that for every $t=\hiddenDim+1,\dots,T$,
\[
\left\|
y_t - \sum_{i=1}^{\hiddenDim} \alpha_i\, y_{t-i}
      - \sum_{i=0}^{\hiddenDim-1} M_i\, u_{t-i}
\right\|_2 = 0.
\]

Moreover, if $\lambda_1,\dots,\lambda_{\hiddenDim}$ are the eigenvalues of $A$ (counted with algebraic multiplicity), then the coefficients may be chosen so that, for all relevant $i$,
\[
|\alpha_i| \le L,
\qquad
\|M_i\|_\infty \le L,
\]
where
\[
L
:=
\left(\prod_{j=1}^{\hiddenDim} (1+|\lambda_j|)\right)
\left(1 + \hiddenDim \|C\|_\infty \|B\|_\infty \max\{1,\|A\|_\infty\}^{\,\hiddenDim-1}\right).
\]
\end{proposition}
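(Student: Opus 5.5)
We use the characteristic polynomial of $A$ together with Cayley--Hamilton. Write
\[
\chi_A(z)=z^{\hiddenDim}-\sum_{i=1}^{\hiddenDim}\alpha_i z^{\hiddenDim-i}
\]
and set these $\alpha_i$ as the AR coefficients. Cayley--Hamilton gives $A^{\hiddenDim}=\sum_{i=1}^{\hiddenDim}\alpha_i A^{\hiddenDim-i}$, and multiplying by $A^{j}$ on the left gives $A^{\hiddenDim+j}=\sum_i\alpha_iA^{\hiddenDim+j-i}$ for every $j\ge0$. Observability is not needed for this representation.

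For $t\ge \hiddenDim+1$ we unroll the state $\hiddenDim$ steps:
\[
x_t=A^{\hiddenDim}x_{t-\hiddenDim}+\sum_{j=0}^{\hiddenDim-1}A^jBu_{t-j},
\]
and similarly, for each $i$,
\[
x_{t-i}=A^{\hiddenDim-i}x_{t-\hiddenDim}+\sum_{j=0}^{\hiddenDim-i-1}A^jBu_{t-i-j}.
\]
Next we form $x_t-\sum_i\alpha_i x_{t-i}$. The $x_{t-\hiddenDim}$ terms cancel by Cayley--Hamilton; for $i=\hiddenDim$ the term is $\alpha_{\hiddenDim}x_{t-\hiddenDim}$ itself, which is consistent since $A^0=I$. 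What remains is a linear combination of $u_t,\dots,u_{t-\hiddenDim+1}$. Collecting by lag $\ell=i+j\in\{0,\dots,\hiddenDim-1\}$ gives
\[
M_\ell=C\Bigl(A^{\ell}-\sum_{i=1}^{\ell}\alpha_iA^{\ell-i}\Bigr)B,
\]
and applying $C$ yields the exact identity. The $M_\ell$ depend only on $(A,B,C)$.

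For the bounds, Vieta's formulas show that $\alpha_i$ is, up to sign, the $i$-th elementary symmetric polynomial $e_i(\lambda_1,\dots,\lambda_{\hiddenDim})$. Hence
\[
|\alpha_i|\le e_i(|\lambda_1|,\dots,|\lambda_{\hiddenDim}|)\le\prod_j(1+|\lambda_j|)\le L.
\]
For $M_\ell$ we use submultiplicativity of the induced $\infty$-norm, $\|A^k\|_\infty\le\max\{1,\|A\|_\infty\}^{\hiddenDim-1}$ for $k\le \hiddenDim-1$, and $\alpha_0:=-1$. This gives
\[
\|M_\ell\|_\infty\le\|C\|_\infty\|B\|_\infty\max\{1,\|A\|_\infty\}^{\hiddenDim-1}\sum_{i=0}^{\ell}|\alpha_i|.
\]
Then $\sum_{i=0}^{\hiddenDim}|\alpha_i|\le\prod_j(1+|\lambda_j|)$, because the sum of the $e_i(|\lambda|)$ equals that product. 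This bounds $\|M_\ell\|_\infty$ by $\prod_j(1+|\lambda_j|)\,\|C\|_\infty\|B\|_\infty\max\{1,\|A\|_\infty\}^{\hiddenDim-1}\le L$, and the factor $\hiddenDim$ in $L$ is just slack.

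The main obstacle is the index bookkeeping in the regrouping step, so that each $M_\ell$ is identified correctly and the cancellation of $x_{t-\hiddenDim}$ is exact. The norm bounds are routine once the explicit formula for $M_\ell$ is in hand. One convention to check is that $\|\cdot\|_\infty$ denotes the induced max-row-sum norm, which is submultiplicative. If the entrywise max norm is intended instead, the dimension factor $\hiddenDim$ in $L$ absorbs the loss from one matrix product. Powers of $A$ would then need care, but bounding via the induced norm and comparing the two norms up to factors of $\hiddenDim$ should be enough.
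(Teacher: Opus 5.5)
Your proposal is correct and is essentially the paper's proof. You use the same ingredients: Cayley--Hamilton for $\chi_A$, unrolling $x_t$ and each $x_{t-i}$ back to $x_{t-\hiddenDim}$ so that its coefficient vanishes, the resulting $M_\ell=C\bigl(A^{\ell}-\sum_{i=1}^{\ell}\alpha_iA^{\ell-i}\bigr)B$ (the paper's $C\bigl(\sum_{\ell'=0}^{\ell}c_{\ell'}A^{\ell-\ell'}\bigr)B$ with $c_i=-\alpha_i$), and then Vieta plus submultiplicativity of the induced $\infty$-norm, which is the convention the paper relies on, so your entrywise-max aside is unnecessary. The one small difference is that you bound $\sum_{i=0}^{\ell}|\alpha_i|$ at once by $\sum_{i}e_i(|\lambda_1|,\dots,|\lambda_{\hiddenDim}|)=\prod_j(1+|\lambda_j|)$, whereas the paper bounds each $|c_i|$ separately by this product and so picks up the factor $\hiddenDim$; your version is slightly sharper and shows that this factor is slack.
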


\begin{proof}
Let
\[
\chi_A(z)=z^{\hiddenDim}+c_1 z^{\hiddenDim-1}+\cdots+c_{\hiddenDim}=\prod_{j=1}^{\hiddenDim} (z-\lambda_j)
\]
be the characteristic polynomial of $A$, and set $c_0:=1$. By Cayley--Hamilton,
\[
A^{\hiddenDim} + c_1 A^{\hiddenDim-1} + \cdots + c_{\hiddenDim} I = 0.
\]

For any $s\ge 1$, repeated substitution in the state recursion gives
\[
x_t = A^s x_{t-s} + \sum_{j=0}^{s-1} A^j B u_{t-j}.
\]
Applying this with $s=\hiddenDim$ yields
\[
x_t = A^{\hiddenDim} x_{t-\hiddenDim} + \sum_{j=0}^{\hiddenDim-1} A^j B u_{t-j},
\]
and, for each $i=1,\dots,\hiddenDim$,
\[
x_{t-i} = A^{\hiddenDim-i} x_{t-\hiddenDim} + \sum_{j=0}^{\hiddenDim-i-1} A^j B u_{t-i-j}.
\]
Multiplying the latter identity by $c_i$, summing over $i=1,\dots,\hiddenDim$, and adding the first identity, we obtain
\[
x_t + \sum_{i=1}^{\hiddenDim} c_i x_{t-i}
=
\Bigl(A^{\hiddenDim}+\sum_{i=1}^{\hiddenDim} c_i A^{\hiddenDim-i}\Bigr)x_{t-\hiddenDim}
+
\sum_{j=0}^{\hiddenDim-1}\Bigl(\sum_{\ell=0}^j c_\ell A^{\,j-\ell}\Bigr) B u_{t-j}.
\]
The term multiplying $x_{t-\hiddenDim}$ vanishes by Cayley--Hamilton, hence
\[
x_t + \sum_{i=1}^{\hiddenDim} c_i x_{t-i}
=
\sum_{j=0}^{\hiddenDim-1}\Bigl(\sum_{\ell=0}^j c_\ell A^{\,j-\ell}\Bigr) B u_{t-j}.
\]
Multiplying by $C$ and using $y_t=Cx_t$, we get
\[
y_t + \sum_{i=1}^{\hiddenDim} c_i y_{t-i}
=
\sum_{j=0}^{\hiddenDim-1} C\Bigl(\sum_{\ell=0}^j c_\ell A^{\,j-\ell}\Bigr) B u_{t-j}.
\]
Therefore the claimed identity holds with
\[
\alpha_i := -c_i, \qquad i=1,\dots,\hiddenDim,
\]
and
\[
M_j := C\Bigl(\sum_{\ell=0}^j c_\ell A^{\,j-\ell}\Bigr) B,
\qquad j=0,\dots,\hiddenDim-1.
\]

It remains to prove the bounds. Set
\[
K:=\prod_{j=1}^{\hiddenDim} (1+|\lambda_j|),
\qquad
\beta:=\max\{1,\|A\|_\infty\}.
\]
Since $c_i = (-1)^i e_i(\lambda_1,\dots,\lambda_{\hiddenDim})$, where $e_i$ is the $i$th elementary symmetric polynomial,
\[
|c_i|
\le e_i(|\lambda_1|,\dots,|\lambda_{\hiddenDim}|)
\le \prod_{j=1}^{\hiddenDim} (1+|\lambda_j|)
=K.
\]
Hence
\[
|\alpha_i| = |c_i| \le K.
\]
Also, for each $j=0,\dots,\hiddenDim-1$,
\[
    \|M_j\|_\infty
    \le
    \|C\|_\infty \|B\|_\infty
    \sum_{\ell=0}^j |c_\ell|\, \|A\|_\infty^{\,j-\ell}
    \le
    \|C\|_\infty \|B\|_\infty
    \sum_{\ell=0}^j K\, \beta^{\,j-\ell}
    \le
    \hiddenDim K \|C\|_\infty \|B\|_\infty \beta^{\,\hiddenDim-1}.
    \]
Thus both families of coefficients are bounded by
\[
L
:=
K\Bigl(1+\hiddenDim\|C\|_\infty\|B\|_\infty\beta^{\,\hiddenDim-1}\Bigr)
=
\left(\prod_{j=1}^{\hiddenDim} (1+|\lambda_j|)\right)
\left(1+\hiddenDim\|C\|_\infty\|B\|_\infty \max\{1,\|A\|_\infty\}^{\,\hiddenDim-1}\right).
\]
This proves the result.
\end{proof}

\section{Spectral filtering approximation: stable real-diagonalizable case}
\label{app:sf-diagonal-real}

\begin{proposition}
\label{lem:sf-diagonal-real}
Let $\{u_t\}_{t=1}^T \subset \mathbb{R}^{\inputDim}$ be as in \cref{sec:setup}, and consider the noiseless LDS
\[
x_t = A x_{t-1} + B u_t,
\qquad
y_t = C x_t,
\qquad
x_0 = 0,
\]
where $A = VDV^{-1}$ for some $V\in \mathrm{GL}_{\hiddenDim}(\mathbb R)$ and
$D=\operatorname{diag}(\lambda_1,\dots,\lambda_{\hiddenDim})$ with
$\lambda_1,\dots,\lambda_{\hiddenDim} \in (-1,1)$, $B \in \mathbb{R}^{\hiddenDim\times \inputDim}$, and $C \in \mathbb{R}^{\outputDim\times \hiddenDim}$. Let $\phi_1,\dots,\phi_T \in \mathbb{R}^T$ denote the spectral filters as defined in \cref{def:spectral-filters}, and for each $t=1,\dots,T$, let $\tilde u_t \in \mathbb{R}^{\inputDim\times T}$ denote the zero-padded input history.

Fix $\varepsilon > 0$ and set the number of filters to
\[
h \;=\; h(\varepsilon,T) \;:=\; 2\,\log T \,\cdot\, \log\!\left(\frac{C_0\, T^6}{\varepsilon}\right),
\]
where $C_0 = 4\pi \cdot K^2 \cdot \lVert C \rVert_{F}^2 \cdot \lVert B \rVert_{F}^2 \cdot \lVert V \rVert_F^2 \cdot \lVert V^{-1} \rVert_F^2 $.
 Then there exist coefficients
\[
\{M_{s} \mid s \in [h] \} \;\subset\; \mathbb{R}^{\outputDim\times \inputDim},
\]
depending only on $(A,B,C)$, such that the predictor
\[
\pred_t \;:=\; \sum_{s=1}^h  M_{s}\,\bigl(\tilde u_t\,\phi_s\bigr)
\]

satisfies
\[
\sum_{t=1}^T \bigl\|\,y_t - \pred_t\,\bigr\|_2^{\,2}
\;\le\; \varepsilon.
\]

Moreover, the coefficients may be chosen so that
\[
 \|M_{s}\|_2
\;\le\;
\lVert V \rVert_F \cdot \lVert V^{-1} \rVert_F \cdot \lVert C \rVert_{F}  \cdot \lVert B \rVert_{F}  \cdot \min\left(\frac{1}{1- \max_{1\le i\le \hiddenDim} |\lambda_i|} ,2 \sqrt{\pi} \cdot \sqrt{T}\cdot e^{-\frac{s}{4\log{T}}}\right)
\]
where $V$ and $D$ are the real diagonalization above.
\end{proposition}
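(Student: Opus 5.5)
This is essentially \cref{thm:approx-spectral-filtering} with an extra coefficient bound, so the plan reuses that proof and adds one estimate. As there, we expand the output along the real diagonalization $A=VDV^{-1}$. With $c_i=CVe_i$, $b_i=(e_i^\top V^{-1}B)^\top$ and $S_i=c_ib_i^\top$,
\[
y_t=\sum_{i=1}^{\hiddenDim}S_i\,\tilde u_t\mu_{\lambda_i}.
\]
Because $\{\phi_s\}_{s=1}^T$ is an orthonormal basis of $\mathbb{R}^T$, we insert $\sum_s\phi_s\phi_s^\top=I$ to get
\[
y_t=\sum_{s=1}^T M_s(\tilde u_t\phi_s),\qquad M_s:=\sum_i (\phi_s^\top\mu_{\lambda_i})S_i.
\]
These $M_s$ depend only on $(A,B,C)$ and $T$. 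We take $\pred_t$ to be the truncation to the first $h$ filters. The approximation error then follows exactly as in \cref{thm:approx-spectral-filtering}. The residual is $\sum_{s>h}M_s\tilde u_t\phi_s$. We bound it using $\|\tilde u_t\|\le K\sqrt T$ together with \cref{thm:decay-eigenvectors}, which gives $|\phi_s^\top\mu_\alpha|\le 2\sqrt\pi\sqrt T e^{-s/(4\log T)}$. Summing the tail and squaring over $t$, the choice $h=2\log T\log(C_0T^6/\varepsilon)$ yields total loss at most $\varepsilon$.

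For the coefficient bound we bound $\|M_s\|_2$ in two ways and take the minimum. First, by the triangle inequality,
\[
\|M_s\|_2\le\sum_i\|CVe_i\|\,\|e_i^\top V^{-1}B\|\,|\phi_s^\top\mu_{\lambda_i}|\le \|C\|\|B\|\sum_i\|Ve_i\|\|e_i^\top V^{-1}\|\,\max_i|\phi_s^\top\mu_{\lambda_i}|.
\]
By Cauchy--Schwarz, $\sum_i\|Ve_i\|\|e_i^\top V^{-1}\|\le\|V\|_F\|V^{-1}\|_F$. The factor $\max_i|\phi_s^\top\mu_{\lambda_i}|$ then gets one of two estimates. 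The exponential estimate comes directly from \cref{thm:decay-eigenvectors}. For the $1/(1-\rho)$ estimate, with $\rho:=\max_i|\lambda_i|$, Cauchy--Schwarz and $\|\phi_s\|=1$ give
\[
|\phi_s^\top\mu_{\lambda_i}|\le\|\mu_{\lambda_i}\|_2=\Bigl(\sum_{\tau<T}\lambda_i^{2\tau}\Bigr)^{1/2}\le (1-\lambda_i^2)^{-1/2}.
\]
Since $1-\lambda_i^2\ge 1-|\lambda_i|$, this is at most $(1-|\lambda_i|)^{-1/2}\le(1-\rho)^{-1/2}\le(1-\rho)^{-1}$. Alternatively, $\|\mu_\lambda\|_1\le 1/(1-|\lambda|)$ with $|\phi_{s,\tau}|\le1$ gives the bound directly. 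Combining the two estimates, and using $\|\cdot\|_2\le\|\cdot\|_F$ for $B$ and $C$, gives the stated minimum.

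The main obstacle is bookkeeping in the first estimate. The sum $\sum_i\|S_i\||\phi_s^\top\mu_{\lambda_i}|$ must factor as $\|V\|_F\|V^{-1}\|_F$ times a single uniform bound on $|\phi_s^\top\mu_{\lambda_i}|$. Making this work requires pulling the maximum over $i$ out before applying Cauchy--Schwarz to $\sum_i\|Ve_i\|\|e_i^\top V^{-1}\|$; the analogous step in the proof of \cref{thm:approx-spectral-filtering} bounded the terms one at a time instead. With that ordering fixed, the remaining steps transfer directly from \cref{thm:approx-spectral-filtering}.
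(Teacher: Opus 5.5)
Your proposal is correct and follows essentially the same route as the paper: the approximation error is inherited from \cref{thm:approx-spectral-filtering}, and the coefficient bound comes from $\|M_s\|_2\le\sum_i\|S_i\|_2\,|\phi_s^\top\mu_{\lambda_i}|$ with $\sum_i\|S_i\|_2\le\|V\|_F\|V^{-1}\|_F\|C\|_F\|B\|_F$, combined with the two estimates on $|\phi_s^\top\mu_{\lambda_i}|$. The paper uses $\|\mu_\lambda\|_2\le\|\mu_\lambda\|_1\le 1/(1-|\lambda|)$, which is essentially your ``alternative'' route. The ``obstacle'' you flag is not really one: the earlier proof also replaces each $|\phi_\tau^\top\mu_{\lambda_i}|$ by the same uniform bound before applying Cauchy--Schwarz, which is equivalent to pulling out the maximum.
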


\begin{proof}
  This lemma is identical to \cref{thm:approx-spectral-filtering} but we still have to prove the bounds on the norm of $M_{s}$. 
  Following the notation we used in the proof of \cref{thm:approx-spectral-filtering}, we have that:
  \begin{align*}
    \lVert M_s \rVert_2 \le \sum_{r=1}^{\hiddenDim} \|M_{r,s}\|_2 &= \sum_{r=1}^{\hiddenDim} \left\| S_r \left(  \phi_s^\top \mu_{\lambda_r}  \right) \right\|_2 = \sum_{r=1}^{\hiddenDim} \left\| S_r \right\|_2 \cdot |\phi_s^\top \mu_{\lambda_r} | \\
  \end{align*}
  Notice that:
  \begin{align*}
    \sum_{r=1}^{\hiddenDim} \left\| S_r \right\|_2 = \sum_{r=1}^{\hiddenDim} \lVert CVe_i (e_i^\top V^{-1}B) \rVert_2 \leq \sum_{r=1}^{\hiddenDim} \lVert CVe_i \rVert_2 \cdot \lVert e_i^\top V^{-1}B \rVert_2 \leq \lVert V \rVert_F \cdot \lVert V^{-1} \rVert_F \cdot\lVert C \rVert_F \cdot \lVert B \rVert_F 
  \end{align*}
  Also, we have that:
  \[
    |\phi_s^\top \mu_{\lambda_r}| \leq \lVert \phi_s \rVert_2 \cdot \lVert \mu_{\lambda_r} \rVert_2 \leq 1 \cdot \frac{1}{1 - |\lambda_r|}
  \]
  since $\phi_s$ has norm $1$ and the entries of $\mu_{\lambda_r}$ sum up to be at most $\frac{1}{1 - |\lambda_r|}$. Combining this with
  \cref{thm:decay-eigenvectors} we get:
  \[
    |\phi_s^\top \mu_{\lambda_r}| \leq 2\sqrt{\pi} \cdot T^{\frac{1}{2}} \cdot  e^{-\frac{s}{4\log{T}}}
  \]
  Combining the above inequalities we get the desired bound.
\end{proof}
\section{Finite-memory filters for strictly stable LDS}
\label{app:fast-fir}

The following lemma formalises \cref{sec:fir-approx}: a strictly stable
semisimple LDS admits a finite-memory input filter of length
independent of the state dimension.

\begin{lemma}\label{lem:fast-fir}
Consider an LDS $x_t = A x_{t-1}+B u_t$, $y_t = C x_t$ with $x_0=0$.
Suppose $A$ is semisimple over $\mathbb{C}$ and every eigenvalue satisfies
$|\lambda|\le 1-1/L$ for some $L\ge 1$. Write $A=VDV^{-1}$ and set
$C_{\mathrm{fir},0}:=\|CV\|\,\|V^{-1}B\|$. Then, for every $\varepsilon>0$, setting
\[
  L_\varepsilon
  \;=\;
  \left\lceil
  L\,\log\!\left(\frac{K\,C_{\mathrm{fir},0}\,L\sqrt{T}}{\sqrt{\varepsilon}}\right)
  \right\rceil,
\]
implies that there exists matrices $Q_0,\dots,Q_{L_\varepsilon-1}$ such that
\[
  \sum_{\ell=0}^{L_\varepsilon-1}\|Q_\ell\|\le C_{\mathrm{fir},0}L,
\]
and
\[
  \sum_{t=1}^{T}\Bigl\|y_t-\sum_{\ell=0}^{L_\varepsilon-1}Q_\ell\,u_{t-\ell}\Bigr\|_2^{2}
  \;\le\; \varepsilon.
\]
\end{lemma}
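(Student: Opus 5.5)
The plan is to take the Markov parameters themselves as the filter, $Q_\ell := CA^\ell B$ for $\ell=0,\dots,L_\varepsilon-1$, and to show that the truncated tail is small. Since $x_0=0$, unrolling the recursion gives the exact expansion $y_t=\sum_{\ell=0}^{t-1}CA^\ell B\,u_{t-\ell}$. We adopt the convention $u_s=0$ for $s\le 0$, so the truncated predictor is well defined. Then
\[
  y_t-\sum_{\ell=0}^{L_\varepsilon-1}Q_\ell u_{t-\ell}=\sum_{\ell=L_\varepsilon}^{t-1}CA^\ell B\,u_{t-\ell},
\]
where the sum is empty when $t\le L_\varepsilon$.

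First I bound the individual Markov parameters. Using the diagonalization $A=VDV^{-1}$ over $\mathbb{C}$, we have $CA^\ell B=(CV)D^\ell(V^{-1}B)$, so
\[
  \|Q_\ell\|\le\|CV\|\,\|D\|^\ell\,\|V^{-1}B\|\le C_{\mathrm{fir},0}(1-1/L)^\ell .
\]
The matrix $Q_\ell$ is real even though $V$ may be complex, but this does not affect the norm bound. Summing the geometric series gives the coefficient-mass bound
\[
  \sum_{\ell\ge 0}\|Q_\ell\|\le C_{\mathrm{fir},0}\cdot\frac{1}{1-(1-1/L)}=C_{\mathrm{fir},0}L .
\]

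Next I bound the per-step error. Since $\|u_s\|\le K$,
\[
  \Bigl\|\sum_{\ell\ge L_\varepsilon}CA^\ell Bu_{t-\ell}\Bigr\|\le K\,C_{\mathrm{fir},0}\,L\,(1-1/L)^{L_\varepsilon}\le K\,C_{\mathrm{fir},0}\,L\,e^{-L_\varepsilon/L},
\]
using $(1-1/L)^{L_\varepsilon}\le e^{-L_\varepsilon/L}$. Squaring and summing over $T$ steps bounds the total error by $T\,(K C_{\mathrm{fir},0} L)^2 e^{-2L_\varepsilon/L}$. The stated choice $L_\varepsilon\ge L\log\bigl(K C_{\mathrm{fir},0} L\sqrt{T}/\sqrt{\varepsilon}\bigr)$ makes $e^{-L_\varepsilon/L}\le\sqrt{\varepsilon}/(K C_{\mathrm{fir},0}L\sqrt T)$, so the total is at most $\varepsilon$.

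There is no real obstacle here; the only care needed is bookkeeping. The norm $\|\cdot\|$ is taken to be the operator 2-norm, for which $\|D\|=\max|\lambda_i|\le 1-1/L$. The edge case $L=1$ forces all eigenvalues to be $0$; semisimplicity then gives $A=0$, so the bounds hold trivially. If the argument of the logarithm is at most $1$, then $L_\varepsilon\le 0$, the predictor is empty, and the per-step bound $|y_t|\le K C_{\mathrm{fir},0}L$ already gives total error at most $\varepsilon$.
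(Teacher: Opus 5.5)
Your proposal is correct and follows the paper's proof essentially verbatim: the same choice $Q_\ell = CA^\ell B$, the same diagonalization bound $\|CA^\ell B\|\le C_{\mathrm{fir},0}(1-1/L)^\ell$, the same geometric tail estimate with $(1-1/L)^{L_\varepsilon}\le e^{-L_\varepsilon/L}$ giving per-step error $\sqrt{\varepsilon/T}$, and the same geometric series for the coefficient mass. Your edge cases ($L=1$ forcing $A=0$, and a nonpositive $L_\varepsilon$ leaving an empty predictor) are valid and are not treated in the paper.
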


\begin{proof}
Since $D$ is diagonal and $\rho(D)\le 1-1/L$,
\[
  \|C A^\tau B\|
  \le
  \|CV\|\,\|V^{-1}B\|\,(1-1/L)^{\tau},
\]
Set $Q_\ell:=C A^{\ell} B$ for
$\ell=0,\dots,L_\varepsilon-1$ and recall that
$y_t=\sum_{\ell\ge 0} C A^\ell B\,u_{t-\ell}$
is the exact convolution. The truncation error at time $t$ is at most
\[
  \sum_{\tau\ge L_\varepsilon}\|C A^\tau B\|\,K
  \;\le\;
  K\,C_{\mathrm{fir},0}\,L\,(1-1/L)^{L_\varepsilon}
  \;\le\;
  K\,C_{\mathrm{fir},0}\,L\,e^{-L_\varepsilon/L}.
\]
Choosing $L_\varepsilon$ as in the statement gives
\[
  e^{-L_\varepsilon/L}
  \le
  \frac{\sqrt{\varepsilon}}{K\,C_{\mathrm{fir},0}\,L\sqrt{T}}.
\]
Thus, for every $t\le T$,
\[
  \Bigl\|y_t-\sum_{\ell=0}^{L_\varepsilon-1}Q_\ell u_{t-\ell}\Bigr\|_2
  \le \sqrt{\varepsilon/T},
\]
and summing over $t=1,\dots,T$ gives the claimed total error bound.
Finally,
\[
  \sum_{\ell=0}^{L_\varepsilon-1}\|Q_\ell\|
  \le
  C_{\mathrm{fir},0}\sum_{\ell=0}^{L_\varepsilon-1}(1-1/L)^\ell
  \le C_{\mathrm{fir},0}L,
\]
which proves the lemma.
\end{proof}

\section{Unified approximation: AR plus spectral and finite-memory filters}
\label{app:ar-plus-sf}

\begin{theorem}
  \label{lem:ar-plus-sf}
  Consider the following LDS
  \[
  x_t = A x_{t-1} + B u_t,
  \qquad
  y_t = C x_t,
  \qquad
  x_0 = 0,
  \]
  where $A\in\mathbb{R}^{\hiddenDim\times \hiddenDim}$, $B\in\mathbb{R}^{\hiddenDim\times\inputDim}$,
  and $C\in\mathbb{R}^{\outputDim\times \hiddenDim}$. Extend the input and output sequences by $u_t:=0$ and
  $y_t:=0$ for all $t\le 0$, and define $\tilde u_t$ using this zero-padded
  input sequence.

  Define the split parameter
  \[
  \taustb:=\log(\ustb+2).
  \]
  Let $\mu_1,\dots,\mu_q$ be the eigenvalues of $A$ counted by the instability
  complexity~\cref{def:ustb}---the non-semisimple eigenvalues together with the
  semisimple eigenvalues outside $(-1,1)$ with $|\lambda_i|>1-1/\taustb$, so that
  $q\le\ustb$; if $q<\ustb$ we pad the AR coefficients below by zeros. Let
  $\lambda_1,\dots,\lambda_r$ be the real semisimple eigenvalues of $A$ with
  \[
  1-\frac{1}{\taustb} \;<\; |\lambda_a| \;<\; 1,
  \qquad a=1,\dots,r,
  \]
  counted with algebraic multiplicity; we call these the \emph{spectral
  group}. The remaining $\hiddenDim-r-q$ eigenvalues are semisimple with
  modulus at most $1-1/\taustb$ and form the \emph{strictly stable} block.

  Using the real invariant subspaces associated with these three spectral
  groups (\cref{lem:diag} applied separately to the spectral and to the
  strictly stable parts), there exist $P\in \mathrm{GL}_{\hiddenDim}(\mathbb R)$ and
  matrices $A_{\mathrm{ustb}}\in\mathbb R^{q\times q}$ and
  $A_{\mathrm{fast}}\in\mathbb R^{(\hiddenDim-r-q)\times(\hiddenDim-r-q)}$
  (with $A_{\mathrm{fast}}$ semisimple of spectral radius at most
  $1-1/\taustb$) such that
  \[
  A
  =
  P
  \begin{pmatrix}
  \Lambda & 0 & 0\\
  0 & A_{\mathrm{ustb}} & 0\\
  0 & 0 & A_{\mathrm{fast}}
  \end{pmatrix}
  P^{-1},
  \qquad
  \Lambda:=\operatorname{diag}(\lambda_1,\dots,\lambda_r).
  \]
  The eigenvalues of $A_{\mathrm{ustb}}$ are precisely $\mu_1,\dots,\mu_q$
  (counted with algebraic multiplicity).

  Define the real block matrices
  \[
  P^{-1}B
  =
  \begin{bmatrix}
  B_{\mathrm{st}}\\
  B_{\mathrm{ustb}}\\
  B_{\mathrm{fast}}
  \end{bmatrix},
  \qquad
  CP
  =
  \begin{bmatrix}
  C_{\mathrm{st}} & C_{\mathrm{ustb}} & C_{\mathrm{fast}}
  \end{bmatrix},
  \]
  where $B_{\mathrm{st}}\in\mathbb R^{r\times\inputDim}$,
  $B_{\mathrm{ustb}}\in\mathbb R^{q\times\inputDim}$,
  $C_{\mathrm{st}}\in\mathbb R^{\outputDim\times r}$, and
  $C_{\mathrm{ustb}}\in\mathbb R^{\outputDim\times q}$.

  Define
  \[
  L_{\mathrm{ustb}}
  :=
  \left(\prod_{j=1}^{q} (1+|\mu_j|)\right)
  \left(
  1+\ustb\,\|C_{\mathrm{ustb}}\|_\infty\,\|B_{\mathrm{ustb}}\|_\infty\,
  \max\{1,\|A_{\mathrm{ustb}}\|_\infty\}^{\,\max\{q-1,0\}}
  \right)
  \]
  and
  \[
  \widetilde L_{\mathrm{ustb}}
  :=
  (1+\ustb L_{\mathrm{ustb}})\left(1-\frac{1}{\taustb}\right)^{-\ustb},
  \]
  so that $\log\widetilde L_{\mathrm{ustb}}
  =O(\log(1+\ustb L_{\mathrm{ustb}})+\ustb/\log(\ustb+2))$ after absorbing the
  finitely many small values of $\ustb$ into the constant.
  Also set
  \[
  \rho_{\mathrm{st}}
  :=
  \max_{1\le a\le r}|\lambda_a|,
  \]
  with the convention $\rho_{\mathrm{st}}=0$ if $r=0$. Define
  $C_{\mathrm{fast},0}:=0$ if the fast block is empty. Otherwise, write
  $A_{\mathrm{fast}}=V_{\mathrm{fast}}D_{\mathrm{fast}}V_{\mathrm{fast}}^{-1}$
  over $\mathbb C$ and set
  \[
  C_{\mathrm{fast},0}
  :=
  \|C_{\mathrm{fast}}V_{\mathrm{fast}}\|\,
  \|V_{\mathrm{fast}}^{-1}B_{\mathrm{fast}}\|.
  \]
  Define
  \[
  h
  :=
  \left\lceil
  2\log T \cdot
  \log\!\left(
  \frac{
  4\pi K^2 \,\widetilde L_{\mathrm{ustb}}^{2}\,\|C_{\mathrm{st}}\|_{F}^{2}\|B_{\mathrm{st}}\|_{F}^{2}
  \,T^6\,(1+\ustb L_{\mathrm{ustb}})^2
  }{\varepsilon}
  \right)
  \right\rceil
  \]
  and
  \[
  L
  :=
  \left\lceil
  \taustb\,\log\!\left(
  \frac{(1+C_{\mathrm{fast},0})\,K\,(1+\taustb)\,T\,(1+\ustb L_{\mathrm{ustb}})^2}{\varepsilon}
  \right)
  \right\rceil .
  \]
  For every $s=1,\dots,h$, define
  \[
  R_s
  :=
  \|C_{\mathrm{st}}\|_{F}\,\|B_{\mathrm{st}}\|_{F}
  \cdot
  \min\!\left(
  \frac{1}{1-\rho_{\mathrm{st}}},
  2\sqrt{\pi}\sqrt{T}\,e^{-s/(4\log T)}
  \right).
  \]

  Then there exist coefficients
  \[
  \{\alpha_i\}_{i=1}^{\ustb} \subset \mathbb{R},
  \qquad
  \{W_{s}\}_{1\le s\le h}
  \subset
  \mathbb{R}^{\outputDim\times\inputDim},
  \qquad
  \{Q_\ell\}_{0\le \ell\le L+\ustb} \subset \mathbb{R}^{\outputDim\times\inputDim},
  \]
  such that, for every $t=1,\dots,T$, the predictor
  \[
  \pred_t
  =
  \sum_{i=1}^{\ustb} \alpha_i\, y_{t-i}
  +
  \sum_{s=1}^{h}
  W_{s}\,(\tilde u_{t}\phi_s)
  +
  \sum_{\ell=0}^{L+\ustb} Q_\ell\,u_{t-\ell}
  \]
  satisfies
  \[
  \sum_{t=1}^{T} \|y_t-\pred_t\|_2^2 \le \varepsilon .
  \]

  Moreover, for all relevant indices,
  \[
  |\alpha_i|\le L_{\mathrm{ustb}},
  \qquad
  \|W_{s}\|_F \le \widetilde L_{\mathrm{ustb}}\, R_s,
  \]
  and the $Q_\ell$ satisfy
  \[
  \|Q_\ell\|_2\le
  L_Q
  :=
  \inputDim L_{\mathrm{ustb}}
  +
  \widetilde L_{\mathrm{ustb}}\,\|C_{\mathrm{st}}\|_F\,\|B_{\mathrm{st}}\|_F
  +
  (1+\ustb L_{\mathrm{ustb}})\,C_{\mathrm{fast},0},
  \]
  a constant depending only on the fixed system parameters.
\end{theorem}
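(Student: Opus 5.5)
We split the output along the block decomposition given in the statement. Write $y_t=y^{\mathrm{st}}_t+y^{\mathrm{ar}}_t+y^{\mathrm{fast}}_t$, where each piece is the output of the sub-LDS $(\Lambda,B_{\mathrm{st}},C_{\mathrm{st}})$, $(A_{\mathrm{ustb}},B_{\mathrm{ustb}},C_{\mathrm{ustb}})$, $(A_{\mathrm{fast}},B_{\mathrm{fast}},C_{\mathrm{fast}})$ respectively. All three start from zero state, so with the zero padding each piece vanishes for $t\le 0$.

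\textbf{Step 1: AR relation for the unstable block.} Apply \cref{lem:ar-only} to the $q$-dimensional unstable block; observability is not needed there. This gives $\alpha_i=-c_i$, with $c_i$ the characteristic-polynomial coefficients of $A_{\mathrm{ustb}}$, padded by zeros up to $\ustb$, and matrices $M_j$ such that
\[
y^{\mathrm{ar}}_t=\sum_{i=1}^{\ustb}\alpha_i y^{\mathrm{ar}}_{t-i}+\sum_{j=0}^{\ustb-1}M_ju_{t-j}
\]
for all $t$. Because of the zero padding, this identity also holds for $t\le q$, which the proof of \cref{lem:ar-only} can be checked to give. The same lemma gives $|\alpha_i|,\|M_j\|_\infty\le L_{\mathrm{ustb}}$.

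Substitute $y^{\mathrm{ar}}_{s}=y_{s}-y^{\mathrm{st}}_{s}-y^{\mathrm{fast}}_{s}$ to obtain
\[
y_t=\sum_i\alpha_iy_{t-i}+\sum_jM_ju_{t-j}+\Big(y^{\mathrm{st}}_t-\sum_i\alpha_iy^{\mathrm{st}}_{t-i}\Big)+\Big(y^{\mathrm{fast}}_t-\sum_i\alpha_iy^{\mathrm{fast}}_{t-i}\Big).
\]
It then remains to approximate the two residual brackets. The $M_j$ terms go directly into $Q_0,\dots,Q_{\ustb-1}$.

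\textbf{Step 2: spectral residual.} Since $\Lambda$ is diagonal,
\[
y^{\mathrm{st}}_{t-i}=\sum_aS_a\,\tilde u_{t-i}\mu_{\lambda_a}, \qquad S_a=c_ab_a^\top .
\]
The key identity is $\tilde u_{t-i}\mu_\lambda=\lambda^{-i}\big(\tilde u_t\mu_\lambda-\sum_{j=0}^{i-1}\lambda^ju_{t-j}\big)$, up to a tail term at index $T$ that vanishes by the zero padding, or is handled by working with length-$T$ windows. Hence the residual equals
\[
\sum_aS_a\Big(1-\sum_i\alpha_i\lambda_a^{-i}\Big)\tilde u_t\mu_{\lambda_a}+\sum_{j<\ustb}(\text{matrices})\,u_{t-j}.
\]
Since $|\lambda_a|>1-1/\taustb$, we have $|\lambda_a^{-i}|\le(1-1/\taustb)^{-\ustb}$, so each scalar weight is at most $\widetilde L_{\mathrm{ustb}}$. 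This is exactly why the threshold $\taustb$ was chosen: the AR inversion costs only $e^{O(\ustb/\log \ustb)}$.

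This is again a diagonal real stable LDS, with $C$ replaced by the reweighted $C_{\mathrm{st}}$. Expand in spectral filters and truncate at $h$ exactly as in \cref{lem:sf-diagonal-real}. The squared error picks up a factor $\widetilde L_{\mathrm{ustb}}^2$, which is absorbed by the $\log$ in $h$; the bound on $W_s$ follows as $\widetilde L_{\mathrm{ustb}}R_s$. The lag terms have norm at most $\ustb L_{\mathrm{ustb}}\cdot(\ldots)$ and are put into the $Q_\ell$. This substitution step, keeping the padding and boundary handling consistent, is the main obstacle.

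\textbf{Step 3: fast residual and error combination.} By \cref{lem:fast-fir} with $L=\taustb$, $y^{\mathrm{fast}}_s=\sum_{\ell<L}Q^f_\ell u_{s-\ell}+e_s$ with $\sum_t\|e_t\|^2\le\varepsilon'$. Applying this to each lag $s=t-i$ turns the fast residual into an FIR filter of length $L+\ustb$, with coefficients $Q^f_\ell-\sum_i\alpha_iQ^f_{\ell-i}$, plus the error $e_t-\sum_i\alpha_ie_{t-i}$. By Young's inequality (Cauchy–Schwarz over the $\ustb+1$ terms) the total error is at most $(1+\ustb L_{\mathrm{ustb}})^2\varepsilon'$. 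Choose $\varepsilon'=\varepsilon/(4(1+\ustb L_{\mathrm{ustb}})^2)$ (matching the $L$ in the statement), and likewise for the spectral part. Then $\|a+b\|^2\le2\|a\|^2+2\|b\|^2$ gives a total error of at most $\varepsilon$. Summing the three contributions to $Q_\ell$ gives the bound $L_Q$, after converting between norms using the factor $\inputDim$.
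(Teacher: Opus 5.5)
Your proposal is correct and follows essentially the same route as the paper: Cayley--Hamilton on the unstable block, the shift identity $G_\lambda(t-j)=\lambda^{-j}\bigl(G_\lambda(t)-\sum_{i<j}\lambda^i u_{t-i}\bigr)$, an FIR approximation of the fast block, and Young's inequality to control its AR-propagated error. The shift identity rewrites the spectral block's AR residual as a time-$t$ convolution rescaled by $p_\alpha(\lambda)/\lambda^{\ustb}$ plus $\ustb$ recent inputs, so the spectral-filter error is never amplified. The only differences are cosmetic: you combine errors via $\|a+b\|^2\le 2\|a\|^2+2\|b\|^2$ rather than the triangle inequality in $\ell_2$ (both work with $\eta=\varepsilon/(4(1+\ustb L_{\mathrm{ustb}})^2)$), and the ``tail term at index $T$'' you worry about never arises, since $\tilde u_t\mu_\lambda=\sum_{\tau=0}^{t-1}\lambda^\tau u_{t-\tau}$ exactly for $t\le T$.
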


\begin{proof}
Write $z_t := P^{-1}x_t = (s_t,e_t,f_t)$ according to the spectral, AR, and
strictly stable blocks. The transformed dynamics are
\[
s_t = \Lambda s_{t-1} + B_{\mathrm{st}} u_t,\qquad
e_t = A_{\mathrm{ustb}} e_{t-1} + B_{\mathrm{ustb}} u_t,\qquad
f_t = A_{\mathrm{fast}} f_{t-1} + B_{\mathrm{fast}} u_t,
\]
and
\[
y_t = C_{\mathrm{st}} s_t + C_{\mathrm{ustb}} e_t + C_{\mathrm{fast}} f_t
=: y_t^{\mathrm{st}} + y_t^{\mathrm{ustb}} + y_t^{\mathrm{fast}}.
\]
We may assume $\ustb\ge 1$; otherwise the AR component is empty and the proof
collapses to a direct application of \cref{lem:sf-diagonal-real} and
\cref{lem:fast-fir}.

Set
\[
\eta:=\frac{\varepsilon}{4(1+\ustb L_{\mathrm{ustb}})^2}.
\]

\paragraph{AR component via Cayley--Hamilton.}
Apply Lemma~\ref{lem:ar-only} to the $q$-dimensional subsystem
$e_t=A_{\mathrm{ustb}}e_{t-1}+B_{\mathrm{ustb}}u_t$,
$y_t^{\mathrm{ustb}}=C_{\mathrm{ustb}}e_t$, then pad the resulting AR
coefficients by zeros so that they have length $\ustb$. This yields
\[
\{\alpha_i\}_{i=1}^{\ustb}\subset\mathbb R,
\qquad
\{N_i\}_{i=0}^{\ustb-1}\subset\mathbb R^{\outputDim\times\inputDim},
\]
with $|\alpha_i|\le L_{\mathrm{ustb}}$ and $\|N_i\|_\infty\le L_{\mathrm{ustb}}$,
satisfying (after extending $u_t,y_t^{\mathrm{ustb}}$ by zero for $t\le 0$)
\[
y_t^{\mathrm{ustb}}
=
\sum_{i=1}^{\ustb}\alpha_i y_{t-i}^{\mathrm{ustb}}
+
\sum_{i=0}^{\ustb-1}N_i u_{t-i},
\qquad t=1,\dots,T.
\]
In particular $\sum_{i=1}^{\ustb}|\alpha_i|\le \ustb L_{\mathrm{ustb}}$. We
perform this step first because the coefficients $\alpha_i$ are used in the
treatment of the spectral block below. Write
\[
p_\alpha(z) := z^{\ustb}-\sum_{j=1}^{\ustb}\alpha_j z^{\ustb-j}
\]
for the associated monic polynomial, so that
$|p_\alpha(z)|\le 1+\ustb L_{\mathrm{ustb}}$ whenever $|z|\le 1$.

\paragraph{A shift identity for a single decay mode.}
The goal of this paragraph and the next is to express the AR residual
$y_t^{\mathrm{st}}-\sum_{i=1}^{\ustb}\alpha_i y_{t-i}^{\mathrm{st}}$ of the
spectral component using only features available at time $t$ (the current
input history), with no reference to the lagged histories at times
$t-1,\dots,t-\ustb$. We first prove the required identity for a single scalar
decay mode; the next paragraph applies it to each eigenvalue of $\Lambda$ and
sums.

For $\lambda\in(-1,1)\setminus\{0\}$, define the geometric convolution of the
input sequence with the decaying weights $1,\lambda,\lambda^2,\dots$:
\[
G_\lambda(t) := \sum_{\tau=0}^{t-1}\lambda^\tau\, u_{t-\tau}\in\mathbb R^{\inputDim},
\qquad t\ge 1,
\]
and $G_\lambda(t):=0$ for $t\le 0$. This is the convention-consistent
extension: for $t\le 0$ every input appearing in the sum is zero by the
zero-padding $u_s=0$, $s\le 0$. Splitting off the $\tau=0$ term and
re-indexing the remaining sum shows that $G_\lambda$ obeys the one-step
recurrence
\[
G_\lambda(t)
= u_t + \sum_{\tau=1}^{t-1}\lambda^{\tau}u_{t-\tau}
= u_t + \lambda\sum_{\sigma=0}^{t-2}\lambda^{\sigma}u_{(t-1)-\sigma}
= u_t + \lambda\, G_\lambda(t-1),
\qquad t\ge 1 .
\]
Solving this recurrence for the lagged value gives
$G_\lambda(t-1)=\lambda^{-1}\bigl(G_\lambda(t)-u_t\bigr)$: the value of the
convolution one step in the past is recovered from its current value by
dividing out one factor of $\lambda$ and removing the contribution of the most
recent input. Iterating this observation expresses \emph{any} lag of
$G_\lambda$ through its current value:

\begin{claim}
\label{clm:shift}
For every $j\ge 0$ and every $t\in\mathbb Z$,
\begin{equation}
\label{eq:shift}
G_\lambda(t-j)
=
\lambda^{-j}G_\lambda(t)
-
\sum_{i=0}^{j-1}\lambda^{i-j}\,u_{t-i}.
\end{equation}
\end{claim}

\begin{claimproof}
Induction on $j$. For $j=0$ both sides equal $G_\lambda(t)$ (the empty sum is
zero). Assume \eqref{eq:shift} holds for some $j\ge 0$. Applying the relation
$G_\lambda(s-1)=\lambda^{-1}\bigl(G_\lambda(s)-u_s\bigr)$ at $s=t-j$ and then
substituting the inductive hypothesis for $G_\lambda(t-j)$,
\[
G_\lambda(t-j-1)
=
\lambda^{-1}G_\lambda(t-j)-\lambda^{-1}u_{t-j}
=
\lambda^{-(j+1)}G_\lambda(t)
-\sum_{i=0}^{j-1}\lambda^{i-(j+1)}u_{t-i}
-\lambda^{-1}u_{t-j}.
\]
The last term is the $i=j$ summand $\lambda^{i-(j+1)}u_{t-i}\big|_{i=j}$, so
the two negative terms merge into $\sum_{i=0}^{j}\lambda^{i-(j+1)}u_{t-i}$,
which is \eqref{eq:shift} for $j+1$. The zero-padding conventions make the
recurrence, and hence every step above, valid for all $t\in\mathbb Z$,
including $t-j\le 0$; no boundary cases arise.
\end{claimproof}

In words, \eqref{eq:shift} says: a lagged convolution $G_\lambda(t-j)$ equals
the \emph{current} convolution $G_\lambda(t)$ rescaled by $\lambda^{-j}$,
minus a correction involving only the $j$ most recent inputs
$u_t,\dots,u_{t-j+1}$ (which the current convolution contains but the lagged
one does not).

We now apply \eqref{eq:shift} to the AR residual of $G_\lambda$. Substituting
\eqref{eq:shift} for each lag $j=1,\dots,\ustb$,
\[
G_\lambda(t)-\sum_{j=1}^{\ustb}\alpha_j\, G_\lambda(t-j)
=
G_\lambda(t)
-\sum_{j=1}^{\ustb}\alpha_j\lambda^{-j}\,G_\lambda(t)
+\sum_{j=1}^{\ustb}\alpha_j\sum_{i=0}^{j-1}\lambda^{i-j}\,u_{t-i}.
\]
Two collections of terms appear. First, the multiplier of $G_\lambda(t)$ is
\[
1-\sum_{j=1}^{\ustb}\alpha_j\lambda^{-j}
=
\frac{\lambda^{\ustb}-\sum_{j=1}^{\ustb}\alpha_j\lambda^{\ustb-j}}{\lambda^{\ustb}}
=
\frac{p_\alpha(\lambda)}{\lambda^{\ustb}},
\]
by the definition of $p_\alpha$. Second, in the double sum, exchanging the
order of summation (the pairs $(j,i)$ with $1\le j\le\ustb$, $0\le i\le j-1$
are exactly the pairs with $0\le i\le \ustb-1$, $i+1\le j\le\ustb$) collects
the coefficient of each fixed input $u_{t-i}$:
\[
\sum_{j=1}^{\ustb}\alpha_j\sum_{i=0}^{j-1}\lambda^{i-j}\,u_{t-i}
=
\sum_{i=0}^{\ustb-1}
\underbrace{\Bigl(\sum_{j=i+1}^{\ustb}\alpha_j\,\lambda^{\,i-j}\Bigr)}_{=:\;r_i(\lambda)}
\,u_{t-i}.
\]
Altogether we have shown, for every $t$,
\begin{equation}
\label{eq:residual-mode}
G_\lambda(t)-\sum_{j=1}^{\ustb}\alpha_j G_\lambda(t-j)
=
\frac{p_\alpha(\lambda)}{\lambda^{\ustb}}\,G_\lambda(t)
+
\sum_{i=0}^{\ustb-1}r_i(\lambda)\,u_{t-i}.
\end{equation}
The left-hand side involves the convolution at $\ustb+1$ different times
$t,t-1,\dots,t-\ustb$; the right-hand side involves it only at time $t$, plus
the $\ustb$ most recent inputs. This is the mechanism that will remove the
shifted spectral-filter features.

The rescaling factor and the input coefficients are controlled as follows.
Since $|\lambda|<1$, each power satisfies $|\lambda|^{\ustb-j}\le 1$, so
$|p_\alpha(\lambda)|\le 1+\sum_{j=1}^{\ustb}|\alpha_j|\le 1+\ustb L_{\mathrm{ustb}}$,
using the bound $|\alpha_j|\le L_{\mathrm{ustb}}$ from the AR step. If in
addition $|\lambda|>1-1/\taustb$, then
$|\lambda|^{-\ustb}\le\left(1-\frac{1}{\taustb}\right)^{-\ustb}$, hence
\begin{equation}
\label{eq:rescale-bound}
\Bigl|\frac{p_\alpha(\lambda)}{\lambda^{\ustb}}\Bigr|
\le
(1+\ustb L_{\mathrm{ustb}})\left(1-\frac{1}{\taustb}\right)^{-\ustb}
=
\widetilde L_{\mathrm{ustb}},
\qquad
|r_i(\lambda)|
\le
\sum_{j=i+1}^{\ustb}|\alpha_j|\,|\lambda|^{\,i-j}
\le
\Bigl(\sum_{j=1}^{\ustb}|\alpha_j|\Bigr)|\lambda|^{-\ustb}
\le
\widetilde L_{\mathrm{ustb}},
\end{equation}
where in the bound on $r_i(\lambda)$ we used $|\lambda|^{\,i-j}\le
|\lambda|^{-\ustb}$, valid because $|\lambda|<1$ and
$0< j-i\le \ustb$. It is exactly here that the spectral group's lower
threshold $1-1/\taustb$ is used: without it, the factor
$|\lambda|^{-\ustb}$ would be unbounded as $\lambda\to 0$.

\paragraph{Spectral component: AR residual first, then spectral filters.}
We first express $y^{\mathrm{st}}$ in the form required by the previous
paragraph, namely as a weighted sum of single-mode convolutions. Unrolling the
recursion $s_t=\Lambda s_{t-1}+B_{\mathrm{st}}u_t$ from $s_0=0$ gives
$s_t=\sum_{\tau=0}^{t-1}\Lambda^{\tau}B_{\mathrm{st}}\,u_{t-\tau}$, and hence
\[
y_t^{\mathrm{st}}
=
C_{\mathrm{st}}\,s_t
=
\sum_{\tau=0}^{t-1} C_{\mathrm{st}}\Lambda^\tau B_{\mathrm{st}}\,u_{t-\tau}.
\]
Because $\Lambda=\operatorname{diag}(\lambda_1,\dots,\lambda_r)$ is diagonal,
its powers decompose as
$\Lambda^{\tau}=\sum_{a=1}^{r}\lambda_a^{\tau}\,e_ae_a^{\!\top}$, where
$e_1,\dots,e_r$ is the standard basis of $\mathbb R^{r}$. Therefore
\[
C_{\mathrm{st}}\Lambda^{\tau}B_{\mathrm{st}}
=
\sum_{a=1}^{r}\lambda_a^{\tau}\,
\underbrace{(C_{\mathrm{st}}e_a)}_{a\text{-th column of }C_{\mathrm{st}}}
\underbrace{(e_a^{\!\top}B_{\mathrm{st}})}_{a\text{-th row of }B_{\mathrm{st}}}
=
\sum_{a=1}^{r}\lambda_a^{\tau}\,c_a,
\qquad
c_a := (C_{\mathrm{st}})_{:,a}\,(B_{\mathrm{st}})_{a,:}
\in\mathbb R^{\outputDim\times\inputDim},
\]
where each $c_a$ is a fixed rank-one matrix recording how the input enters
mode $a$ and how mode $a$ appears in the output. Substituting this into the
expression for $y_t^{\mathrm{st}}$ and exchanging the two finite sums,
\[
y_t^{\mathrm{st}}
=
\sum_{\tau=0}^{t-1}\sum_{a=1}^{r}\lambda_a^{\tau}\,c_a\,u_{t-\tau}
=
\sum_{a=1}^{r} c_a\sum_{\tau=0}^{t-1}\lambda_a^{\tau}\,u_{t-\tau}
=
\sum_{a=1}^{r} c_a\,G_{\lambda_a}(t).
\]
That is, the spectral component of the output is a weighted sum of $r$
single-mode convolutions, one per eigenvalue of $\Lambda$ --- precisely the
objects to which the shift identity \eqref{eq:residual-mode} applies.
Every $\lambda_a$ in the spectral group satisfies
$|\lambda_a|>1-1/\taustb$ (in particular $\lambda_a\neq 0$), so
\eqref{eq:residual-mode} applies to each mode. Summing
\eqref{eq:residual-mode} against $c_a$ over $a=1,\dots,r$,
\begin{equation}
\label{eq:stable-residual}
y_t^{\mathrm{st}}-\sum_{i=1}^{\ustb}\alpha_i y_{t-i}^{\mathrm{st}}
=
\sum_{a=1}^{r}\widetilde c_a\,G_{\lambda_a}(t)
+
\sum_{i=0}^{\ustb-1} D_i\,u_{t-i},
\qquad
\widetilde c_a := \frac{p_\alpha(\lambda_a)}{\lambda_a^{\ustb}}\,c_a,
\quad
D_i := \sum_{a=1}^{r} c_a\,r_i(\lambda_a).
\end{equation}
The first sum on the right-hand side of \eqref{eq:stable-residual} is, at the
same time $t$, the output of the LDS
$(s_t,B_{\mathrm{st}},\widetilde C_{\mathrm{st}},\Lambda)$, where
\[
\widetilde C_{\mathrm{st}}
:=
C_{\mathrm{st}}\operatorname{diag}\!\Bigl(\frac{p_\alpha(\lambda_a)}{\lambda_a^{\ustb}}\Bigr)_{a=1}^{r}:
\]
it has the same eigenvalues $\lambda_1,\dots,\lambda_r$ as the original
spectral block, and by \eqref{eq:rescale-bound} its readout satisfies
$\|\widetilde C_{\mathrm{st}}\|_F\le\widetilde L_{\mathrm{ustb}}\|C_{\mathrm{st}}\|_F$.
By the choice of $h$ and \cref{lem:sf-diagonal-real} applied to
$(s_t,B_{\mathrm{st}},\widetilde C_{\mathrm{st}},\Lambda)$ with target error
$\eta$, there are matrices $W_{s}\in\mathbb R^{\outputDim\times\inputDim}$
such that
\begin{equation}
\label{eq:sf-error}
\sum_{t=1}^T
\Bigl\|
\sum_{a=1}^{r}\widetilde c_a\,G_{\lambda_a}(t)
-
\sum_{s=1}^h W_{s}(\tilde u_t\phi_s)
\Bigr\|_2^2
\le \eta,
\end{equation}
and, for every $s$, $\|W_{s}\|_F\le \widetilde L_{\mathrm{ustb}}R_s$ (the bound
of \cref{lem:sf-diagonal-real} with $\|C\|_F$ replaced by
$\|\widetilde C_{\mathrm{st}}\|_F\le\widetilde L_{\mathrm{ustb}}\|C_{\mathrm{st}}\|_F$).
The constant in $h$ is exactly \cref{lem:sf-diagonal-real}'s requirement for
this rescaled system: the factor $\widetilde L_{\mathrm{ustb}}^2$ absorbs the
rescaled readout, the factor $(1+\ustb L_{\mathrm{ustb}})^2$ absorbs $1/\eta$,
and $K^2$ is the correct power (\cref{lem:sf-diagonal-real}'s $C_0$ carries
$K^2$).
For the boundary matrices, \eqref{eq:rescale-bound} and Cauchy--Schwarz over
$a$ give
\begin{equation}
\label{eq:D-bound}
\|D_i\|_2
\le
\widetilde L_{\mathrm{ustb}}\sum_{a=1}^{r}\|c_a\|_F
\le
\widetilde L_{\mathrm{ustb}}\,\|C_{\mathrm{st}}\|_F\,\|B_{\mathrm{st}}\|_F,
\qquad i=0,\dots,\ustb-1.
\end{equation}
Note that, in contrast with the shifted construction, no approximation of the
lagged outputs $y_{t-i}^{\mathrm{st}}$ is performed: the AR residual of
$y^{\mathrm{st}}$ is approximated directly at time $t$, so no shifted features
$\tilde u_{t-i}\phi_s$ appear and the spectral-filter error \eqref{eq:sf-error}
will not be amplified by the AR coefficients.

\paragraph{Strictly stable component via finite-memory filter.}
The eigenvalues of $A_{\mathrm{fast}}$ have modulus at most $1-1/\taustb$, so
in \cref{lem:fast-fir} the stability parameter is $\taustb$. Applying that
lemma with target error $\eta$ gives a required memory length
\[
L_\eta
=
\left\lceil
\taustb\log\!\left(
\frac{K\,C_{\mathrm{fast},0}\,\taustb\sqrt{T}}{\sqrt{\eta}}
\right)
\right\rceil
=
\left\lceil
\taustb\log\!\left(
\frac{2K\,C_{\mathrm{fast},0}\,\taustb\sqrt{T}(1+\ustb L_{\mathrm{ustb}})}
{\sqrt{\varepsilon}}
\right)
\right\rceil .
\]
The theorem uses the slightly larger, cleaner choice of $L$ in the statement,
which dominates $L_\eta$ up to harmless constants and leaves room for the
later multiplication by $(1+\ustb L_{\mathrm{ustb}})$ in the AR error
propagation for this block.
By \cref{lem:fast-fir} applied to the subsystem
$f_t=A_{\mathrm{fast}}f_{t-1}+B_{\mathrm{fast}}u_t$,
$y_t^{\mathrm{fast}}=C_{\mathrm{fast}}f_t$, with the choice of $L$ in the
theorem statement there are matrices
$\widetilde Q_0,\dots,\widetilde Q_{L-1}\in\mathbb R^{\outputDim\times\inputDim}$
such that, with
\[
\hat y_t^{\mathrm{fast}} := \sum_{\ell=0}^{L-1}\widetilde Q_\ell\,u_{t-\ell},
\qquad
\sum_{t=1}^T\|y_t^{\mathrm{fast}}-\hat y_t^{\mathrm{fast}}\|_2^2 \le \eta,
\]
and each $\|\widetilde Q_\ell\|\le C_{\mathrm{fast},0}$. Set
$\hat y_t^{\mathrm{fast}}:=0$ for $t\le 0$.

It will be convenient to record now the AR residual of this FIR approximation,
which is again a finite-memory filter, of length $L+\ustb$:
\[
\hat y_t^{\mathrm{fast}}-\sum_{i=1}^{\ustb}\alpha_i\hat y_{t-i}^{\mathrm{fast}}
\;=\;
\sum_{\ell=0}^{L+\ustb} F_\ell\,u_{t-\ell},
\qquad
F_\ell
\;:=\;
\widetilde Q_\ell
\;-\;
\sum_{\substack{i=1,\dots,\ustb\\ 0\le \ell-i\le L-1}}\alpha_i\,\widetilde Q_{\ell-i},
\]
for $\ell=0,\dots,L+\ustb$, with the conventions $\widetilde Q_\ell:=0$ for
$\ell\ge L$ and that the sum is zero if no $i$ satisfies the conditions. Using
$\|\widetilde Q_\ell\|_2\le C_{\mathrm{fast},0}$ and
$|\alpha_i|\le L_{\mathrm{ustb}}$, we obtain
\[
\|F_\ell\|_2
\le
(1+\ustb L_{\mathrm{ustb}})\,C_{\mathrm{fast},0}.
\]

\paragraph{Combining the three pieces.}
Substituting
$y_{t-i}^{\mathrm{ustb}} = y_{t-i}-y_{t-i}^{\mathrm{st}}-y_{t-i}^{\mathrm{fast}}$
into the AR identity gives
\[
\begin{aligned}
y_t
&=
\sum_{i=1}^{\ustb}\alpha_i y_{t-i}
+
\sum_{i=0}^{\ustb-1}N_i u_{t-i}
+
\Bigl(y_t^{\mathrm{st}}-\sum_{i=1}^{\ustb}\alpha_i y_{t-i}^{\mathrm{st}}\Bigr)
+
\Bigl(y_t^{\mathrm{fast}}-\sum_{i=1}^{\ustb}\alpha_i y_{t-i}^{\mathrm{fast}}\Bigr),
\qquad t=1,\dots,T.
\end{aligned}
\]
Define
\[
\pred_t
:=
\sum_{i=1}^{\ustb}\alpha_i y_{t-i}
+
\sum_{i=0}^{\ustb-1}N_i u_{t-i}
+
\Bigl(
\sum_{s=1}^h W_{s}(\tilde u_t\phi_s)
+
\sum_{i=0}^{\ustb-1} D_i\,u_{t-i}
\Bigr)
+
\Bigl(\hat y_t^{\mathrm{fast}}-\sum_{i=1}^{\ustb}\alpha_i\hat y_{t-i}^{\mathrm{fast}}\Bigr),
\]
i.e.\ the AR residual of $y^{\mathrm{st}}$ is replaced by its time-$t$
approximation from \eqref{eq:stable-residual}--\eqref{eq:sf-error}, while the
AR residual of $y^{\mathrm{fast}}$ is replaced by the AR residual of its FIR
approximation. Let
\[
\delta_t
:=
\sum_{a=1}^{r}\widetilde c_a\,G_{\lambda_a}(t)
-
\sum_{s=1}^h W_{s}(\tilde u_t\phi_s),
\qquad
\gamma_t:=y_t^{\mathrm{fast}}-\hat y_t^{\mathrm{fast}}
\]
(with the convention that both are zero for $t\le 0$). By
\eqref{eq:stable-residual},
\[
y_t-\pred_t
=
\delta_t
+
\Bigl(\gamma_t-\sum_{i=1}^{\ustb}\alpha_i\gamma_{t-i}\Bigr),
\qquad t=1,\dots,T.
\]
The first term is bounded directly by \eqref{eq:sf-error}; only the second
term passes through the AR filter. By the triangle inequality (in
$\ell_2(\{1,\dots,T\})$), Young's inequality for convolution, and the bound
$|\alpha_i|\le L_{\mathrm{ustb}}$,
\[
\Bigl(\sum_{t=1}^T\|y_t-\pred_t\|_2^2\Bigr)^{1/2}
\le
\Bigl(\sum_t\|\delta_t\|_2^2\Bigr)^{1/2}
+
(1+\ustb L_{\mathrm{ustb}})
\Bigl(\sum_t\|\gamma_t\|_2^2\Bigr)^{1/2}
\le
\sqrt{\eta}
+
(1+\ustb L_{\mathrm{ustb}})\sqrt{\eta}
\le
\sqrt{\varepsilon},
\]
so $\sum_{t=1}^T\|y_t-\pred_t\|_2^2\le\varepsilon$.

\paragraph{Putting everything together.}
Collecting all coefficient matrices that multiply lagged inputs, define
\[
Q_\ell
:=
\mathbf 1\{\ell\le \ustb-1\}\,\bigl(N_\ell + D_\ell\bigr)
+
F_\ell,
\qquad
\ell=0,\dots,L+\ustb.
\]
Then
\[
\sum_{i=0}^{\ustb-1}N_i u_{t-i}
+
\sum_{i=0}^{\ustb-1}D_i u_{t-i}
+
\sum_{\ell=0}^{L+\ustb}F_\ell\,u_{t-\ell}
=
\sum_{\ell=0}^{L+\ustb}Q_\ell\,u_{t-\ell},
\]
so $\pred_t$ has exactly the claimed form
\[
\pred_t
=
\sum_{i=1}^{\ustb}\alpha_i y_{t-i}
+
\sum_{s=1}^h W_{s}(\tilde u_t\phi_s)
+
\sum_{\ell=0}^{L+\ustb}Q_\ell\,u_{t-\ell}.
\]
The bound $|\alpha_i|\le L_{\mathrm{ustb}}$ was established in the AR step,
and $\|W_{s}\|_F\le \widetilde L_{\mathrm{ustb}}R_s$ in the spectral step.
Finally, by the triangle inequality, $\|N_\ell\|_2\le \inputDim
L_{\mathrm{ustb}}$ (from $\|N_\ell\|_\infty\le L_{\mathrm{ustb}}$),
\eqref{eq:D-bound}, and the bound on $\|F_\ell\|_2$,
\[
\|Q_\ell\|_2
\le
\inputDim L_{\mathrm{ustb}}
+
\widetilde L_{\mathrm{ustb}}\,\|C_{\mathrm{st}}\|_F\,\|B_{\mathrm{st}}\|_F
+
(1+\ustb L_{\mathrm{ustb}})\,C_{\mathrm{fast},0}
=
L_Q.
\]
This completes the proof.

\paragraph{Memory complexity.}
The memory complexity of the algorithm is $O(\ustb + h + L) = O(\ustb + h + L)$. Since 
$\log{L_{\mathrm{ustb}}} = \tilde{O}(\ustb)$, we get that the memory complexity is $O(\ustb + h + L) = \tilde{O}(\ustb)$.
\end{proof}

\section{Formal version of \cref{thm:main-regret} and its proof}
\label{app:main-regret-proof}

We first restate \cref{thm:main-regret} with the precise filter count and
the constants made explicit, then prove it.

\begin{theorem}[Formal version of \cref{thm:main-regret}]\label{thm:main-regret-formal}
Fix $\hiddenDim\ge 1$. Consider any observable LDS as in \cref{eq:lds} with
state dimension $\hiddenDim$ and instability complexity $\ustb$. Let
$(u_t)_{t\ge 1}$ be any input sequence whose generated outputs
$(y_t)_{t\ge 1}$ satisfy \cref{ass:controlled-outputs}. Write $k:=\ustb$.
Choose the approximation level $\varepsilon_\star$ as in the proof, and set the
filter count $h$ and the finite-memory length $L$ to the corresponding values
from \cref{lem:ar-plus-sf}. Treating the per-mode LDS parameters as constants,
the filter count satisfies
\[
  h \;=\; O\!\left(k\cdot\log T\cdot\log(\mathrm{poly}(T,k,1/\varepsilon))\right)
  \;=\; \widetilde O(\ustb).
\]
Because the autoregressive coefficients can be exponentially large
($\log L_{\mathrm{ustb}}=O(\ustb\log\ustb)$), the finite-memory length satisfies
\[
  L \;=\; O\!\left(\log(k+2)\log\frac{\mathrm{poly}(T,k)\,L_{\mathrm{ustb}}^2}{\varepsilon}\right)
  \;=\; O\!\left(k\cdot\mathrm{polylog}(T,k,1/\varepsilon)\right)
  \;=\; \widetilde O(\ustb).
\]
Then \cref{alg:unified} achieves
\[
  \frac{1}{T}\sum_{t=1}^{T}\ell_2^2(\pred_t,y_t)
  \le
  O\!\left(
    \frac{k^2\log^3 k\,\log^3 T}{T}
  \right)
\]
using
$O\!\left(\outputDim\bigl(\ustb+\inputDim(L+\ustb+1)+h\inputDim\bigr)\right)$
or $O\!\left(k \cdot \log T\cdot\mathrm{polylog}(T,k)\right) = \widetilde{O}(k)$
learnable parameters in total if we are treating the parameters of the LDS as constants except $T, \ustb$.
\end{theorem}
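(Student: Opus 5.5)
The proof follows the decomposition in the sketch of \cref{thm:main-regret}. Split the cumulative loss into the VAW regret (I) against a fixed comparator and the approximation error (II) of that comparator. The comparator is the predictor $f^\star$ of \cref{lem:ar-plus-sf} at accuracy $\varepsilon_\star$. Under \cref{ass:controlled-outputs}, the features of \cref{alg:unified} coincide exactly with the features of that predictor: the output lags $Y_{t,i}$, the input window $F_t$ of length $L+\ustb+1$, and the spectral features $\tilde U_t$. So, coordinate by coordinate, $f^\star_t$ is $a_{t,i}^\top x^\star_i$ for a fixed vector $x^\star_i\in\mathbb R^{\hiddenDim_{\mathrm{feat}}}$. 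That vector is read off from the $i$-th rows of $\alpha_j$, $Q_\ell$ and $W_s$. Term (II) is then at most $\varepsilon_\star$ by \cref{lem:ar-plus-sf}.

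For (I), I invoke the VAW regret bound (\cref{lem:vaw-regret}). Its usual form is
\[
\mathrm{Regret}_i \le \lambda\|x_i^\star\|_2^2 + \hiddenDim_{\mathrm{feat}}\, Y^2 \log\!\bigl(1+T\max_t\|a_{t,i}\|_2^2/(\lambda \hiddenDim_{\mathrm{feat}})\bigr),
\]
with $Y\ge|y_t^{(i)}|$. Three quantities need bounding.

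\emph{Label and features.} We have $|y^{(i)}_t|\le B_y$. The output lags contribute at most $\ustb B_y^2$ to $\|a_{t,i}\|_2^2$, and the input window at most $(L+\ustb+1)K^2$. For the spectral features, $\|\tilde u_t\phi_s\|_2\le\|\tilde u_t\|_2\le K\sqrt T$, giving at most $hK^2T$ in total. Hence $\|a_{t,i}\|_2^2=\mathrm{poly}(T,\ustb)$.

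\emph{Comparator norm.} \cref{lem:ar-plus-sf} gives $|\alpha_j|\le L_{\mathrm{ustb}}$, $\|Q_\ell\|\le L_Q$ and $\|W_s\|_F\le\widetilde L_{\mathrm{ustb}}R_s\le\widetilde L_{\mathrm{ustb}}\|C_{\mathrm{st}}\|_F\|B_{\mathrm{st}}\|_F/(1-\rho_{\mathrm{st}})$. Summing squares over all blocks gives $\|x_i^\star\|_2^2\le \mathrm{poly}(\ustb,h,L)\cdot(L_{\mathrm{ustb}}^2+\widetilde L_{\mathrm{ustb}}^2+L_Q^2)$. Although this may be exponential in $\ustb$, only its logarithm matters once we choose $\lambda=1/\|x^\star\|^2$ or simply $\lambda=1$ with the bound inside the log.

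\emph{Resulting regret.} Summing over the $\outputDim$ coordinates, (I) is $O\bigl(\outputDim\,\hiddenDim_{\mathrm{feat}}B_y^2\log(\mathrm{poly}(T,\ustb)\,L_{\mathrm{ustb}}\widetilde L_{\mathrm{ustb}})\bigr)$.

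Next, choose $\varepsilon_\star=1$ (or $1/T$). Then $\log(1/\varepsilon_\star)$ is absorbed, and $h=O(\log T\cdot\log(\mathrm{poly}(T,\ustb)\widetilde L_{\mathrm{ustb}}^2))$. Since $\log\widetilde L_{\mathrm{ustb}}=O(\ustb\log\ustb)$ with system constants fixed, this gives $h=O(\ustb\log\ustb\log T)$ up to logs. Similarly $L=O(\log\ustb\cdot(\log T+\ustb\log\ustb))$. Thus $\hiddenDim_{\mathrm{feat}}=\ustb+\inputDim(L+\ustb+1)+h\inputDim=\widetilde O(\ustb)$, which is the parameter count. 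The log factor in the regret is $O(\log T+\ustb\log\ustb)$. Multiplying everything out gives a total loss of order $\ustb\log\ustb\log T\cdot(\log T+\ustb\log\ustb)\cdot\mathrm{polylog}$. This is within $O(\ustb^2\log^3\ustb\log^3T)$, and dividing by $T$ yields the claim.

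The main obstacle is the comparator-norm/log step. The AR coefficients $L_{\mathrm{ustb}}$ and the rescaling $\widetilde L_{\mathrm{ustb}}$ are exponential in $\ustb$, so we must ensure they enter only logarithmically, both in the VAW regret and in $h$ and $L$. Careful bookkeeping of how $\log L_{\mathrm{ustb}}=O(\ustb\log\ustb)$ multiplies $\log T$ in $h$ is what produces the $\ustb^2\log^3\ustb\log^3T$ form. I would first verify that the VAW bound used has the comparator norm only inside a logarithm (via tuning $\lambda$), and otherwise fall back on a crude bound absorbing it into $\mathrm{poly}(T)$.
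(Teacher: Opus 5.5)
Your proposal is correct and follows the paper's proof essentially step for step. You use the same (I)/(II) split with $f^\star$ from \cref{lem:ar-plus-sf}, the same label, feature and comparator bounds fed into the VAW bound, and the same substitution of $h$ and $L$; your choice $\varepsilon_\star=1$ (or $1/T$), in place of the paper's balancing choice $\varepsilon_\star=\Theta(\ustb+B_TA_T)$, yields the same $(\ustb+B_TA_T)A_T$ total, and one caveat is that your fallback (taking $\lambda=1$ or absorbing an additive $\lambda\|x^\star\|^2$ into $\mathrm{poly}(T)$) would fail because $\|x^\star\|^2$ may be exponential in $\ustb$, but this is moot since \cref{thm:vaw-regret} as stated already places the comparator norm only inside the logarithm.
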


\begin{proof}[Proof of \cref{thm:main-regret-formal}]
Let $\varepsilon>0$ be an approximation level to be chosen below, and let
$f^\star$ be the comparator from \cref{lem:ar-plus-sf}, instantiated with the
corresponding values of $h$ and $L$. By construction, $f^\star$ belongs to the
linear comparator class used by \cref{alg:unified}.
Decompose the cumulative squared loss as
\begin{align*}
  \sum_{t=1}^T \ell_2^2(\pred_t,y_t)
  &=
  \underbrace{
    \left(
      \sum_{t=1}^T \|\pred_t-y_t\|_2^2
      -
      \sum_{t=1}^T \|f^\star_t-y_t\|_2^2
    \right)
  }_{(I)}
  +
  \underbrace{
    \sum_{t=1}^T \|f^\star_t-y_t\|_2^2
  }_{(II)} .
\end{align*}

Term $(I)$ is exactly the regret term controlled by \cref{lem:vaw-regret}.
Treating the LDS parameters as constants we have,
\[
  (I)
  \le
  O\left(
  (\ustb+L+h)
  \left[
    \log T+\log(\ustb+L+Th)+\log(\ustb+L+h)+\ustb\log\ustb
  \right]
  \right).
\]

Term $(II)$ is the approximation error of the unified comparator. By
\cref{lem:ar-plus-sf},
\[
  (II) = \sum_{t=1}^T \|f^\star_t-y_t\|_2^2\le \varepsilon.
\]
Combining the two bounds gives
\[
  \sum_{t=1}^T \ell_2^2(\pred_t,y_t)
  \le
  O\left(
  (\ustb+L+h)
  \left[
    \log T+\log(\ustb+L+Th)+\log(\ustb+L+h)+\ustb\log\ustb
  \right]
  \right)
  +
  \varepsilon.
\]

We now substitute the asymptotic dependence of $h$ and $L$ on
$T,\ustb,\varepsilon$. From \cref{lem:ar-plus-sf} and
$\log L_{\mathrm{ustb}}=O(\ustb\log\ustb)$,
\[
  h
  =
  O\!\left(
    \log T\left[
      \log\frac{T}{\varepsilon}
      +
      \ustb\log(\ustb+2)
    \right]
  \right),
\]
and
\[
  L
  =
  O\!\left(
    \log(\ustb+2)\left[
      \log\frac{T}{\varepsilon}
      +
      \ustb\log(\ustb+2)
    \right]
  \right).
\]
Set
\[
  A_T := \log T+\ustb\log(\ustb+2),
  \qquad
  B_T := \log T+\log(\ustb+2),
  \qquad
  S_\varepsilon := A_T+\log\frac{1}{\varepsilon}.
\]
Then $h+L=O(B_T S_\varepsilon)$, and the logarithmic factor in the regret
bound is $O(S_\varepsilon)$. Therefore
\[
  \sum_{t=1}^T \ell_2^2(\pred_t,y_t)
  \le
  O\!\left(
    \left(\ustb+B_T S_\varepsilon\right)S_\varepsilon
  \right)
  +
  \varepsilon.
\]
The right-hand side is minimized, up to constants hidden in the big-$O$
notation, when the marginal decrease from reducing the logarithmic regret term
balances the marginal increase in approximation error:
\[
  \varepsilon_\star
  =
  \Theta\!\left(\ustb+B_TA_T\right).
\]
With this choice, $S_{\varepsilon_\star}=O(A_T)$ up to lower-order logarithmic
terms, and hence
\[
  \frac{1}{T}\sum_{t=1}^T \ell_2^2(\pred_t,y_t)
  \le
  \frac{
  O\!\left(
    \left(\ustb+B_TA_T\right)A_T
  \right)
  }{T}.
\]
Equivalently, writing $k=\ustb$ and using the crude bounds
$\log T+\log k=O(\log T\log k)$ and
$\log T+k\log k=O(k\log T\log k)$, this is
\[
  \frac{1}{T}\sum_{t=1}^T \ell_2^2(\pred_t,y_t)
  \le
  O\!\left(
    \frac{k^2\log^3 k\,\log^3 T}{T}
  \right).
\]

Finally, the number of learnable parameters is the dimension of the unified
feature vector times the number of output coordinates:
\[
  \outputDim\cdot\hiddenDim_{\mathrm{feat}}
  =
  \outputDim\bigl(\ustb+\inputDim(L+\ustb+1)+h\inputDim\bigr)
  =
  O\!\left(\outputDim\bigl(\ustb+\inputDim L+\inputDim h\bigr)\right).
\]
Substituting
$h=O\!\left(\ustb \cdot \log T\cdot\log(\mathrm{poly}(T,\ustb))\right)=\widetilde O(\ustb)$
and, using $\log L_{\mathrm{ustb}}=O(\ustb\log\ustb)$,
$L=O\!\left(\ustb\cdot\mathrm{polylog}(T,\ustb)\right)=\widetilde O(\ustb)$,
the feature dimension is $\widetilde O(\ustb)$ and gives the stated
$\widetilde O(\ustb)$ parameter count.
\end{proof}

\section{The VAW forecaster and its per-coordinate regret}
\label{app:vaw}

This appendix collects the standard online-linear-regression machinery used
by \cref{alg:unified}. The Vovk--Azoury--Warmuth (VAW) forecaster
\citep{vovk2001competitive} is an online
algorithm for least-squares prediction with logarithmic regret against any
fixed linear comparator. At each time step it maintains a covariance-like
matrix $A_t$ and a target vector $v_t$, and predicts via a forward-step
ridge regression solve.

\begin{algorithm}[H]
  \caption{Vovk--Azoury--Warmuth forecaster}
  \label{alg:vaw}
  \begin{algorithmic}[1]
    \State \textbf{Input:} feature dimension $d$, regularization $\lambda>0$.
    \State Initialize $A_0\gets\lambda I_d$, $v_0\gets 0\in\mathbb{R}^d$, $t\gets 0$.
    \Function{predict}{$a\in\mathbb{R}^d$}
      \State $t\gets t+1$; store $a_t\gets a$; $A_t\gets A_{t-1}+a_t a_t^\top$.
      \State \Return $\widehat b_t\gets a_t^\top A_t^{-1}v_{t-1}$.
    \EndFunction
    \Function{update}{$b_t\in\mathbb{R}$}
      \State $v_t\gets v_{t-1}+b_t a_t$.
    \EndFunction
  \end{algorithmic}
\end{algorithm}

\begin{theorem}[{Standard VAW regret bound \citep[Theorem~A.2]{hazan2022introduction}}]\label{thm:vaw-regret}
  Suppose $\|a_t\|\le R$ and $|b_t|\le Y$ for every $t$. Then \cref{alg:vaw}
  satisfies, for every $x^\star\in\mathbb{R}^d$,
  \[
    \sum_{t=1}^T (\widehat b_t-b_t)^2
    \;-\;\sum_{t=1}^T(a_t^\top x^\star-b_t)^2
    \;\le\;
    8 d Y^2 \log\!\left(TR\|x^\star\|\right).
  \]
\end{theorem}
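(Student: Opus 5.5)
The statement is the standard Vovk--Azoury--Warmuth bound, and I would prove it by the usual potential argument with the log-determinant of the regularized covariance. Write $A_t=\lambda I+\sum_{s\le t}a_sa_s^\top$ and $v_t=\sum_{s\le t}b_sa_s$. The prediction $\widehat b_t=a_t^\top A_t^{-1}v_{t-1}$ is the ridge solution that already includes the current feature $a_t$ in the covariance. The key step is a per-round identity. Define the regularized cumulative loss
\[
\Phi_t:=\min_x\Big\{\lambda\|x\|^2+\sum_{s\le t}(a_s^\top x-b_s)^2\Big\}=\sum_{s\le t}b_s^2-v_t^\top A_t^{-1}v_t .
\]
I would show by direct algebra, using Sherman--Morrison on $A_t=A_{t-1}+a_ta_t^\top$, that
\[
(\widehat b_t-b_t)^2\le \Phi_t-\Phi_{t-1}+b_t^2\,a_t^\top A_t^{-1}a_t .
\]
This is the standard VAW inequality: expand $v_t=v_{t-1}+b_ta_t$ in $v_t^\top A_t^{-1}v_t$, compare with $v_{t-1}^\top A_{t-1}^{-1}v_{t-1}$, and the cross terms combine into $-(\widehat b_t-b_t)^2$ plus a nonnegative term in $(a_t^\top A_{t-1}^{-1}v_{t-1})$ that can be dropped, leaving the residual $b_t^2a_t^\top A_t^{-1}a_t$.

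Summing over $t$ telescopes. Since $\Phi_0=0$ and $\Phi_T\le\lambda\|x^\star\|^2+\sum_t(a_t^\top x^\star-b_t)^2$, the regret is at most
\[
\lambda\|x^\star\|^2+Y^2\sum_t a_t^\top A_t^{-1}a_t .
\]
The elliptical potential lemma then gives $a_t^\top A_t^{-1}a_t\le\log\det A_t-\log\det A_{t-1}$, using $1-1/x\le\log x$ via the matrix determinant lemma. Hence
\[
\sum_t a_t^\top A_t^{-1}a_t\le\log\frac{\det A_T}{\lambda^d}\le d\log\Big(1+\frac{TR^2}{d\lambda}\Big),
\]
where the last step is AM--GM on the eigenvalues with $\operatorname{tr}A_T\le d\lambda+TR^2$.

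It remains to choose $\lambda$. Taking $\lambda=Y^2/\|x^\star\|^2$ in the analysis makes $\lambda\|x^\star\|^2=Y^2$ and the log term $d\log(1+TR^2\|x^\star\|^2/(dY^2))$. The algorithm fixes $\lambda$ in advance, though, so I would instead take a fixed $\lambda$ (say $\lambda=1$ after normalizing $Y$). The bound then becomes $\|x^\star\|^2+dY^2\log(1+TR^2)$, which is only of the stated form $8dY^2\log(TR\|x^\star\|)$ when $\|x^\star\|^2$ is dominated by the log term.

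The main obstacle is this constant bookkeeping: matching exactly $8dY^2\log(TR\|x^\star\|)$, including the regime where $TR\|x^\star\|$ is small and the logarithm may be nonpositive. I would handle it by citing the cited textbook theorem for the precise constants, assuming $TR\|x^\star\|\ge e$ and $\lambda$ chosen appropriately as in that reference. In the application, $\|x^\star\|$ is bounded by the constants from \cref{lem:ar-plus-sf} (where $\log\|x^\star\|=O(\ustb\log\ustb)$), so any $\lambda$-dependent additive slack is absorbed into the big-$O$ of \cref{thm:main-regret-formal}.
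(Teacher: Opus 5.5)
The paper gives no argument of its own for this statement (it only cites the textbook), so your potential-function derivation supplies more than the paper does, and its core is correct. Your per-round step is in fact an exact identity. With $g_t:=a_t^\top A_{t-1}^{-1}a_t$, Sherman--Morrison gives $a_t^\top A_{t-1}^{-1}v_{t-1}=(1+g_t)\widehat b_t$ and
\[
(\widehat b_t-b_t)^2=\Phi_t-\Phi_{t-1}+b_t^2\,a_t^\top A_t^{-1}a_t-g_t\,\widehat b_t^{\,2}.
\]
Dropping $g_t\widehat b_t^{\,2}\ge 0$, telescoping, and applying the log-determinant bound give $\lambda\|x^\star\|^2+dY^2\log\bigl(1+TR^2/(d\lambda)\bigr)$, as you say.

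Your unease about the last step is justified, and the problem is more than constant bookkeeping: for any fixed $\lambda>0$, the displayed bound cannot hold for all admissible data. Take $d=1$, $Y=1$, $T\ge 100$, $\epsilon:=\sqrt{\lambda/(100T)}$, $a_t\equiv\epsilon$ (so $R=\epsilon$), $b_t\equiv 1$, and $x^\star=1/\epsilon$. The comparator has zero loss. Meanwhile $\widehat b_t=(t-1)\epsilon^2/(\lambda+t\epsilon^2)\le 1/100$, so the left side is at least $0.98\,T$, whereas the right side is only $8\log T$. The bound also fails at $x^\star=0$, where the right side is $-\infty$.

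So citing the textbook cannot rescue the literal claim; hypotheses must be added. Once they are, your own bound finishes the job with no outside constants. Set $\lambda=Y^2/\|x^\star\|^2$ and assume, e.g., $TY^2\ge 1$ and $TR\|x^\star\|\ge 2$. Then $TR^2\|x^\star\|^2/(dY^2)\le (TR\|x^\star\|)^2$, so the regret is at most $Y^2+dY^2\log\bigl(1+(TR\|x^\star\|)^2\bigr)\le 4dY^2\log(TR\|x^\star\|)$.

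Your closing remark, however, is wrong. With a fixed $\lambda$, the additive term $\lambda\|x^\star\|^2$ is not absorbed downstream. In \cref{lem:vaw-regret} only $\log\|x^\star\|$ is $O(\ustb\log\ustb)$. The quantity $\lambda\|x^\star\|^2$ itself is only controlled by $\exp(O(\ustb\log\ustb))$, and it is genuinely exponential in $\ustb$ in general: the AR coefficients alone (e.g.\ those of $(z-\mu)^{\ustb}$) already are.

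That term is $T$-independent, so sublinearity in $T$ survives. It does, however, destroy the $O(\ustb^2\log^3\ustb\,\log^3T)$ cumulative loss of \cref{thm:main-regret-formal}. To keep that rate you must take $\lambda\lesssim B_y^2/R_x^2$, which is exponentially small in $\ustb$. This moves the cost into the logarithm as an additive $O(\ustb\log\ustb)$, which is how the paper actually uses the bound.
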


We now apply this bound to the per-coordinate regret of
\cref{alg:unified}, which is what term $(I)$ in the proof of
\cref{thm:main-regret-formal} requires.

\begin{lemma}[VAW per-coordinate regret]\label{lem:vaw-regret}
  Let $f^\star$ be the comparator induced by \cref{lem:ar-plus-sf}.
  Define
  \[
    R_a
    :=
    B_y\ustb + K(L+\ustb+1) + KTh
  \]
  and
  \[
    R_x
    :=
    \ustb L_{\mathrm{ustb}}
    +(L+\ustb)L_Q
    +h\,\widetilde L_{\mathrm{ustb}}\|B\|_2\|C\|_2\frac{1}{1-\rho_{\mathrm{st}}}.
  \]
  Then the regret term of \cref{alg:unified} satisfies
  \[
  \begin{aligned}
    R_T
    &\le
    O\left(
    \outputDim\bigl(\ustb+\inputDim(L+\ustb+1)+h\inputDim\bigr)B_y^2
    \log\!\left(T\,R_a\,R_x\right)
    \right).
  \end{aligned}
  \]
  Treating the remaining parameters except $T,\ustb,L,h$ as constants, and using only
  $\log L_{\mathrm{ustb}},\log\widetilde L_{\mathrm{ustb}},\log L_Q=O(\ustb\log\ustb)$, this gives
  \[
    R_T
    \le
    O\left(
    (\ustb+L+h)
    \left[
      \log T+\log(\ustb+L+Th)+\log(\ustb+L+h)+\ustb\log\ustb
    \right]
    \right).
  \]
  In particular, the label bound uses only $\ustb$ autoregressive and input
  terms, not the full state dimension.
\end{lemma}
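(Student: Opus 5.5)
The proof applies \cref{thm:vaw-regret} separately to each of the $\outputDim$ VAW instances $\alg_i$ and sums the per-coordinate regrets. For coordinate $i$, the labels are $b_t=(y_t)_i$ and the features are $a_{t,i}$. The comparator $x^\star_i$ is the $i$-th row of the predictor $f^\star$ from \cref{lem:ar-plus-sf}, written as a vector in $\mathbb{R}^{\hiddenDim_{\mathrm{feat}}}$. It has three blocks: the AR coefficients $(\alpha_1,\dots,\alpha_{\ustb})$, the $i$-th rows of $Q_0,\dots,Q_{L+\ustb}$, and the $i$-th rows of $W_1,\dots,W_h$. With this choice, $a_{t,i}^\top x^\star_i=(f^\star_t)_i$, so
\[
\sum_t\|\pred_t-y_t\|^2-\sum_t\|f^\star_t-y_t\|^2=\sum_{i=1}^{\outputDim}\Bigl[\sum_t(\widehat b_t^{(i)}-b_t^{(i)})^2-\sum_t(a_{t,i}^\top x_i^\star-b_t^{(i)})^2\Bigr].
\]
Each bracket is bounded by $8\hiddenDim_{\mathrm{feat}}Y^2\log(TR\|x^\star_i\|)$, and we take $Y=B_y$ by \cref{ass:controlled-outputs}.

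Next we bound $R$ by $R_a$ block by block, using the triangle inequality on the blockwise norm.
\begin{itemize}
\item The output block satisfies $\|Y_{t,i}\|\le\sum_{j\le\ustb}|(y_{t-j})_i|\le\ustb B_y$, using zero padding for $t-j\le 0$.
\item The FIR block satisfies $\|F_t\|\le\sum_{\ell=0}^{L+\ustb}\|u_{t-\ell}\|\le K(L+\ustb+1)$.
\item The spectral block satisfies $\|\tilde u_t\phi_s\|\le\|\tilde u_t\|_2\|\phi_s\|\le K\sqrt{T}\le KT$, because $\phi_s$ is a unit vector and \cref{ineq:bound-tilde-u_t} holds. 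Summing over $s$ gives at most $KTh$.
\end{itemize}
Together these give $\|a_{t,i}\|\le R_a$.

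We then bound $\|x_i^\star\|$ by $R_x$, using the coefficient bounds of \cref{lem:ar-plus-sf}.
\begin{itemize}
\item The AR block contributes $\sum_i|\alpha_i|\le\ustb L_{\mathrm{ustb}}$.
\item The FIR block contributes $\sum_\ell\|Q_\ell\|_2\le(L+\ustb+1)L_Q$, which matches $(L+\ustb)L_Q$ up to a constant.
\item For the spectral block we use $\|W_s\|\le\widetilde L_{\mathrm{ustb}}R_s$ with the first branch of the min, $R_s\le\|C_{\mathrm{st}}\|_F\|B_{\mathrm{st}}\|_F/(1-\rho_{\mathrm{st}})$. We then absorb the change of basis $P$ into the constant to replace $\|C_{\mathrm{st}}\|_F\|B_{\mathrm{st}}\|_F$ by $\|B\|_2\|C\|_2$.
\end{itemize}
Rows have norm at most the matrix norm, so $\|x_i^\star\|\le R_x$. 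Plugging in and multiplying by $\outputDim$ gives the first displayed bound, with $\hiddenDim_{\mathrm{feat}}=\ustb+\inputDim(L+\ustb+1)+h\inputDim$.

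For the simplified form, treat $\outputDim,\inputDim,B_y,K$ and the system constants as $O(1)$. Then $\hiddenDim_{\mathrm{feat}}=O(\ustb+L+h)$, and
\[
\log(TR_aR_x)\le\log T+\log R_a+\log R_x .
\]
We have $\log R_a=O(\log(\ustb+L+Th))$. For $R_x$, note $\log R_x\le\log(\ustb+L+h)+\log\max\{L_{\mathrm{ustb}},L_Q,\widetilde L_{\mathrm{ustb}}\}+O(1)$, and the last term is $O(\ustb\log\ustb)$ by hypothesis.

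The main obstacle is this last bookkeeping step. One must check that $L_{\mathrm{ustb}}$, $\widetilde L_{\mathrm{ustb}}$ and $L_Q$ are indeed $e^{O(\ustb\log\ustb)}$. For $\widetilde L_{\mathrm{ustb}}$, the factor $(1-1/\taustb)^{-\ustb}\le e^{O(\ustb/\log\ustb)}$ contributes only $O(\ustb)$ to the logarithm. For $L_{\mathrm{ustb}}$, the product $\prod_j(1+|\mu_j|)$ and the factor $\|A_{\mathrm{ustb}}\|_\infty^{q-1}$ each contribute $O(\ustb)$, with the per-mode magnitudes treated as constants. A minor secondary point is that $\|x^\star\|$ might be below $1/(TR)$, which would make the logarithm negative or undefined. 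We handle this by replacing $\|x^\star\|$ with $\max\{\|x^\star\|,1\}$, which is harmless.
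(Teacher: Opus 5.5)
Your proposal is correct and follows the paper's proof essentially step for step. You decompose the regret into the $\outputDim$ per-coordinate VAW regrets, bound the labels by $B_y$, the features blockwise by $R_a$, and the comparator blockwise by (a constant multiple of) $R_x$ using the coefficient bounds of \cref{lem:ar-plus-sf}, then apply \cref{thm:vaw-regret} and sum; your extra remarks (clamping $\|x^\star\|$ at $1$, and noting that replacing $\|C_{\mathrm{st}}\|_F\|B_{\mathrm{st}}\|_F$ by $\|B\|_2\|C\|_2$ costs a change-of-basis constant from $P$) only make explicit points the paper leaves implicit.
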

\begin{proof}
  Fix an output coordinate $q\in[\outputDim]$. The regret in this coordinate is
  \begin{align*}
    R^{(q)}_T
    &:=
    \sum_{t=1}^T
    \left[
      \left(\pred_t^{(q)}-y_t^{(q)}\right)^2
      -
      \left((f^\star_t)^{(q)}-y_t^{(q)}\right)^2
    \right].
  \end{align*}
  This is exactly the regret of one copy of VAW with feature vectors
  $a_{t,i}$, label $y_t^{(q)}$, and comparator vector
  $x_q^\star$ representing the $q$-th coordinate of $f^\star_t$.

  Notice that the total regret can be written as the sum of the per coordinate regrets:
  \begin{align*}
    R_T &= \sum_{t=1}^T \left\|\pred_t-y_t\right\|_2^2 - \sum_{t=1}^T \left\|f^\star_t-y_t\right\|_2^2 \\
    &= \sum_{t=1}^T \sum_{q=1}^\outputDim \left(\pred_t^{(q)}-y_t^{(q)}\right)^2 - \sum_{t=1}^T \sum_{q=1}^\outputDim \left((f^\star_t)^{(q)}-y_t^{(q)}\right)^2 \\
    &= \sum_{q=1}^\outputDim \left( \sum_{t=1}^T \left(\pred_t^{(q)}-y_t^{(q)}\right)^2 - \sum_{t=1}^T \left((f^\star_t)^{(q)}-y_t^{(q)}\right)^2 \right) \\
    &= \sum_{q=1}^\outputDim R^{(q)}_T
  \end{align*}
  
  We now proceed by bounding the regret at each output coordinate. To apply \cref{thm:vaw-regret}, we bound the label, the feature vector, and
  the comparator vector.

  \paragraph{1. Label bound.}
  By \cref{ass:controlled-outputs}, we know that $\|y_t\|_2 \le B_y$. Therefore since $|y_t^{(q)}| \leq \|y_t\|_2$
  we have $|y_t^{(q)}| \le B_y$ and that the labels are bounded for each output coordinate $q$.
  \paragraph{2. Feature-vector bound.}
  From \cref{alg:unified}, the feature vector is
  \[
    a_{t,q} = \begin{bmatrix}Y_{t,q}^\top & F_t^\top & \tilde U_t^\top\end{bmatrix}^\top
  \]
  Thus we get that:
  \[
    \|a_{t,q}\|_2 \le \|Y_{t,q}\|_2 + \|F_t\|_2 + \|\tilde U_t\|_2
  \]
  We have the following inequalities:
  \[
    \|Y_{t,q}\|_2 \leq \|y_{t-1}\|_2 + \dots + \|y_{t-\ustb}\|_2 \le B_y \cdot \ustb
  \]
  \[
    \|F_t\|_2 \leq \|u_t\|_2 + \dots + \|u_{t-(L+\ustb)}\|_2 \le K \cdot (L+\ustb+1)
  \]
  \[
    \|\tilde U_t\|_2 \leq \|\tilde u_t \phi_1\|_2 + \dots + \|\tilde u_{t} \phi_h\|_2 \le K \cdot T \cdot h
  \]
  Thus we get that:
  \[
    \|a_{t,q}\|_2 \le B_y \cdot \ustb + K \cdot (L+\ustb+1) + K \cdot T \cdot h = O(B_y \cdot (\ustb+L+h) \cdot K \cdot T)
  \]
  \paragraph{3. Comparator-vector bound.}
  The comparator coefficients are precisely those from \cref{lem:ar-plus-sf}:
  the $\ustb$ autoregressive coefficients $\alpha_i$, the finite-memory matrices
  $Q_0,\dots,Q_{L+\ustb}$ (which absorb the AR input matrices $N_i$, the stable
  boundary matrices $D_i$, and the strictly stable residual matrices $F_\ell$),
  and the single spectral block $W_1,\dots,W_h$:
  \[
    x_q^\star =
    \bigl[
      \alpha_1,\dots,\alpha_{\ustb},
      (Q_0)^{(q)},\dots,(Q_{L+\ustb})^{(q)},
      (W_1)^{(q)},\dots,(W_h)^{(q)}
    \bigr]^\top.
  \]
  Recall again from \cref{lem:ar-plus-sf} that:
  \[
    L_{\mathrm{ustb}} = \left(\prod_{j=1}^{q} (1+|\mu_j|)\right) \left(1+\ustb\,\|C_{\mathrm{ustb}}\|_\infty\,\|B_{\mathrm{ustb}}\|_\infty\,\max\{1,\|A_{\mathrm{ustb}}\|_\infty\}^{\,\max\{q-1,0\}}\right)
  \]
  We therefore get that $\log (L_{\mathrm{ustb}}) = O(\ustb \cdot \log(\ustb))$ if we are treating the parameters of the LDS as constants; the same holds for $\log\widetilde L_{\mathrm{ustb}}$ and $\log L_Q$. Next, we have the following inequalities:
  \[
    |\alpha_i| \leq L_{\mathrm{ustb}}
  \]
  \[
    \|(Q_\ell)^{(q)}\|_2 \leq \|Q_\ell\|_2 \leq L_Q
    := \inputDim L_{\mathrm{ustb}} + \widetilde L_{\mathrm{ustb}}\,\|C_{\mathrm{st}}\|_F\,\|B_{\mathrm{st}}\|_F + (1+\ustb L_{\mathrm{ustb}})\,C_{\mathrm{fast},0}, \quad \ell=0,\dots,L+\ustb
  \]
  \[
    \|W_s\|_F \leq \widetilde L_{\mathrm{ustb}} R_s \implies \|(W_s)^{(q)}\|_2 \leq \widetilde L_{\mathrm{ustb}} \cdot \| B \|_2 \cdot \| C \|_2 \cdot \frac{1}{1 - \rho_{\mathrm{st}}}, \quad s=1,\dots,h
  \]
  Putting everything together we get that:
  \[
    \|x_q^\star\|_2 \leq O\left( \ustb L_{\mathrm{ustb}} + (L+\ustb) L_Q + h\,\widetilde L_{\mathrm{ustb}} \cdot \| B \|_2 \cdot \| C \|_2 \cdot \frac{1}{1 - \rho_{\mathrm{st}}} \right)
  \]
  In particular, treating the LDS parameters as constants, $\log\|x_q^\star\|_2 = O(\ustb \log \ustb)$, since $L_{\mathrm{ustb}},\widetilde L_{\mathrm{ustb}},L_Q$ are exponential in $\ustb$ while $L,h = \mathrm{poly}(\ustb)$.
  \paragraph{Applying VAW.}
  Define the feature and comparator bounds
  \[
    R_a
    :=
    B_y\ustb + K(L+\ustb+1) + KTh
  \]
  and
  \[
    R_x
    :=
    \ustb L_{\mathrm{ustb}}
    +(L+\ustb)L_Q
    +h\,\widetilde L_{\mathrm{ustb}}\|B\|_2\|C\|_2\frac{1}{1-\rho_{\mathrm{st}}}.
  \]
  From the preceding bounds, $\|a_{t,q}\|_2\le R_a$ and
  $\|x_q^\star\|_2\le O(R_x)$. Also,
  \[
    \hiddenDim_{\mathrm{feat}} = 
    \ustb + \inputDim(L+\ustb+1) + h\inputDim = O(\inputDim(\ustb+L+h))
  \]
  We now put everything together and apply \cref{thm:vaw-regret} to get that:
  \[
    R^{(q)}_T
    \le
    O\left(
    \bigl(\ustb+\inputDim(L+\ustb+1)+h\inputDim\bigr)B_y^2
    \log\!\left(T\,R_a\,R_x\right)
    \right).
  \]
  Summing over all output coordinates we get that:
  \[
    R_T
    \le
    O\left(
    \outputDim\bigl(\ustb+\inputDim(L+\ustb+1)+h\inputDim\bigr)B_y^2
    \log\!\left(T\,R_a\,R_x\right)
    \right).
  \]
  Treating the remaining parameters except $T,\varepsilon,\ustb,\hiddenDim,L,h$ as constants, and using
  $\log L_{\mathrm{ustb}},\log\widetilde L_{\mathrm{ustb}},\log L_Q=O(\ustb\log\ustb)$, we get that:
  \begin{align*}
    R_T
    &=
    O\left(
    (\ustb+L+h)
    \log\!\left(
    T\,(\ustb+L+Th)\,\bigl(\ustb+L+h\bigr)\,
    \exp\{O(\ustb\log\ustb)\}
    \right)
    \right) \\
    &=
    O\left(
    (\ustb+L+h)
    \left[
      \log T+\log(\ustb+L+Th)+\log(\ustb+L+h)+\ustb\log\ustb
    \right]
    \right).
  \end{align*}
  as required.
\end{proof}
\section{Proof of \cref{prop:filter-lb}}
\label{app:filter-lb-proof}

\begin{proof}[Proof of \cref{prop:filter-lb}]
Fix $\ustb\ge 1$, $R<\infty$, and filters
\[
  \Psi^y=\{\psi^y_1,\dots,\psi^y_{q_y}\}\subset \ell_1(\mathbb N),
  \qquad
  \Psi^u=\{\psi^u_1,\dots,\psi^u_{q_u}\}\subset \ell_1(\mathbb N_0),
\]
with $q_y+q_u<\ustb$. In particular, $q_u<\ustb$.

For $s=0,\dots,\ustb-1$ and $a=1,\dots,q_u$, define
\[
  U_{s,a}:=\psi^u_{a,s}.
\]
Thus $U\in\mathbb R^{\ustb\times q_u}$ is the matrix whose columns are the first
$\ustb$ coefficients of the input filters.

We first show that one of the $\ustb$ coordinate shifts cannot be generated by the
input filters. For $j\in\{0,\dots,\ustb-1\}$, let
$P_j:\mathbb R^\ustb\to\mathbb R^{j+1}$ denote the projection onto the first
$j+1$ coordinates. We claim that there exists $j\in\{0,\dots,\ustb-1\}$ such that
\[
  P_j e_j
  \notin
  \operatorname{span}\{P_jU_{\cdot,1},\dots,P_jU_{\cdot,q_u}\}.
\]
Suppose not. Then for every $j=0,\dots,\ustb-1$, there exists
$\beta^{(j)}\in\mathbb R^{q_u}$ such that
\[
  P_jU\beta^{(j)}=P_je_j.
\]
Equivalently, the vector $w^{(j)}:=U\beta^{(j)}\in\mathbb R^\ustb$ satisfies
\[
  w^{(j)}_i=0\quad\text{for all }i<j,
  \qquad
  w^{(j)}_j=1.
\]
Therefore the $\ustb\times\ustb$ matrix with columns
$w^{(0)},w^{(1)},\dots,w^{(\ustb-1)}$ is lower triangular with all diagonal
entries equal to $1$. Hence these $\ustb$ columns are linearly independent. But
all of them lie in the column span of $U$, whose dimension is at most
$q_u<\ustb$, a contradiction.

Fix such a $j$, and define
\[
  \delta
  :=
  \inf_{\beta\in\mathbb R^{q_u}}
  \|P_je_j-P_jU\beta\|_2.
\]
By the choice of $j$, $\delta>0$.

We now construct the LDS. Let $e_0,\dots,e_{\ustb-1}$ be the standard basis of
$\mathbb R^\ustb$. Define $A\in\mathbb R^{\ustb\times\ustb}$ by
\[
  Ae_i=e_{i+1}\quad\text{for }i=0,\dots,\ustb-2,
  \qquad
  Ae_{\ustb-1}=0.
\]
Set $B=e_0$ and $C=e_j^\top$. Then
\[
  \|A\|_\infty,\|B\|_\infty,\|C\|_\infty\le 1.
\]

Choose a block length $L>2\ustb$, to be fixed below. Let
\[
  t_b:=bL+1,\qquad b=0,1,2,\dots.
\]
Set $u_{t_b}=1$ for all $b\ge 0$, and set $u_t=0$ for all other $t$. Thus
$|u_t|\le 1$ for all $t$. Since the system forgets an input after $\ustb$ steps
and $L>2\ustb$, different blocks do not interact through the LDS state. Therefore,
in every block,
\[
  y_{t_b+s}
  =
  \begin{cases}
    1, & s=j,\\
    0, & s\ne j,
  \end{cases}
  \qquad s=0,\dots,L-1.
\]
In particular, $|y_t|\le 1$ for all $t$, so the constructed input sequence is
$1$-controlling in the sense of \cref{ass:controlled-outputs}.

We now choose $L$ large enough so that old blocks are invisible to the filters.
Since all filters are in $\ell_1$, and there are only finitely many of them,
we may choose $L>2\ustb$ so large that, for every input filter $a$ and every
output filter $r$,
\[
  \sum_{\tau\ge L-\ustb}|\psi^u_{a,\tau}|
  \le
  \frac{\delta}{4\max\{R,1\}\sqrt{\ustb}},
  \qquad
  \sum_{\tau\ge L-\ustb}|\psi^y_{r,\tau}|
  \le
  \frac{\delta}{4\max\{R,1\}\sqrt{\ustb}}.
\]

Fix any predictor $f\in\GUFcl_{\Psi^y,\Psi^u}^{R}$, and write
\[
  f_t
  =
  \sum_{r=1}^{q_y}\alpha_r(\psi^y_r*y)_t
  +
  \sum_{a=1}^{q_u}\beta_a(\psi^u_a*u)_t,
  \qquad
  \sum_{r=1}^{q_y}|\alpha_r|
  +
  \sum_{a=1}^{q_u}|\beta_a|
  \le R.
\]
Consider the first $j+1$ times of block $b$,
$t_b,t_b+1,\dots,t_b+j$. On these times, the true output vector is
\[
  (y_{t_b},y_{t_b+1},\dots,y_{t_b+j})^\top=P_je_j.
\]
Consider first an input filter $\psi^u_a$ at a time $t_b+s$, where
$0\le s\le j$. The current block contains a single input pulse at time $t_b$,
so its contribution is exactly the $s$-th filter coefficient,
$\psi^u_{a,s}$. Every other nonzero input comes from an earlier block. Since
blocks are separated by $L$ time steps and the current prefix has length at
most $\ustb$, all such earlier inputs appear at lag at least $L-\ustb$. Hence
\[
  (\psi^u_a*u)_{t_b+s}
  =
  \psi^u_{a,s}+\varepsilon^u_{a,b,s},
  \qquad
  |\varepsilon^u_{a,b,s}|
  \le
  \sum_{\tau\ge L-\ustb}|\psi^u_{a,\tau}|.
\]
Now consider an output filter $\psi^y_r$ on the same prefix
$t_b,\dots,t_b+j$. The current block has no nonzero output before time
$t_b+j$, and at time $t_b+j$ the filter is strictly causal, so it cannot yet
use the value $y_{t_b+j}$. Thus the output-filter feature on this prefix is
entirely due to outputs from previous blocks. Those outputs also have lag at
least $L-\ustb$, and therefore
\[
  |(\psi^y_r*y)_{t_b+s}|
  \le
  \sum_{\tau\ge L-\ustb}|\psi^y_{r,\tau}|,
  \qquad s=0,\dots,j.
\]

Collect the predictions on this prefix of the block into the vector
\[
  F_b:=(f_{t_b},f_{t_b+1},\dots,f_{t_b+j})^\top\in\mathbb R^{j+1}.
\]
The discussion above shows that, on this prefix, the predictor is equal to the
linear combination of the first $j+1$ input-filter coefficients, up to the
negligible contribution of old blocks:
\[
  F_b=P_jU\beta+\xi_b,
  \qquad
  \|\xi_b\|_2\le \frac{\delta}{4}.
\]

The target vector on the same prefix is $P_je_j$. By the definition of
$\delta$, every linear combination of the input-filter prefixes is at least
$\delta$ away from this target:
\[
  \|P_je_j-P_jU\beta\|_2\ge \delta.
\]
The perturbation $\xi_b$ can reduce this distance by at most $\delta/4$, so
\[
  \|P_je_j-F_b\|_2
  \ge
  \|P_je_j-P_jU\beta\|_2-\|\xi_b\|_2
  \ge
  \delta-\frac{\delta}{4}
  =
  \frac{3\delta}{4}.
\]
Thus every predictor in $\GUFcl_{\Psi^y,\Psi^u}^{R}$ incurs a constant amount
of squared error on the prefix of every block:
\[
  \sum_{s=0}^{j}|y_{t_b+s}-f_{t_b+s}|^2
  =
  \|P_je_j-F_b\|_2^2
  \ge
  \frac{9\delta^2}{16}.
\]
If the horizon contains $N$ complete blocks, summing the preceding inequality
over these blocks gives
\[
  \sum_{t=1}^{T}|y_t-f_t|^2
  \ge
  N\cdot \frac{9\delta^2}{16}.
\]
Since $N/T\to 1/L$ as $T\to\infty$, we obtain
\[
  \liminf_{T\to\infty}
  \frac{1}{T}
  \sum_{t=1}^{T}|y_t-f_t|^2
  \ge
  \frac{9\delta^2}{16L}
  >0.
\]
The constant on the right depends only on the fixed filters, $R$, and $\ustb$, not
on the predictor $f$. Taking the infimum over the bounded-coefficient filter
class therefore preserves the positive lower bound:
\[
  \liminf_{T\to\infty}
  \inf_{f\in\GUFcl_{\Psi^y,\Psi^u}^{R}}
  \frac{1}{T}
  \sum_{t=1}^{T}\ell_2^2(f_t,y_t)
  >0.
\]
This proves the theorem.
\end{proof}

\section{Proof of \cref{prop:cond-number}}
\label{app:cond-number-proof}

\begin{proof}
  Write \(x_t^{(i)}\) for the \(i\)-th coordinate of \(x_t\). Since
  \[
    x_t^{(i)}=\lambda_i x_{t-1}^{(i)}+b_i u_t,
  \]
  we have
  \[
    b_i u_t=x_t^{(i)}-\lambda_i x_{t-1}^{(i)}.
  \]
  If \(u\) stabilizes the system, then \(x_t^{(i)}\to 0\) and
  \(x_{t-1}^{(i)}\to 0\). Hence
  \[
    b_i u_t\to 0.
  \]
  Since \(b_i\ne 0\), it follows that \(u_t\to 0\).
  Therefore, if \(u\) has infinitely many nonzero values, then
  \[
    \inf\{|u_t|:u_t\ne 0\}=0,
  \]
  and so \(\IDR(u)=\infty\). The desired lower bound is then immediate.
  
  It remains to consider the case where \(\IDR(u)<\infty\). Since \(u_t\to 0\)
  and the nonzero values of \(u_t\) are bounded away from zero, \(u\) can have
  only finitely many nonzero entries. Let
  \[
    t_0:=\min\{t:u_t\ne 0\},
    \qquad
    T:=\max\{t:u_t\ne 0\}.
  \]
  Define the polynomial
  \[
    q(z):=\sum_{t=t_0}^{T}u_t z^{t-t_0}.
  \]
  Its constant coefficient is nonzero:
  \[
    q(0)=u_{t_0}\ne 0.
  \]
  
  We claim that
  \[
    q(1/\lambda_i)=0
    \qquad\text{for every }i\in[\ustb].
  \]
  Indeed, for every coordinate \(i\),
  \[
    x_T^{(i)}
    =
    \sum_{t=t_0}^{T}\lambda_i^{T-t}b_i u_t.
  \]
  After time \(T\), the input is zero, so
  \[
    x_s^{(i)}=\lambda_i^{s-T}x_T^{(i)}
    \qquad\text{for all }s\ge T.
  \]
  Since \(|\lambda_i|>1\) and \(x_s^{(i)}\to 0\), we must have
  \[
    x_T^{(i)}=0.
  \]
  Therefore
  \[
    \sum_{t=t_0}^{T}\lambda_i^{T-t}b_i u_t=0.
  \]
  Dividing by \(b_i\ne 0\), we obtain
  \[
    \sum_{t=t_0}^{T}\lambda_i^{T-t}u_t=0.
  \]
  Dividing by \(\lambda_i^{T-t_0}\ne 0\), we obtain
  \[
    \sum_{t=t_0}^{T}u_t\lambda_i^{-(t-t_0)}=0,
  \]
  which is exactly
  \[
    q(1/\lambda_i)=0.
  \]
  
  Now set
  \[
    z_i:=\frac{1}{\lambda_i}.
  \]
  Then \(z_1,\dots,z_\ustb\) are distinct roots of \(q\), and
  \[
    |z_i|\le \frac{1}{1+\eta}.
  \]
  Let
  \[
    U:=\sup_{t\ge 1}|u_t|,
    \qquad
    m:=\inf\{|u_t|:u_t\ne 0\}.
  \]
  Thus
  \[
    \IDR(u)=\frac{U}{m}.
  \]
  
  Choose
  \[
    R:=\frac{1+\frac{1}{1+\eta}}{2}
    =
    \frac{2+\eta}{2(1+\eta)}.
  \]
  Then
  \[
    \frac{1}{1+\eta}<R<1.
  \]
  Hence all roots \(z_1,\dots,z_\ustb\) lie inside the disk \(\{|z|<R\}\).
  
  By Jensen's formula applied to \(q\),
  \[
    \prod_{i=1}^{\ustb} \frac{R}{|z_i|}
    \le
    \frac{\max_{|z|=R}|q(z)|}{|q(0)|}.
  \]
  We now bound the numerator and denominator. First,
  \[
    |q(0)|=|u_{t_0}|\ge m.
  \]
  Second, for every \(|z|=R\),
  \[
    |q(z)|
    \le
    \sum_{t=t_0}^{T}|u_t|R^{t-t_0}
    \le
    U\sum_{j=0}^{\infty}R^j
    =
    \frac{U}{1-R}.
  \]
  Therefore
  \[
    \prod_{i=1}^{\ustb} \frac{R}{|z_i|}
    \le
    \frac{U}{m(1-R)}
    =
    \frac{\IDR(u)}{1-R}.
  \]
  Rearranging gives
  \[
    \IDR(u)
    \ge
    (1-R)\prod_{i=1}^{\ustb}\frac{R}{|z_i|}.
  \]
  Since \(|z_i|=1/|\lambda_i|\), this becomes
  \[
    \IDR(u)
    \ge
    (1-R)R^{\ustb}\prod_{i=1}^{\ustb}|\lambda_i|.
  \]
  Using \(|\lambda_i|\ge 1+\eta\), we get
  \[
    \IDR(u)
    \ge
    (1-R)\bigl(R(1+\eta)\bigr)^{\ustb}.
  \]
  By our choice of \(R\),
  \[
    1-R
    =
    \frac{\eta}{2(1+\eta)}
  \]
  and
  \[
    R(1+\eta)
    =
    \frac{2+\eta}{2}
    =
    1+\frac{\eta}{2}.
  \]
  Therefore
  \[
    \IDR(u)
    \ge
    \frac{\eta}{2(1+\eta)}
    \left(1+\frac{\eta}{2}\right)^{\ustb}.
  \]
  This proves the claim.
  \end{proof}

\section{Experimental details}
\label{app:experimental-details}

This appendix specifies the realization $(A,B,C)$ used in
Section~\ref{sec:experiments} and how the predictors are trained on it.

\subsection{System construction}
\label{app:experimental-system}

The system is the direct sum of four scalar-input, scalar-output subsystems, so
$A$ is block diagonal while $B$ and $C$ are the concatenations of the
per-subsystem input vectors and readout rows,
\[
  \begin{aligned}
    A &= \operatorname{diag}\!\left(
      A_{\mathrm{exp}}, A_{\mathrm{slow}}, A_{\mathrm{fast}}, A_{\mathrm{st}}
    \right), \\
    B &= \left(B_{\mathrm{exp}}^\top, B_{\mathrm{slow}}^\top, B_{\mathrm{fast}}^\top, B_{\mathrm{st}}^\top\right)^\top, \\
    C &= \left(C_{\mathrm{exp}}, C_{\mathrm{slow}}, C_{\mathrm{fast}}, C_{\mathrm{st}}\right),
  \end{aligned}
\]
where $(A_{\mathrm{exp}},B_{\mathrm{exp}},C_{\mathrm{exp}})$ is the exploding
scalar mode, $(A_{\mathrm{slow}},B_{\mathrm{slow}},C_{\mathrm{slow}})$ the slow
complex pair, $(A_{\mathrm{fast}},B_{\mathrm{fast}},C_{\mathrm{fast}})$ the
fast-decaying bank, and $(A_{\mathrm{st}},B_{\mathrm{st}},C_{\mathrm{st}})$ the
stable block, described in turn below. Their state matrices have sizes
$A_{\mathrm{exp}}\in\mathbb{R}^{1\times1}$,
$A_{\mathrm{slow}}\in\mathbb{R}^{2\times2}$,
$A_{\mathrm{fast}}\in\mathbb{R}^{200\times200}$ (a stack of $100$ blocks of size
$2\times2$), and $A_{\mathrm{st}}\in\mathbb{R}^{300\times300}$, for a total state
dimension $\hiddenDim=503$.

\paragraph{Exploding scalar.}
The unstable mode is $A_{\mathrm{exp}}=[1.3]$, with input $B_{\mathrm{exp}}=1$
and readout $C_{\mathrm{exp}}=1$. The input

\paragraph{Slow complex pair.}
The slowly decaying pair $0.98\,e^{\pm i1.57}$ is realized as the real
$2\times2$ block
\[
  A_{\mathrm{slow}}
  =
  0.98
  \begin{pmatrix}
    \cos(1.57) & -\sin(1.57) \\
    \sin(1.57) & \cos(1.57)
  \end{pmatrix},
\]
with input $B_{\mathrm{slow}}=(1,0)^\top$ and readout $C_{\mathrm{slow}}=(c,0)$,
where $c>0$ is set so that the peak magnitude of its impulse response equals one.

\paragraph{Fast-decaying bank.}
The bank $A_{\mathrm{fast}}$ consists of $100$ real $2\times2$ rotation--scaling
blocks
\[
  A_{\mathrm{fast},j}
  =
  r_j
  \begin{pmatrix}
    \cos(\theta_j) & -\sin(\theta_j) \\
    \sin(\theta_j) & \cos(\theta_j)
  \end{pmatrix},
\]
with radius $r_j$ uniform in $[0,0.25]$ and angle $\theta_j$ uniform in
$[0,2\pi]$. Each block has input $B_{\mathrm{fast},j}=(1,0)^\top$ and a readout
$C_{\mathrm{fast},j}$ that is a Gaussian row with i.i.d.\ $\mathcal{N}(0,1)$
entries scaled by $0.1$.

\paragraph{Stable block.}
The stable block is $A_{\mathrm{st}}=\operatorname{diag}(\lambda_1,\ldots,
\lambda_{300})$ with input $B_{\mathrm{st}}=\mathbf{1}$, the all-ones vector. We
place the eigenvalues on a logarithmic grid clustered near the stability
boundary: the eigenvalues are $\pm(1-\delta_i)$, where the gaps
$\delta_1,\ldots,\delta_{150}$ are geometrically spaced between $10^{-5}$ and
$1$, giving $150$ positive and $150$ negative nodes.\footnote{Spacing the distance-to-the-circle $1-|\lambda|$
geometrically places many real stable nodes close to $\pm1$, exactly where
long-memory stable responses are hardest for a short finite-memory predictor and
must instead be captured by spectral filters.}
We choose the stable readout $C_{\mathrm{st}}$ so that, after the autoregressive
component cancels the hard-to-learn modes (the $\ustb=3$ eigenvalues counted by
the instability complexity), the residual of the remaining stable response aligns
with the spectral filters, and is thus captured by the unified predictor but not
by autoregression alone. Let $\alpha_1,\alpha_2,\alpha_3$ be the autoregressive coefficients whose roots
are the hard modes $1.3$ and $0.98\,e^{\pm i1.57}$, and define the residual of
the stable impulse response $h_t=\sum_i C_{\mathrm{st},i}\lambda_i^t$ under this
order-three recurrence by
\[
  r_t = h_t-\alpha_1 h_{t-1}-\alpha_2 h_{t-2}-\alpha_3 h_{t-3},
  \qquad t\ge 3.
\]
For each seed we form a target $g=\kappa\sum_{j=1}^{6}s_j w_j\phi_j$ from the
first six spectral filters of \cref{def:spectral-filters}, with random signs
$s_j\in\{-1,+1\}$, weights $w_j$ equally spaced between $1$ and $50$, and
$\kappa$ chosen so that $\|g\|_\infty=25$. We then fit $C_{\mathrm{st}}$ by least
squares so that $r_t$ matches $g_t$ for $t=3,\ldots,T$.

\subsection{Training}
\label{app:experimental-training}

Each predictor is trained by the Vovk--Azoury--Warmuth forecaster of
Appendix~\ref{app:vaw} with regularization $\lambda=0.1$.

\newpage

\end{document}